\newcommand{\argmax}{\operatornamewithlimits{argmax}}
\newcommand{\argmin}{\operatornamewithlimits{argmin}}
\newcommand{\sign}{\operatorname{sign}}
\DeclareMathAlphabet{\mathcal}{OT1}{pzc}{m}{it}
\begin{document}

\title{Optimal Nudging: Solving Average-Reward Semi-Markov Decision Processes as a Minimal Sequence of Cumulative Tasks}

\author{\name Reinaldo Uribe Muriel \email r-uribe@uniandes.edu.co \\
        \name Fernando Lozano \email flozano@uniandes.edu.co \\
        \addr Department of Electric and Electronic Engineering\\
        Universidad de los Andes\\
        Bogot\'{a}, DC 111711, Colombia
        \AND
        \name Charles Anderson \email anderson@cs.colostate.edu\\
        \addr Department of Computer Science\\
              Colorado State University\\
              Fort Collins, CO 80523-1873, USA}

\editor{\noindent\hspace{-1.5cm}\colorbox{white}{\phantom{EditorE}}}

\maketitle

\begin{abstract}%   <- trailing '%' for backward compatibility of .sty file
This paper describes a novel method to solve average-reward semi-Markov decision processes, by reducing them to a minimal sequence of cumulative reward problems. The usual solution methods for this type of problems update the gain (optimal average reward) immediately after observing the result of taking an action. The alternative introduced, \emph{optimal nudging}, relies instead on setting the gain to some fixed value, which transitorily makes the problem a cumulative-reward task, solving it by any standard reinforcement learning method, and only then updating the gain in a way that minimizes uncertainty in a \texttt{minmax} sense. The rule for optimal gain update is derived by exploiting the geometric features of the $w-l$ space, a simple mapping of the space of policies. The total number of cumulative reward tasks that need to be solved is shown to be small. Some experiments are presented to explore the features of the algorithm and to compare its performance with other approaches.
\end{abstract}

\begin{keywords}
  Reinforcement Learning, Average Rewards, Semi-Markov Decision Processes.
\end{keywords}

\section{Introduction}
Consider a simple game, some solitaire variation, for example, or a board game against a fixed opponent. Assume that neither draws nor unending matches are possible in this game and that the only payouts are $+\$1$ on winning and $-\$1$ on losing. If each new game position (\emph{state}) depends stochastically only on the preceding one and the move (\emph{action}) made, but not on the history of positions and moves before that, then the game can be modelled as a Markov decision process. Further, since all matches terminate, the process is \emph{episodic}.

\emph{Learning} the game, or \emph{solving} the decision process, is equivalent to finding the best playing strategy (\emph{policy}), that is, determining what moves to make on each position in order to maximize the probability of winning / expected payout. This is the type of problem commonly solved by cumulative-reward reinforcement learning methods.

Now, assume that, given the nature of this game, as is often the case, the winning probability is optimized by some \emph{cautious} policy, whose gameplay favours avoiding risks and hence results in relatively long matches. For example, assume that for this game a policy is known to win with certainty in 100 moves. On the other hand, typically some policies trade performance (winning probability) for speed (shorter episode lengths). Assume another known policy, obviously sub-optimal in the sense of expected payout, has a winning probability of just 0.6, but it takes only 10 moves in average to terminate.

If one is going to play a single episode, doubtlessly the first strategy is the best available, since following it winning is guaranteed. However, over a sequence of games, the second policy may outperform the `optimal' one in a very important sense: if each move \emph{costs} the same (for instance, if all take the same amount of time to complete), whereas the policy that always wins receives in average \$0.01/move, the other strategy earns twice as much. Thus, over an hour, or a day, or a lifetime of playing, the ostensibly sub-optimal game strategy will double the earnings of the apparent optimizer. This is a consequence of the fact that the second policy has a higher \emph{average reward}, receiving a larger payout per action taken. Finding policies that are optimal in this sense is the problem solved by average-reward reinforcement learning.

In a more general case, if each move has associated a different cost, such as the time it would take to move a token on a board the number of steps dictated by a die, then the problem would be \emph{average-reward semi-Markov}, and the goal would change to finding a policy, possibly different from either of the two discussed above, that maximizes the expected amount of payout received per unit of action cost.

Average-reward and semi-Markov tasks arise naturally in the areas of repeated episodic tasks, as in the example just discussed, queuing theory, autonomous robotics, and quality of service in communications, among many others.

This paper presents a new algorithm to solve average-reward and semi-Markov decision processes. The traditional solutions to this kind of problems require a large number of \emph{samples}, where a sample is usually the observation of the effect of taking an action from a state: the cost of the action, the reward received and the resulting next state. For each sample, the algorithms basically update the gain (average reward) of the task and the gain-adjusted value of, that is, what a good idea is it, taking that action from that state.

Some of the methods in the literature that follow this solution template are R-learning \citep{schwartz1993reinforcement}, the Algorithms 3 and 4 by \citet{singh1994reinforcement}, SMART \citep{das1999solving}, and the ``New Algorithm'' by \cite{gosavi2004reinforcement}. Table \ref{tab:avgrw} in Section 3.2 introduces a more comprehensive taxonomy of solution methods for average-reward and semi-Markov problems.

The method introduced in this paper, \emph{optimal nudging}, operates differently. Instead of rushing to update the gain after each sample, it is temporarily fixed to some value, resulting in a cumulative-reward task that is solved (by any method), and then, based on the solution found, the gain is updated in a way that minimizes the uncertainty range known to contain its optimum.

The main contribution of this paper is the introduction of a novel algorithm to solve semi-Markov (and simpler average-reward) decision processes by reducing them to a minimal sequence of cumulative-reward tasks, that can be solved by any of the fast, robust existing methods for that kind of problems. Hence, we refer to the method used to solve this tasks as a `black box'.

The central step of the optimal nudging algorithm is a rule for updating the gain between calls to the `black-box' solver, in such a way that after solving the resulting cumulative-reward task, the worst case for the associated uncertainty around the value of the optimal gain will be the smallest possible.

The update rule exploits what we have called a ``Bertsekas split'' of each task as well as the geometry of the $w-l$ space, a mapping of the policy space into the interior of a small triangle, in which the convergence of the solutions of the cumulative-reward tasks to the optimal solution of the average-reward problem can be easily intuited and visualized.

In addition to this, the derivation of the optimal nudging update rule yields an early termination condition, related to sign changes in the value of a reference state between successive iterations for which the same policy is optimal. This condition is unique to optimal nudging, and no equivalent is possible for the preceding algorithms in the literature.

The complexity of optimal nudging, understood as the number of calls to the ``black-box'' routine, is shown to be at worst logarithmic on the (inverse) desired final uncertainty and on an upper bound on the optimal gain. The number of samples required in each call is in principle inherited from the ``black box'', but also depends strongly on whether, for example, transfer learning is used and state values are not reset between iterations.

Among other advantages of the proposed algorithm over other methods discussed, two key ones are requiring adjustment of less parameters, and having to perform less updates per sample.

Finally, the experimental results presented show that the performance of optimal nudging, even without fine tuning, is similar or better to that of the best usual algorithms. The experiments also illuminate some particular features of the algorithm, particularly the great advantage of having the early termination condition.

The rest of the paper is structured as follows. Section 2 formalizes the problem, defining the different types of Markov decision processes (cumulative- and average-reward and semi-Markov) under an unified notation, introducing the important unichain condition and describing why it is important to assume that it holds.

Section 3 presents a summary of solution methods to the three types of processes, emphasizing the distinctions between dynamic programming and model-based and model-free reinforcement learning algorithms. This section also introduces a new taxonomy of the average-reward algorithms from the literature that allows us to propose a generic algorithm that encompasses all of them.
Special attention is given in this Section to the family of stochastic shortest path methods, from which the concept of the Bertsekas split is extracted.
Finally, a motivating example task is introduced to compare the performance of some of the traditional algorithms and optimal nudging.

In Section 4, the core derivation of the optimal nudging algorithm is presented, starting from the idea of \emph{nudging} and the definition of the $w-l$ space and enclosing triangles. The early termination condition by zero crossing is presented as an special case of reduction of enclosing triangles, and the exploration of optimal reduction leads to the main Theorem and final proposition of the algorithm.

Section 5 describes the complexity of the algorithm by showing that it outperforms a simpler version of nudging for which the computation of complexity is straightforward.

Finally, Section 6 presents results for a number of experimental set-ups and in Section 7 some conclusions and lines for future work are discussed.

\section{Problem Definition}
In this section, Markov decision processes are described and three different reward maximization problems are introduced: expected cumulative reinforcement, average-reward, and average reward semi-Markov models.
Average-reward problems are a subset of semi-Markov average reward problems.
This paper introduces a method to solve both kinds of average-reward problems as a minimal sequence of cumulative-reward, episodic processes. Two relevant common assumptions in average reward models, that the unichain condition holds \citep{ross1970average} and a recurrent state exists \citep{bertsekas1998new,abounadi2002learning}, are described and discussed at the end of the section.

In all cases, an agent in an environment observes its current state and can take actions that, following some static distribution, lead it to a new state and result in a real-valued reinforcement/reward. It is assumed that the Markov property holds, so the next state depends only on the current state and action taken, but not on the history of previous states and actions.

\subsection{Markov Decision Processes}
An infinite-horizon Markov decision process \citep[MDP,][]{sutton1998reinforcement,puterman1994markov} is defined minimally as a four-tuple $\langle \mathcal{S,A,P,R} \rangle$. $\mathcal{S}$ is the set of states in the environment. $\mathcal{A}$ is the set of actions, with $\mathcal{A}_s$ equal to the subset of actions available to take from state $s$ and $\mathcal{A}=\bigcup_{s\in \mathcal{S}}\mathcal{A}_s$. We assume that both $\mathcal{S}$ and $\mathcal{A}$ are finite.
The stationary function $\mathcal{P:S\times S\times A}\rightarrow[0,1]$ defines the transition probabilities of the system. After taking action $a$ from state $s$, the resulting state is $s'$ with probability $\mathcal{P}^a_{ss'}=P(s'|s,a)$. Likewise, $\mathcal{R}^a_{ss'}$ denotes the real-valued reward observed after taking action $a$ and transitioning from state $s$ to $s'$. For notational simplicity, we define $r(s,a)=\mathbb{E}\left[\mathcal{R}^a_{ss'}|s,a\right]$.
At decision epoch $t$, the agent is in state $s_t$, takes action $a_t$, transitions to state $s_{t+1}$ and receives reinforcement $r_{t+1}$, which has expectation $r(s_t,a_t)$.

If the task is \emph{episodic}, there must be a terminating state, defined as transitioning to itself with probability 1 and reward 0. Without loss of generality, multiple terminating states can be treated as a single one.

An element $\pi:\mathcal{S\rightarrow A}$ of the \emph{policy} space $\Pi$ is a rule or strategy that dictates for each state $s$ which action to take, $\pi(s)$. We are only concerned with deterministic policies, in which each state has associated a single action, to take with probability one. This is not too restrictive, since \citet{puterman1994markov} has shown that, if an optimal policy exists, an optimal deterministic policy exists as well. Moreover, policies are assumed herein to be stationary. The \emph{value} of a policy from a given state, $v^\pi(s)$, is the expected cumulative reward observed starting from $s$ and following $\pi$,
\begin{align}
v^\pi(s) &= \mathbb{E}\left[ \sum_{t=0}^\infty \gamma^t r(s_t,\pi(s_t))\; |\; s_0=s,\,\pi\right] \;\;, \label{eq:value}
\end{align}
where $0<\gamma\leq 1$ is a discount factor with $\gamma=1$ corresponding to no discount.

The goal is to find a policy that maximizes the expected reward. Thus, an \emph{optimal policy} $\pi^*$ has maximum value for each state; that is,
\begin{align*}
\pi^*(s) &\in \argmax_{\pi\in\Pi}v^\pi(s)\quad\forall s\in\mathcal{S}\;\;,\\
\intertext{so}
v^*(s)&= v^{\pi^*}(s)  \geq v^\pi(s)\quad\forall s\in\mathcal{S},\,\pi\in\Pi\;\;.
\end{align*}

\begin{remark}\label{rem:discount}
The discount factor ensures convergence of the infinite sum in the policy values Equation (\ref{eq:value}), so it is used to make value bounded if rewards are bounded, even in problems and for policies for which episodes have infinite duration. Ostensibly, introducing it makes rewards received sooner more desirable than those received later, which would make it useful when the goal is to optimize a measure of immediate (or average) reward. However, for the purposes of this paper, the relevant policies for the discussed resulting MDPs will be assumed to terminate eventually with probability one from all states, so the infinite sum will converge even without discount. Furthermore, the perceived advantages of discount are less sturdy than initially apparent \citep{mahadevan1994discount}, and discounting is not guaranteed to lead to gain optimality \citep{uribe2011discount}. Thus, no discount will be used in this paper ($\gamma=1$).
\end{remark}

\subsection{Average Reward MDPs}
The aim of the average reward model in infinite-horizon MDPs is to maximize the reward received \emph{per step} \citep{puterman1994markov,mahadevan1996average}. Without discount all non-zero-valued policies would have signed infinite value, so the goal must change to obtaining the largest positive or the smallest negative rewards as frequently as possible. In this case, the \emph{gain} of a policy from a state is defined as the average reward received per action taken following that policy from the state,
\begin{align*}
	\rho_{AR}^\pi(s)&=\lim_{n\rightarrow\infty}\frac{1}{n}\mathbb{E}\left[\sum_{t=0}^{n-1}r(s_t,\pi(s_t))\;|\;s_0=s,\,\pi\right]\;\;.
\end{align*}

A gain-optimal policy, $\pi_{AR}^*$, has maximum average reward, $\rho_{AR}^*=\rho_{AR}^{\pi^*}$, for all states,
\begin{align*}
	\rho_{AR}^*(s) \geq \rho_{AR}^\pi(s)\quad\forall s\in\mathcal{S},\,\pi\in\Pi\;\;.
\end{align*}

A finer typology of optimal policies in average-reward problems discriminates \emph{bias-optimal} policies which, besides being gain-optimal, also maximize the transient reward received \emph{before} the observed average approaches $\rho_{AR}^*$. For a discussion of the differences, see the book by \citet{puterman1994markov}. This paper will focus on the problem of finding gain-optimal policies.

\subsection{Discrete Time Semi-MDPs}
In the average-reward model all state transitions weigh equally. Equivalently, all actions from all states are considered as having the same---unity---duration or cost. In semi-Markov decision processes \citep[SMDPs,][]{ross1970average} the goal remains maximizing the average reward received per action, but all actions are not required to have the same weight.

\subsubsection{Transition Times}
The usual description of SMDPs assumes that, after taking an action, the time to transition to a new state is not constant \citep{feinberg1994constrained,das1999solving,baykal2007semi,ghavamzadeh2007hierarchical}. Formally, at decision epoch $t$ the agent is in state $s_t$ and only after an average of $N_t$ seconds of having taken action $a_t$ it evolves to state $s_{t+1}$ and observes reward $r_{t+1}$.

The transition time function, then, is $\mathcal{N: S\times S\times A} \rightarrow\mathds{R_+}$ (where $\mathds{R_+}$ is the set of positive real numbers).
Also, since reward can possibly lump between decision epochs, its expectation, $r(s_t,a_t)$, is marginalized over expected transition times as well as new states. Consequently, the gain of a policy from a state becomes
\begin{align*}
	\rho_{SM}^\pi(s)=\lim_{n\rightarrow\infty}\frac{\displaystyle\mathbb{E}\left[ \sum_{t=0}^{n-1} r(s_t,\pi(s_t)) \,|\,s_0=s,\, \pi \right]}
	{\displaystyle\mathbb{E}\left[ \sum_{t=0}^{n-1} N_{t} \,|\,s_0=s,\, \pi \right]}\;\;.
\end{align*}

\subsubsection{Action Costs}
We propose an alternative interpretation of the SMDP framework, in which taking all actions can yield constant-length transitions, while consuming varying amounts of some resources (for example time, but also energy, money or any combination thereof). This results in the agent observing a real-valued action cost $k_{t+1}$, which is not necessarily related to the reward $r_{t+1}$, received after taking action $a_t$ from state $s_t$. As above, the assumption is that cost depends on the initial and final states and the action taken and has an expectation of the form $k(s,a)$. In general, all costs are supposed to be positive, but for the purposes of this paper this is relaxed to requiring that all policies have positive expected cost from all states. Likewise, without loss of generality it will be assumed that all action costs either are zero or have expected magnitude greater than or equal to one,
\begin{align*}
	|k(s,a)|&\geq 1\quad\forall k(s,a)\neq 0\;\;.
\end{align*}

In this model, a policy $\pi$ has expected cost
\begin{align*}
	c^\pi(s) &= \lim_{n\rightarrow\infty}\mathbb{E}\left[ \sum_{t=0}^{n-1} k(s_t,\pi(s_t))\,|\,s_0=s,\,\pi \right]\;\;,\\
	\intertext{with}
	c^\pi(s) &\geq 1\quad\forall s\in\mathcal{S},\,\pi\in\Pi\;\;.
\end{align*}

Observe that both definitions are analytically equivalent. That is, $N_t$ and $k(s_t,\pi(s_t))$ have the same role in the gain. Although their definition and interpretations varies---expected time to transition versus expected action cost---both give origin to identical problems with gain $\rho_{SM}^\pi(s)=\frac{v^\pi(s)}{c^\pi(s)}$.

Naturally, if all action costs or transition times are equal, the semi-Markov model reduces to average rewards, up to scale, and both problems are identical if the costs/times are unity-valued. For notational simplicity, from now on we will refer to the gain in both problems simply as $\rho$.

\subsubsection{Optimal policies}
A policy $\pi^*$ with gain
\begin{align*}
	\rho^{\pi^*}(s)=\rho^*(s)&=\lim_{n\rightarrow\infty}\frac{\displaystyle\mathbb{E}\left[ \sum_{t=0}^{n-1} r(s_t,\pi(s_t))\,|\,s_0=s,\,\pi^* \right]}
	{\displaystyle\mathbb{E}\left[ \sum_{t=0}^{n-1} k(s_t,\pi(s_t))\,|\,s_0=s,\,\pi^* \right]} =\frac{v^{\pi^*}(s)}{c^{\pi^*}(s)}\\
	\intertext{is gain-optimal if}
	\rho^*(s) &\geq \rho^\pi(s)\quad\forall s\in\mathcal{S},\,\pi\in\Pi\;\;,
\end{align*}
similarly to the way it was defined for the average-reward problem.

\begin{remark}
Observe that the gain-optimal policy does not necessarily maximize $v^\pi$, nor does it minimize $c^\pi$. It only optimizes their ratio.
\end{remark}

The following two sections discuss two technical assumptions that are commonly used in the average-reward and semi-Markov decision process literature to simplify analysis, guaranteeing that optimal stationary policies exist.

\subsection{The Unichain Assumption}
The transition probabilities of a fixed deterministic policy $\pi\in\Pi$ define a stochastic matrix, that is, the transition matrix of a homogeneous Markov chain on $\mathcal{S}$. In that embedded chain, a state is called \emph{transient} if, after a visit, there is a non-zero probability of never returning to it. A state is \emph{recurrent} if it is not transient. A recurrent state will be visited in finite time with probability one. A \emph{recurrent class} is a set of recurrent states such that no outside states can be reached by states inside the set. \citep{kemeny1960finite}

An MDP is called \emph{multichain} if at least one policy has more than one recurrent class, and $unichain$ if every policy has only one recurrent class. In an unichain problem, for all $\pi\in\Pi$, the state space can be partitioned as
\begin{align}
	\mathcal{S}=R^\pi\cup T^\pi\;\;,\label{eq:tranrec}
\end{align}
where $R^\pi$ is the single recurrent class and $T^\pi$ is a (possibly empty) transient set. Observe that these partitions can be unique to each policy; the assumption is to have a single recurrent class per policy, not a single one for the whole MDP.

If the MDP is multichain, a single optimality expression may not suffice to describe the gain of the optimal policy, stationary optimal policies may not exist, and theory and algorithms are more complex. On the other hand, if it is unichain, clearly for any given $\pi$ all states will have the same gain, $\rho^\pi(s)=\rho^\pi$, which simplifies the analysis and is a sufficient condition for the existence of stationary, gain-optimal policies. \citep{puterman1994markov}

Consequently, most of the literature on average reward MDPs and SMDPs relies on the assumption that the underlying model is unichain \citep[see for example][and references thereon]{mahadevan1996average,ghavamzadeh2007hierarchical}. Nevertheless, the problem of deciding whether a given MDP is unichain is not trivial. In fact, \citet{kallenberg2002classification} posed the problem of whether a polynomial algorithm exists to determine if an MDP is unichain, which was answered negatively by \citet{tsitsiklis2007np}, who proved that it is \textit{NP-}hard.

\subsection{Recurrent States}
The term $recurrent$ is also used, confusingly, to describe a state of the decision process that belongs to a recurrent class of every policy. The expression ``recurrent state'' will only be used in this sense from now on in this paper. Multi- and unichain processes may or may not have recurrent states. However, \citet{feinberg2008polynomial} proved that a recurrent state can be found or shown not to exist in polynomial time (on $|\mathcal{S}|$ and $|\mathcal{A}|$), and give methods for doing so. They also proved that, if a recurrent state exists, the unichain condition can be decided in polynomial time to hold, and proposed an algorithm for doing so.

Instead of actually using those methods, which would require a full knowledge of the transition probabilities that we do not assume, we emphasize the central role of the recurrent states, when they exist or can be induced, in simplifying analysis.

\begin{remark}
Provisionally, for the derivation below we will require all problems to be unichain and to have a recurrent state. However these two requirements will be further qualified in the experimental results.
\end{remark}

\section{Overview of Solution Methods and Related Work}
This section summarizes the most relevant solution methods for cumulative-reward MDPs and average-reward SMDPs, with special emphasis on stochastic shortest path algorithms. Our proposed solution to average-reward SMDPs will use what we call a Bertsekas split from these algorithms, to convert the problem into a minimal sequence of MDPs, each of which can be solved by any of the existing cumulative-reward methods. At the end of the section, a simple experimental task from \citet{sutton1998reinforcement} is presented to examine the performance of the discussed methods and to motivate our subsequent derivation.

\subsection{Cumulative Rewards}\label{ssec:cumrews}
Cumulative-reward MDPs have been widely studied. The survey of \citet{kaelbling1996reinforcement} and the books by \citet{bertsekas1996neuro}, \citet{sutton1998reinforcement}, and \citet{szepesvari2010algorithms} include comprehensive reviews of approaches and algorithms to solve MDPs (also called reinforcement learning problems). We will present a brief summary and propose a taxonomy of methods that suit our approach of accessing a ``black box'' reinforcement learning solver.

In general, instead of trying to find policies that maximize state value from Equation (\ref{eq:value}) directly, solution methods seek policies that optimize the \emph{state-action} pair (or simply `\emph{action}') value function,
\begin{align}
Q^\pi(s,\,a) &= \mathbb{E}\left[ \sum_{t=0}^\infty r(s_t,a_t)\; |\; s_0=s,\,a_0=a,\,\pi\right] \;\;, \label{eq:qvalue}
\end{align}
which is defined as the expected cumulative reinforcement after taking action $a$ in state $s$ and following policy $\pi$ thereafter.

The action value of an optimal policy $\pi^*$ corresponds to the solution of the following, equivalent, versions of the Bellman optimality equation \citep{sutton1998reinforcement},
\begin{align}
Q^*(s,\,a) &= \sum_{s'}{\mathcal{P}^a_{ss'}\left[\mathcal{R}^a_{ss'} + \max_{a'}Q^*(s',\,a')\right]}\;\;,\label{eq:qprob}\\
 &= \mathbb{E}\left[ r_{t+1} + \max_{a'}Q^*(s',\,a') | s_t=s,\,a_t=a\right]\;\;.\label{eq:qexpt}
\end{align}

\emph{Dynamic programming} methods assume complete knowledge of the transitions $\mathcal{P}$ and rewards $\mathcal{R}$ and seek to solve Equation (\ref{eq:qprob}) directly. An iteration of a type algorithm finds (\emph{policy evaluation}) or approximates (\emph{value iteration}) the value of the current policy and subsequently sets as current a policy that is greedy with respect to the values found (\emph{generalized policy iteration}). \citet{puterman1994markov} provides a very comprehensive summary of dynamic programming methods, including the use of linear programming to solve this kind of problems.

If the transition probabilities are unknown, in order to maximize action values it is necessary to sample actions, state transitions, and rewards in the environment. \emph{Model-based} methods use these observations to approximate $\mathcal{P}$, and then that approximation to find $Q^*$ and $\pi^*$ using dynamic programming. Methods in this family usually rely on complexity bounds guaranteeing performance after a number of samples \citep[or \emph{sample complexity},][]{kakade2003sample} bounded by a polynomial (that is, efficient) on the sizes of the state and action sets, as well as other parameters. \footnote{These other parameters often include the expression $\displaystyle\frac{1}{1-\gamma}$, where $\gamma$ is the discount factor, which is obviously problematic when, as we assume, $\gamma=1$. However, polynomial bounds also exist for undiscounted cases.}

The earliest and most studied model-based methods are PAC-MDP algorithms (efficiently probably approximately correct on Markov decision processes), which minimize with high probability the number of future steps on which the agent will not receive near-optimal reinforcements. $E^3$ \citep{kearns1998near,kearns2002near}, \emph{sparse sampling} \citep{kearns2002sparse}, $R_{MAX}$ \citep{brafman2003rmax}, MBIE \citep{strehl2005theoretical}, and $V_{MAX}$ \citep{rao2012v} are notable examples of this family of algorithms. Kakade's (\citeyear{kakade2003sample}) and Strehl's (\citeyear{strehl2007probably}) dissertations, and the paper by \citet{strehl2009reinforcement} provide extensive theoretical discussions on a broad range of PAC-MDP algorithms.

Another learning framework for which model-based methods exists is KWIK \citep[\emph{knows what it knows},][]{li2008knows,walsh2010integrating}. In it, at any decision epoch, the agent must return an approximation of the transition probability corresponding to the observed state, action and next state. This approximation must be arbitrarily precise with high probability. Alternatively, the agent can acknowledge its ignorance, produce a ``$\perp$'' output, and, from the observed, unknown transition, learn. The goal in this framework is to find a bound on the number of $\perp$ outputs, and for this bound to be polynomial on some appropriate parameters, including $|\mathcal{S}|$ and $|\mathcal{A}|$.

\emph{Model-free} methods, on the other hand, use transition and reward observations to approximate action values replacing expectations by samples in Equation (\ref{eq:qexpt}). The two main algorithms, from which most variations in the literature derive, are SARSA and Q-learning \citep{sutton1998reinforcement}. Both belong to the class of temporal difference methods. SARSA is an \emph{on-policy} algorithm that improves approximating the value of the current policy, using the update rule
\begin{align*}
Q_{t+1}(s_t,a_t) \leftarrow (1-\alpha_t)\, Q_t(s_t,a_t) + \alpha_t\,(r_{t+1} + Q_t(s_{t+1},a_{t+1}))\;\;,
\end{align*}
where $a_{t+1}$ is an action selected from $Q$ for state $s_{t+1}$. Q-learning is an \emph{off-policy} algorithm that, while following samples obtained acting from the current values of $Q$, approximates the value of the optimal policy, updating with the rule
\begin{align}
Q_{t+1}(s_t,a_t) \leftarrow (1-\alpha_t)\, Q_t(s_t,a_t) + \alpha_t\,(r_{t+1} + \max_a Q_t(s_{t+1},a))\;\;.
\end{align}
In both cases, $\alpha_t$ is a learning rate.

Q-learning has been widely studied and used for practical applications since its proposal by \citet{watkins1989learning}. In general, for an appropriately decaying learning rate such that
\begin{align}
\sum_{t=0}^\infty  \alpha_t &= \infty\label{eq:alphacond1}\\
\intertext{and}
\sum_{t=0}^\infty \alpha_t^2&<\infty\label{eq:alphacond2}\;\;,
\end{align}
and under the assumption that all states are visited and all actions taken infinitely often, it is proven to converge asymptotically to the optimal value with probability one \citep{watkins1992q}. Furthermore, in discounted settings, PAC convergence bounds exist for the case in which every state--action pair $(s,\,a)$ keeps an independent learning rate of the form $\frac{1}{ \{ 1 +|\text{visits to }(s,\,a)| \} }$ \citep{szepesvari1998asymptotic}, and for $Q$-updates in the case when a \emph{parallel sampler} $PS(\mathcal{M})$ \citep{kearns1999finite}, which on every call returns transition/reward observations for every state--action pair, is available \citep{even2004learning,azar2011speedy}.
%\footnote{The bounds obtained in this case may not be polynomial (that is, \emph{efficient}) on some parameters, depending on the task. See the original paper for details.}

An additional PAC-MDP, model-free version of Q-learning of interest is \emph{delayed Q-learning} \citep{strehl2006pac}. Although the main derivation of it is for discounted settings, as is usual for this kind of algorithms, building on the work of \citet{kakade2003sample} a variation is briefly discussed in which there is no discount but rather a \emph{hard horizon} assumption, in which only the next $H$ action-choices of the agent contribute to the value function. In this case, the bound is that, with probability $(1-\delta)$, the agent will follow an $\epsilon$-optimal policy, that is a policy derived from an approximation of $Q$ that does not differ from the optimal values more than $\epsilon$, on all but
\begin{align*}
O\left(\frac{|\mathcal{S}|\,|\mathcal{A}|H^5}{\epsilon^4}\,L(\cdot)\right)
\end{align*}
steps, where $L(\cdot)$ is a logarithmic function on the appropriate parameters ($|\mathcal{S}|\,\,|\mathcal{A}|,\,H,\,\frac{1}{\epsilon},\,\frac{1}{\delta}$). Usually, this latter term is dropped and the bound is instead expressed as
\begin{align*}
\tilde{O}\left(\frac{|\mathcal{S}|\,|\mathcal{A}|H^5}{\epsilon^4}\right)\;\;.
\end{align*}

Our method of solving SMDPs will assume access to a learning method for finite-horizon, undiscounted MDPs. The requirements of the solution it should provide are discussed in the analysis of the algorithm.

\subsection{Average Rewards}
As mentioned above, average-reward MDP problems are the subset of average-reward SMDPs for which all transitions take one time unit, or all actions have equal, unity cost. Thus, it would be sufficient to consider the larger set. However, average-reward problems have been the subject of more research, and the resulting algorithms are easily extended to the semi-Markov framework, by multiplying gain by cost in the relevant optimality equations, so both will be presented jointly here.

In this section we introduce a novel taxonomy of the differing parameters in the update rules of the main published solution methods for average-reward--including semi-Markov--tasks. This allows us to propose and discuss a generic algorithm that covers all existing solutions, and yields a compact summary of them, presented in Table \ref{tab:avgrw} below.

Policies are evaluated in this context using the \emph{average-adjusted sum of rewards} \citep{puterman1994markov,abounadi2002learning,ghavamzadeh2007hierarchical} value function:
\begin{align*}
	H^\pi(s) &= \lim_{n\rightarrow\infty} \mathbb{E} \left[ \sum_{t=0}^{n-1} (r(s_t,\pi(s_t)) - k(s_t,\pi(s_t))\,\rho^\pi)\,|\,s_0=s,\,\pi \right]\;\;,
\end{align*}
which measures ``how good'' the state $s$ is, under $\pi$, with respect to the average $\rho^\pi$. The corresponding Bellman equation, whose solutions include the gain-optimal policies, is
\begin{align}
	H^*(s) &= r(s,\pi^*(s)) - k(s,\pi^*(s))\,\rho^* + \mathbb{E}_{\pi^*}\left[ H^*(s') \right] \;\;,\label{eq:bellman}
\end{align}
where the expectation on the right hand side is over following the optimal policy for any $s'$.

\citet{puterman1994markov} and \citet{mahadevan1996average} present comprehensive discussions of dynamic programming methods to solve average-reward problems. The solution principle is similar to the one used in cumulative reward tasks: value evaluation followed by policy iteration. However, an approximation of the average rewards of the policy being evaluated must be either computed or approximated from successive iterates. The parametric variation of average-reward value iteration due to \citet{bertsekas1998new} is central to our method and will be discussed in depth below. For SMDPs, \citet{das1999solving} discuss specific synchronous and asynchronous versions of the \emph{relative value iteration} algorithm due to \citet{white1963dynamic}.

Among the model-based methods listed above for cumulative reward problems, $E^3$ and $R_{MAX}$ originally have definitions on average reward models, including in their PAC-MDP bounds polynomial terms on a parameter called the optimal \emph{$\epsilon$-mixing time}, defined as the smallest time after which the observed average reward of the optimal policy actually becomes $\epsilon$-close to $\rho^*$.

In a related framework, also with probability $(1-\delta)$ as in PAC-MDP, the UCRL2 algorithm of \citet{jaksch2010near} attempts to minimize the total \emph{regret} (difference with the accumulated rewards of a gain-optimal policy) over a $\mathcal{T}$-step horizon. The regret of this algorithm is bounded by $\tilde{O}(\Delta|\mathcal{S}|\sqrt{|\mathcal{A|T}})$, where the diameter parameter $\Delta$ of the MDP is defined as the time it takes to move from any state to any other state using an appropriate policy. Observe that for $\Delta$ to be finite, any state must be reachable from any other, so the problem must be \emph{communicating}, which is a more rigid assumption than the unichain condition \citep{puterman1994markov}. Similarly, the REGAL algorithm of \citet{bartlett2009regal} has a regret bound $\tilde{O}(H|\mathcal{S}|\sqrt{|\mathcal{A|T}})$, where $H$ is a bound on the span of the optimal bias vector. In this case, the underlying process is required to be \emph{weakly communicating}, that is, for the subsets $R^\pi$ and $T^\pi$ of recurrent and transient states in Equation (\ref{eq:tranrec}) to be the same for all $\pi\in\Pi$. This is also a more rigid assumption than the unichain condition.

Regarding PAC-MDP methods, no ``model free'' algorithms similar to delayed-Q are known at present for average reward problems. \citet{mahadevan1996average} discusses, without complexity analysis, a model-based approach due to \citet{jalali1989computationally}, and further refined by \citet{tadepalli1998model} into the \emph{H-learning} algorithm, in which relative value iteration is applied to transition probability matrices and gain is approximated from observed samples.

Model-free methods for average reward problems, with access to observations of state transitions and associated rewards, are based on the (gain-adjusted) Q-value update
\begin{align}
	Q_{t+1}(s_t,a_t) &\leftarrow \left(1-\alpha_t\right)\,Q_t(s_t,a_t) + \alpha_t\left( r_{t+1} - \rho_t\,k_{t+1} + \max_aQ_t(s_{t+1},a) \right)\;\;,\label{eq:adjq}
\end{align}
where $\alpha_t$ is a learning rate and $\rho_t$ is the current estimation of the average reward.

\algsetblock[Name]{Repeat}{Stop}{4}{0.5cm}
\begin{algorithm}[H]
\caption{Generic SMDP solver}
\label{alg:generic}
\begin{algorithmic}[0]
	\State Initialize ($\pi$, $\rho$, and $H$ or $Q$)
	\Repeat \textbf{ forever}
		\State Act
		\State Learn approximation to value of current $\pi$
		\State Update $\pi$ from learned values
		\State Update $\rho$
\end{algorithmic}
\end{algorithm}

A close analysis of the literature reveals that H-learning and related model-based algorithms, as well as methods based on the update in Equation (\ref{eq:adjq}) can be described using the generic Algorithm \ref{alg:generic}. The ``Act'' step corresponds to the observation of (usually) one $\langle s,\,a,\,s',\,r,\,k\rangle$ tuple following the current version of $\pi$. A degree of \emph{exploration} is commonly introduced at this stage; instead of taking a best-known action from $\argmax_{\hat{a}\in\mathcal{A}_s}Q(s,\,\hat{a})$, a suboptimal action is chosen. For instance, in the traditional \emph{$\varepsilon$-greedy} action selection method, with probability $\varepsilon$ a random action is chosen uniformly. \citet{sutton1998reinforcement} discuss a number of exploration strategies. The exploration/exploitation trade-off, that is, when and how to explore and learn and when to use the knowledge for reward maximization, is a very active research field in reinforcement learning. All of the PAC-MDP and related algorithms listed above are based on an ``optimism in the face of uncertainty'' scheme \citep{lai1985asymptotically}, initializing $H$ or $Q$ to an upper bound on value for all states, to address more or less explicitly the problem of optimal exploration. %See the original papers for details.

In H-learning, the learning stage of Algorithm \ref{alg:generic} includes updating the approximate transition probabilities for the state and action just observed and then estimating the state value using a version of Equation (\ref{eq:bellman}) with the updated probabilities. In model-free methods, summarized in Table \ref{tab:avgrw}, the learning stage is usually the 1-step update of Equation (\ref{eq:adjq}).

The $\rho$ update is also commonly done after one step, but there are a number of different update rules. Algorithms in the literature vary in two dimensions. The first is \emph{when} to update. Some compute an updated approximation of $\rho$ after every action while others do it only if a best-known action was taken, $a\in\argmax_{\hat{a}\in\mathcal{A}_s}Q(s,\,\hat{a})$. The second dimension is the way the updates are done. A natural approach is to compute $\rho_t$ as the ratio of the sample rewards and the sample costs,
\begin{align}
\rho_{\tau+1} &= \frac{\displaystyle\sum_{i=1}^{\tau+1} r_i}{\displaystyle\sum_{i=1}^{\tau+1} k_i}\;\;,\label{eq:ratioupdt}
\end{align}
where, $i=1\cdots\tau$ may indicate all decision epochs or only those on which greedy actions were taken, depending on when the algorithm updates. We refer to this as the \emph{ratio} update.

Alternatively, the \emph{corrected} update is of the form
\begin{align*}
\rho_{\tau+1} &= (1-\beta_\tau)\,\rho_{\tau} + \frac{\beta_\tau}{k_{\tau+1}}\,\left( r_{\tau+1} + \max_aQ_\tau(s_{\tau+1,\,a}) - \max_aQ_\tau(s_\tau,\,a) \right)\;\;,
\end{align*}
whereas in the \emph{term-wise corrected} update, separately
\begin{align*}
v_{\tau+1} &= (1-\beta_\tau)\,v_{\tau} + \beta_\tau\,r_{\tau+1}\;\;,\\
c_{\tau+1} &= (1-\beta_\tau)\,c_{\tau} + \beta_\tau\,k_{\tau+1}\;\;,\\
\intertext{and}
\rho_{\tau+1} &= \frac{v_{\tau+1}}{c_{\tau+1}}\;\;.
\end{align*}

In the last two cases, $\beta_\tau$ is a learning rate. In addition to when and how to perform the $\rho$ updates, algorithms in the literature also vary in the model used for the learning rates, $\alpha_t$ and $\beta_\tau$. The simplest models take both parameters to be \emph{constant}, equal to $\alpha$ and $\beta$ for all $t$ (or $\tau$). %In general, this kind of update can have a faster approximation followed by oscillation around optimal Q-values.
As is the case for Q-learning, convergence is proved for sequences of $\alpha_t$ (and now $\beta_\tau$) for which the conditions in Equations (\ref{eq:alphacond1}) and (\ref{eq:alphacond2}) hold. We call these \emph{decaying} learning rates. A simple decaying learning rate is of the form $\alpha_{t} = \frac{1}{1+t}$. It can be easily shown that this rate gives raise to the \emph{ratio} $\rho$ updates of Equation (\ref{eq:ratioupdt}). Some methods require keeping an \emph{individual} (decaying) learning rate for each state-action pair. A type of update for which Equations (\ref{eq:alphacond1}) and (\ref{eq:alphacond2})---and the associated convergence guarantees---hold, and which may have practical advantages is the ``search-then-converge'' procedure of \citet{darken1992learning}, called \emph{DCM} after its authors. A DCM $\alpha_t$ update would be, for example,
\begin{align*}
\alpha_t &= \frac{\alpha_0}{1+\frac{t^2}{\alpha_\tau+t}}\;\;,
\end{align*}
where $\alpha_0$ and $\alpha_\tau$ are constants.

\begin{table}[t]
\begin{center}
\begin{small}
\begin{tabular}{m{5.5cm}m{1.7cm}m{2cm}m{1.7cm}m{1.5cm}}
\toprule
Method & $\alpha_t$ & Only update\newline if greedy $a$? & $\rho$ & $\beta_\tau$\\
\midrule
R-learning\newline{\footnotesize \citet{schwartz1993reinforcement}} & Constant & Yes & Corrected & Constant\\
Algorithm 3\newline{\footnotesize \citet{singh1994reinforcement}} & Constant & No & Corrected & Constant\\
Algorithm 4\newline{\footnotesize \citet{singh1994reinforcement}} & Constant & Yes & Ratio & ---\\
SMART\newline{\footnotesize \citet{das1999solving}} & DCM & No & Ratio & ---\\
``New algorithm''\newline{\footnotesize \citet{gosavi2004reinforcement}} & Individual & No & Ratio & Decaying\\
Robbins-Monro Version\newline{\footnotesize \citet{gosavi2004reinforcement}} & Individual & No & Term-wise corrected & Decaying\\
\midrule
AAC\newline{\footnotesize \citet{jalali1989computationally}} & & & Ratio & ---\\
H-Learning\newline{\footnotesize \citet{tadepalli1998model}} & &Yes &Corrected &Decaying\\
\midrule
MAXQ\newline{\footnotesize \citet{ghavamzadeh2001continuous}} & Constant & Yes & Ratio & ---\\
HAR\newline{\footnotesize \citet{ghavamzadeh2007hierarchical}} & & & Ratio &---\\
\bottomrule
\end{tabular}
\end{small}
\caption{Summary of learning rates and $\rho$ updates of model-free, model-based and hierarchical average-reward algorithms.}
\label{tab:avgrw}
\end{center}
\end{table}

Table \ref{tab:avgrw} describes the $\rho$ updates and learning rates of the model-free average reward algorithms found in the literature, together, when applicable, with those for two model-based (AAC and H-learning) and two hierarchical algorithms (MAXQ and HAR).

\subsection{Stochastic Shortest Path H and Q-Learning}
We focus our interest on an additional model-free average-reward algorithm due to \citet{abounadi2002learning}, suggested by a dynamic programming method by \citet{bertsekas1998new}, which connects an average-reward problem with a parametrized family of (cumulative reward) stochastic shortest path problems.

The fundamental observation is that, if the problem is unichain, the average reward of a stationary policy must equal the ratio of the expected total reward and the expected total cost between two visits to a reference (recurrent) state. Thus, the idea is to ``separate'' those two visits to the start and the end of an episodic task. This is achieved splitting a recurrent state into an initial and a terminal state. Assuming the task is unichain and $s_I$ is a recurrent state, we refer to a \emph{Bertsekas split} as the resulting problem with
\begin{itemize}
	\item State space $\mathcal{S}\bigcup\{s_T\}$, where $s_T$ is an artificial terminal state that, as defined above, once reached transitions to itself with probability one, reward zero, and, for numerical stability, cost zero.
	\item Action space $\mathcal{A}$.
	\item Transition probabilities
			\begin{align*}
			\mathcal{P}_{ss'}^a &\leftarrow \left\{
			\begin{array}{ll}
			P(s'|s,a)&\forall s'\neq s_I,\,s_T, \\ 
			0 & s'=s_I, \\ 
			P(s_I|s,a) & s'=s_T.
			\end{array}
			\right.
			\end{align*}
\end{itemize}

In the restricted setting of average-reward MDPs, \citet{bertsekas1998new} proved the convergence of the dynamic programming algorithm with coupled iterations
\begin{align*}
H_{t+1}(s) &\leftarrow \max_{a\in\mathcal{A}_s}\left[ \sum_{s'}\mathcal{P}_{ss'}^a\left( \mathcal{R}_{ss'}^a + \sum_{s'}H_t(s') \right) - \rho_t \right] \quad\forall s\neq s_T\;\;,\\
\rho_{t+1} &\leftarrow \rho_t + \beta_t\,H_t(s_I)\;\;,
\end{align*}
where $H(s_T)$ is set to zero for all epochs (that is, $s_T$ is terminal) and $\beta_t$ is a decaying learning rate. The derivation of this algorithm includes a proof that, when $\rho_t$ equals the optimal gain $\rho^*$, the corrected value of the initial state is zero. This is to be expected, since $\rho_t$ is subtracted from all $\mathcal{R}$, and if it equals the expected average reward, the expectation for $s_I$ vanishes. Observe that, when this is the case, $\rho_t$ stops changing between iterations. We provide below an alternative derivation of this fact, from the perspective of fractional programming.

\citet{abounadi2002learning} extended the ideas behind this algorithm to model-free methods with the stochastic shortest path Q-learning algorithm (synchronous), SSPQ, with $Q$ and $\rho$ updates, after taking an action, 
\begin{align*}
Q_{t+1}(s_t,a_t) &\leftarrow (1-\alpha_t)\;Q_t(s_t,a_t) + \alpha_t\,\left( r_{t+1} -\rho_t + \max_aQ_t(s_{t+1},a) \right)\;\;,\\
\rho_{t+1} &\leftarrow \Gamma(\rho_t + \beta_t\,\max_aQ_t(s_I,a))\;\;,
\end{align*}
where $\Gamma$ is the projection to an interval $[-K,K]$ known to contain $\rho^*$.

\begin{remark}
Both SSP methods just described belong to generic family described by Algorithm \ref{alg:generic}. Moreover, the action value update of SSPQ is identical to the MDP version of average corrected Q-updates, Equation (\ref{eq:adjq}).
\end{remark}

The convergence proof of SSPQ makes the relationship between the two learning rates explicit, requiring that
\begin{align*}
\beta_t = o(\alpha_t)\;\;,
\end{align*}
making the gain update considerably slower than the value update. This is necessary so the Q-update can provide sufficient approximation of the value of the current policy for there to be any improvement. If, in the short term, the Q-update sees $\rho$ as (nearly) constant, then the update actually resembles that of a cumulative reward problem, with rewards $\mathcal{R} - \rho$.  The method presented below uses a Bertsekas split of the SMDP, and examines the extreme case in which the $\rho$ updates occur only when the value of the best policy for the current gain can be regarded as known.

\subsection{A Motivating Example}
\label{ssec:exmpl}
We will use a simple average-reward example from the discussion of Schwartz's R-learning by \citet[][see section 6.7]{sutton1998reinforcement} to study the behaviour of some of the algorithms just described and compare them with the method proposd in this paper. %, as well as to illustrate some points during the derivation below.

In the \emph{access control queuing task}, at the head of a single queue that manages access to $n=10$ servers, customers of priorities $\{8,4,2,1\}$ arrive with probabilities $\{0.4,0.2,0.2,0.2\}$, respectively. At each decision epoch, the customer at the head of the queue is either assigned to a free server (if any are available), with a pay-off equal to the customer's priority; or rejected, with zero pay-off. Between decision epochs, servers free independently with probability $p=0.06$. Naturally, the goal is to maximize the expected average reward. The states correspond to the combination of the priority of the customer at the head of the queue and the number of free servers, and the actions are simply ``accept'' and ``reject''. For simplicity, there is a single state corresponding to no free servers for any priority, with only the "reject" action available an reward zero.

To ensure that our assumptions hold for this task, we use the following straightforward observation:
\begin{proposition}
In the access control queuing task, all optimal policies must accept customers of priority 8 whenever servers are available.
\end{proposition}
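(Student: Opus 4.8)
The plan is to work directly with the average-reward optimality equation (\ref{eq:bellman}), specialized to unit costs ($k\equiv 1$) since the access control task is an ordinary average-reward MDP. Writing a state as a pair $(j,m)$ (head-of-queue priority $j$ and number $m$ of free servers), a gain-optimal policy must, at every state recurrent under it, select an action attaining the maximum in
\begin{align*}
H^*(j,m) &= \max_{a\in\{\text{accept},\,\text{reject}\}}\Big[\, r\big((j,m),a\big) - \rho^* + \mathbb{E}\big[H^*(s')\,|\,(j,m),a\big]\,\Big]\;\;.
\end{align*}
Hence it suffices to show that for $j=8$ and every $m\geq 1$ the action ``accept'' strictly exceeds ``reject'' inside the bracket; any policy choosing ``reject'' at such a recurrent state would then be non-greedy, contradicting gain-optimality. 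First I would reduce the comparison to the continuation terms, since the immediate reward contributes exactly $8$ for accepting and $0$ for rejecting while the $\rho^*$ term cancels.

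The key step is to evaluate the gap between the two continuation values by coupling the server-freeing dynamics of the accept- and reject-successors. Both successors share the same newly arriving customer and the same $10-m$ servers that were already busy; they differ only in the single server that ``accept'' has just occupied. Coupling that extra server's release (probability $p=0.06$) identically in both branches, with probability $p$ it frees immediately and the two successor states coincide, whereas with probability $1-p$ the accept-successor has exactly one fewer free server than the reject-successor in an otherwise identical state. Consequently the continuation gap equals $-(1-p)\,\mathbb{E}[\Delta^*]$, where $\Delta^* = H^*(j',f+1)-H^*(j',f)\ge 0$ is the marginal value of one free server at the coupled successor, so that
\begin{align*}
\big[\text{value of accepting}\big]-\big[\text{value of rejecting}\big] &= 8-(1-p)\,\mathbb{E}[\Delta^*]\;\;.
\end{align*}

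The crux, and the step I expect to be the main obstacle, is a bounded-difference lemma stating that one extra free server is worth at most the largest attainable reward, $0\le H^*(j,f+1)-H^*(j,f)\le 8$. I would prove the upper bound by a coupling/mimicking argument: run two copies of the system under $\pi^*$ from states differing by one free server, and let the poorer copy imitate the richer copy's accept/reject decisions. The two trajectories then produce identical rewards except for at most a single acceptance that the poorer copy cannot match (precisely when it momentarily runs out of servers), of value at most $8$; because $p>0$ the extra server is released in finite expected time, after which the copies recouple and behave identically, so their gains agree and the bias difference is exactly this finite, bounded transient. The delicate points to make rigorous are that the imitating copy is a legitimate (sub-optimal) policy, so it lower-bounds $H^*$ at the poorer state, and that recoupling occurs almost surely so the two biases are compared at the common gain $\rho^*$. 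Granting the lemma, $\mathbb{E}[\Delta^*]\le 8<8/(1-p)$, whence the advantage of accepting is at least $8-(1-p)\cdot 8 = 0.48>0$, a strict gap independent of $m$. A final, minor point is to argue that every state $(8,m)$ with $m\ge 1$ is actually recurrent under any gain-optimal policy---which follows from the unichain assumption together with the strictly positive arrival probability $0.4$ of priority-$8$ customers and release probability $p$---so that the strict inequality in the optimality equation genuinely forces acceptance and is not vacuously satisfied at a transient state.
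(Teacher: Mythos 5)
Your argument is correct, but there is an important mismatch of expectations here: the paper offers essentially no proof of this proposition. It is introduced as a ``straightforward observation,'' and the paragraph that follows it in the paper argues only that \emph{given} the proposition (i.e., once ``accept'' is forced at priority-8 states), the all-busy state becomes recurrent and the modified task is unichain; the proposition itself is never argued. So what you have produced is the missing rigorous argument, and it is the natural one: the paper's implicit intuition is a dominance/exchange heuristic (rejecting an 8 gains nothing now, while the option value of the one server an acceptance ties up is at most one future customer worth at most 8, discounted by the chance $1-p$ that the server does not free immediately), and your coupling makes exactly this precise, yielding the explicit strict margin $8-(1-p)\,\mathbb{E}[\Delta^*]\geq 8-(1-p)\cdot 8=8p=0.48>0$ via the bounded marginal-value lemma $0\leq\Delta^*\leq 8$, itself proved by the mimicry argument with at most one unmatched acceptance before recoupling (note that the mismatch event itself forces immediate coincidence at ten busy servers, so a single mismatch really is the worst case). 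Two points in your sketch deserve tightening. First, the step ``the imitating copy lower-bounds $H^*$ at the poorer state'' is cleanest via the standard inequality $H^*(s_0)\geq\mathbb{E}\left[\sum_{t=0}^{T-1}(r_t-\rho^*)\right]+\mathbb{E}\left[H^*(s_T)\right]$, valid for \emph{any} (even history-dependent) strategy and a.s.-finite stopping time $T$, applied at the coupling time so the terminal $H^*$ terms cancel; this sidesteps any claim that $H^*$ dominates biases of nonstationary strategies directly. Second, your recurrence caveat is genuinely needed and is a subtlety the paper glosses over entirely---gain-optimality constrains actions only on recurrent classes, via the identity $\rho^\pi=\rho^*+\sum_s\mu^\pi(s)\,A^*(s,\pi(s))$ with $A^*\leq 0$---but your appeal to ``the unichain assumption'' for the \emph{original} task is slightly circular, since the paper establishes unichain only for the modified task. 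You can avoid it: recurrent classes are closed; your advantage bound forces accepting 8s at every recurrent state of each recurrent class of a gain-optimal policy; hence a run of priority-8 arrivals with no releases (positive probability) puts the all-busy state in the class, and from the all-busy state every $(8,m)$ is reached in one step with probability $0.4\cdot P\left(\mathrm{Bin}(10,p)=m\right)>0$, so the strict inequality is never vacuous.
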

%\begin{proof}
%Consider two policies $\pi_1$ and $\pi_2$ that only differ in the action for one state with a nonzero number of free servers and a priority-eight customer, $s_8$. If the state is transient, the reward received from it doesn't add to the average, so the gains $\rho_1(s_8)$ and $\rho_1(s_8)$ are equal. On the other hand, if the state belongs to a recurrent class, all other rewards being equal, 
%\end{proof}

Thus, if we make ``accept'' the only action available for states with priority 8 and any number of free servers, the resulting task will have the same optimal policies as the original problem, and the state with all servers occupied will become recurrent. Indeed, for any state with $m$ free servers, and any policy, there is a nonzero probability that all of the next $m$ customers will have priority 8 and no servers will free in the current and the next $m$ decision epochs. Since the only available action for customers with priority 8 is to accept them, all servers would fill, then, so for any state and policy there is a nonzero probability of reaching the state with no free servers, making it recurrent. Moreover, since this recurrent state can be reached from any state, there must be a single recurrent class per policy, containing the all-occupied state and all other states that can be reached from it under the policy, so the unichain condition also holds for the resulting task.

\begin{figure}[t]
\begin{center}
\includegraphics{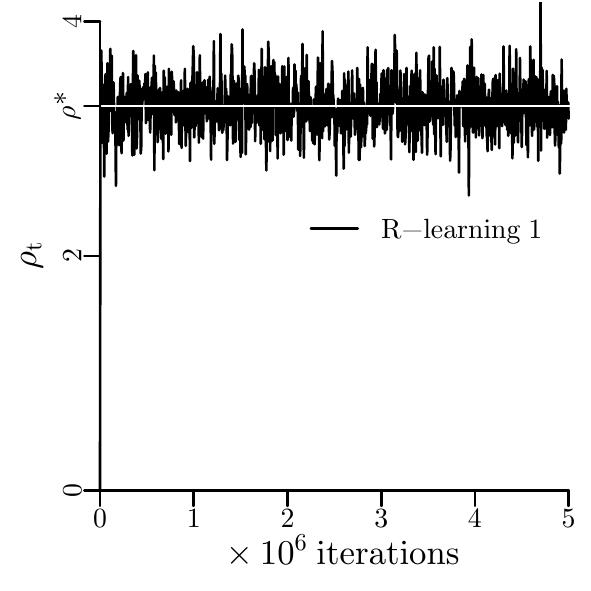}
\includegraphics{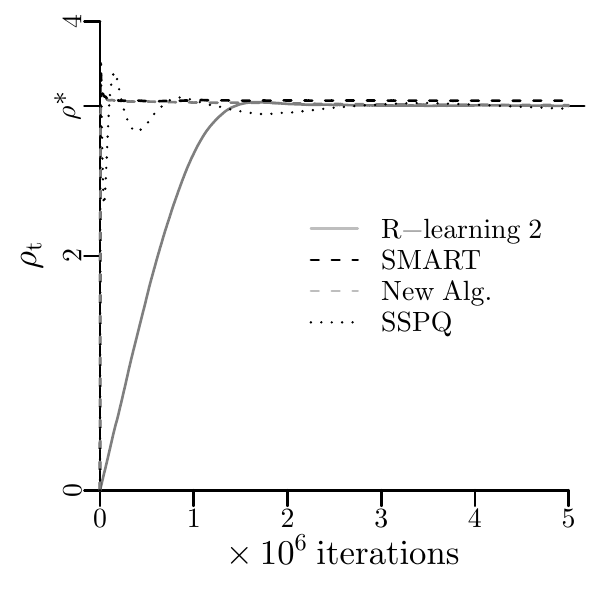}\\
\includegraphics{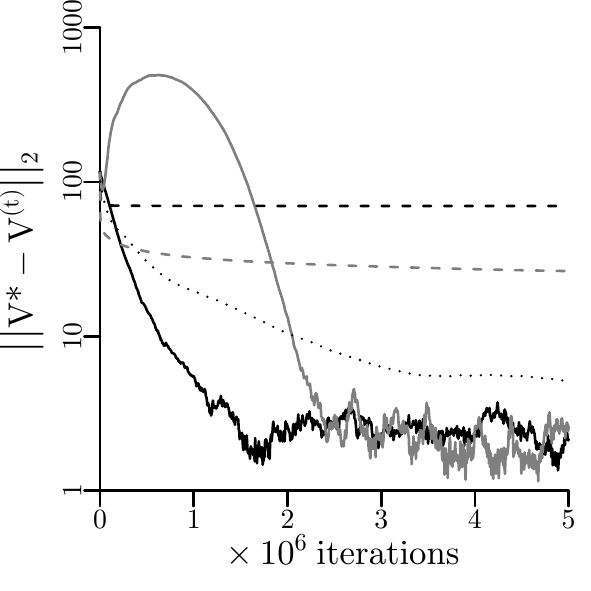}
\includegraphics{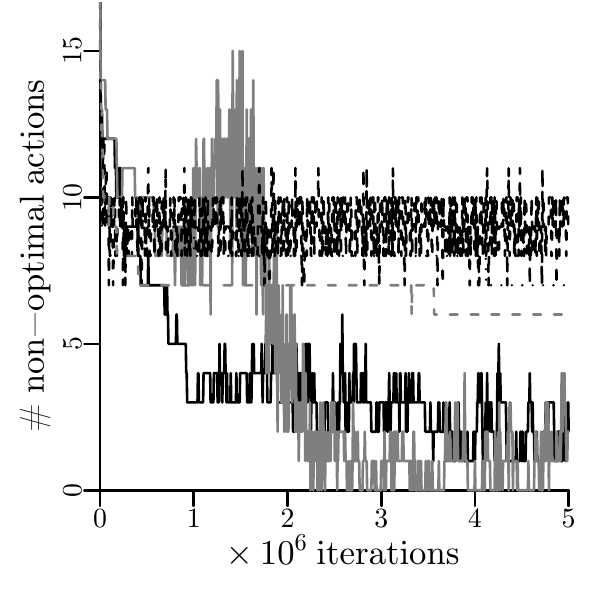}
\end{center}
\caption{Performance of average-reward reinforcement learning algorithms in the queuing task. Top left: gain in "R-Learning 1". Top right: gain in the other set-ups. Bottom left: convergence to the value of the optimal policy for all states. Bottom right: number of states for which the current policy differs from the optimal.}
\label{fig:algs}
\end{figure}

The unique optimal policy for this task, its average-adjusted value and average reward ($\rho^*\approx3.28$) can be easily found using dynamic programming, and will be used here to measure algorithm performance. \citet{sutton1998reinforcement} show the result of applying R-learning to this task using $\varepsilon$-greedy action selection with $\varepsilon=0.1$ (which we will keep for all experiments), and parameters\footnote{The on-line errata of the book corrects $\alpha$ to 0.01, instead of the stated 0.1.} $\alpha_t=\beta_t=0.01$. We call this set-up ``R-Learning 1.'' However, instead of a single run of samples, which results in states with many free servers being grossly undersampled, in order to study the convergence of action values over all states we reset the process to an uniformly-selected random state every 10 steps in all experiments.

The other algorithms tested are: ``R-learning 2,'' with a smaller rate for the $\rho$ updates ($\alpha_t=0.01$, $\beta_t=0.000001$), ``SMART'' \citep{das1999solving}, with parameters $\alpha_0=1$ and $\alpha_\tau=10^6$ for the DCM update of $\alpha_t$, the ``New Algorithm'' of \citet{gosavi2004reinforcement} with individual decaying learning rates equal to the inverse of the number of past occurrences of each state-action pair, and ``SSPQ''-learning with rates $\alpha_t=\frac{1}{t^{0.51}}$ and $\beta_t=\frac{1}{t}$. In all cases, $Q(s,a)$ and $\rho$ are initialized to 0.

Figure \ref{fig:algs} shows the results of one run (to avoid flattening the curves due to averaging) of the different methods for five million steps. All algorithms reach a neighbourhood of $\rho^*$ relatively quickly, but this doesn't guarantee an equally fast approximation of the optimal policy or its value function. The value of $\beta_t=0.01$ used in ``R-learning 1'' and by \citeauthor{sutton1998reinforcement} causes a fast approximation followed by oscillation (shown in a separate plot for clarity). The approximations for the smaller $\beta$ and the other approaches are more stable.

Overall, both R-learning set-ups, corresponding to solid black and grey lines in the plots in Figure \ref{fig:algs}, achieve a policy closer to optimal and a better approximation of its value, considerably faster than the other algorithms. On the other hand, almost immediately SMART, Gosavi's ``New Algorithm'', and SSPQ reach policies that differ from the optimal in about 5-10 states, but remain in that error range, whereas after a longer transient the R-learning variants find policies with less than 5 non-optimal actions and much better value approximations.

Remarkably, ``R-learning 2'' is the only set-up for which the value approximation and differences with the optimal policy increase at the start, before actually converging faster and closer to the optimal than the other methods. Only for that algorithm set-up is the unique optimal policy visited sometimes. This suggests that, for the slowest updating $\rho$, a different part of the policy space is being visited during the early stages of training.
Moreover, this appears to have a beneficial effect, for this particular task, on both the speed and quality of final convergence.

Optimal nudging, the method introduced below, goes even further in this direction, freezing the value of $\rho$ for whole runs of some reinforcement learning method, and only updating the gain when the value of the current best-known policy is closely approximated. 

\begin{figure}[h]
	\begin{center}
		\includegraphics{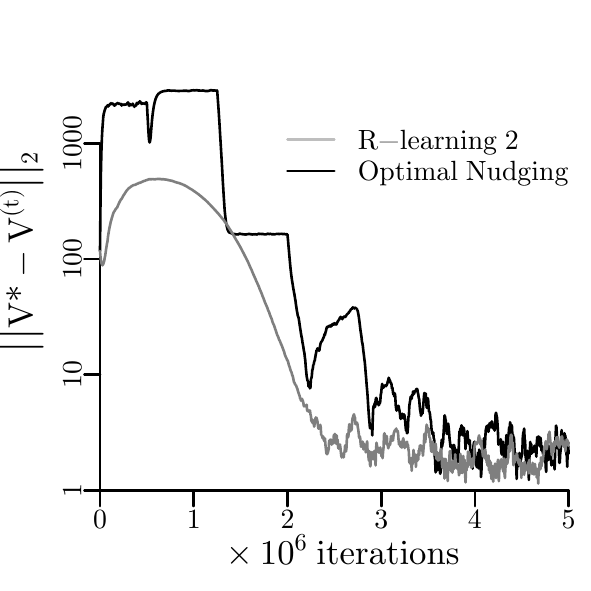}
	\end{center}
	\caption{Convergence to the value of the optimal policy for all states under optimal nudging, compared with the best R-learning experimental set-up.}
	\label{fig:optnudr}
\end{figure}

Figure \ref{fig:optnudr} shows the result of applying a vanilla version of optimal nudging to the access control queuing task, compared with the best results above, corresponding to ``R-learning 2''. After a 0-step--required to compute a parameter--which takes 500.000 samples, the optimal nudging steps proper are a series of blocks of 750.000 transition observations for a fixed $\rho$. The taking of each action is followed by a Q-learning update with the same $\alpha$ and $\varepsilon$ values as the R-learning experiments. The edges of plateaus in the black curve to the left of the plot signal the points at which the gain was updated to a new value following the optimal nudging update rule.

The figure shows that both algorithms have similar performance, reaching comparably good approximations to the value of the optimal policy in roughly the same number of iterations. Moreover, optimal nudging finds a similarly good approximation to the optimal policy, differing from it at the end of the run in only one state. However, optimal nudging has a number of further advantages: it has one parameter less to adjust, the $\beta$ learning rate, and consequently it performs one update less per iteration/action taking. This can have a dramatic effect in cases, such as this one, when obtaining sample transitions is quick and the updates cannot be done in parallel with the transitions between states. Additionally, there is no fine tuning in our implementation of the underlying Q-learning method; whereas in ``R-learning'' $\alpha$ was adjusted by Sutton and Barto to yield the best possible results, and we did the same when setting $\beta$ in ``R-learning 2'', the implementation of optimal nudging simply inherits the relevant parameters, without any adjustments or guarantees of best performance. Even setting the number of samples to 750.000 between changes of $\rho$ is a parameter that can be improved. The plateaus to the left of the plot suggest that in these early stages of learning good value approximations could be found much faster than that, so possibly an adaptive rule for the termination of each call to the reinforcement learning algorithm might lead to accelerated learning or free later iterations for finer approximation to optimal value.

\section{Optimal nudging.}
In this section we present our main algorithm, \emph{optimal nudging}. While belonging to the realm of generic Algorithm \ref{alg:generic}, the philosophy of optimal nudging is to disentangle the gain updates from value learning, turning an average-reward or semi-Markov task into a sequence of cumulative-reward tasks. This has the dual advantage of letting us treat as a black box the reinforcement learning method used (inheriting its speed, convergence and complexity features), while allowing for a very efficient update scheme for the gain.

Our method favours a more intuitive understanding of the $\rho$ term, not so much as an approximation to the optimal average reward (although it remains so), but rather as a punishment for \emph{taking actions}, which must be compensated by the rewards obtained afterwards. It exploits the Bertsekas split to focus on the value and cost of successive visits to a reference state, and their ratio. The $w-l$ space is introduced as an arena in which it is possible to update the gain and ensure convergence to the solution. Finally we show that updates can be performed optimally in a way that requires only a \emph{small} number of calls to the black-box reinforcement learning method.

\paragraph{Summary of Assumptions.}The algorithm derived below requires the average-reward semi-Markov decision process to be solved to have finite state and action sets, to contain at least one recurrent state $s_I$ and to be unichain. Further, it is assumed that the expected cost of every policy is positive and (without loss of generality) larger than one, and that the magnitude of all non-zero action costs is also larger than one.

To avoid a duplication of the cases considered in the mathematical derivation that would add no further insight, it will also be assumed that at least one policy has a positive average reward. If this is the case, naturally, any optimal policy will have positive gain. %The similarities and differences of the results when this assumption does not hold will be discussed below, but derivations will be omitted.

\subsection{Fractional Programming.}
Under our assumptions, a Bertsekas split is possible on the states, actions, and transition probabilities of the task. We will refer to the \emph{value} of a policy as the expected reward from the induced initial state, $s_I$, following the policy,
\begin{align}
\mathcal{v}^\pi = v^\pi(s_I) = \mathbb{E}\left[ \sum_{t=0}^\infty r(s_t,\pi(s_t)) \,|\, s_0=s_I,\,\pi \right]\;\;,
\end{align}
to its \emph{cost} as the expected cumulative action costs,
\begin{align*}
\mathcal{c}^\pi = c^\pi(s_I) = \mathbb{E}\left[ \sum_{t=0}^\infty k(s_t,\pi(s_t)) \,|\, s_0=s_I,\,\pi \right]\;\;,
\end{align*}
and, as customary, to its \emph{gain} as the value/cost ratio,
\begin{align*}
\rho^\pi = \frac{\mathcal{v}^\pi}{\mathcal{c}^\pi}\;\;.
\end{align*}

The following is a restatement of the known fractional programming result \citep{charnes1962programming}, linking in our notation the optimization of the gain ratio to a parametric family of linear problems on policy value and cost.

\begin{lemma}[Fractional programming]\label{lem:fracprog}
The following average-reward and linear-combination-of-rewards problems share the same optimal policies,
\begin{align*}
\argmax_{\pi\in\Pi}\frac{\mathcal{v}^\pi}{\mathcal{c}^\pi} = \argmax_{\pi\in\Pi}{\mathcal{v}^\pi + \rho^*\,(-\mathcal{c}^\pi)}\;\;,
\end{align*}
for an appropriate value of $\rho^*$ such that
\begin{align*}
\max_{\pi\in\Pi}\mathcal{v}^\pi + \rho^*\,(-\mathcal{c}^\pi)=0\;\;.
\end{align*}
\end{lemma}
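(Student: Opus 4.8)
The plan is to prove both claims together by analyzing the behavior of the parametric objective $g(\rho) = \max_{\pi\in\Pi}\bigl(\mathcal{v}^\pi - \rho\,\mathcal{c}^\pi\bigr)$ as a function of the scalar parameter $\rho$, and tying its unique zero to the optimal gain. The key structural fact is that $\Pi$ is finite (since $\mathcal{S}$ and $\mathcal{A}$ are finite and we restrict to deterministic stationary policies), so $g$ is a pointwise maximum of finitely many affine functions $\rho\mapsto \mathcal{v}^\pi - \rho\,\mathcal{c}^\pi$. Each such function has slope $-\mathcal{c}^\pi$, and by the standing assumption every policy has cost $\mathcal{c}^\pi\geq 1>0$, so every line is strictly decreasing. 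Consequently $g$ is convex, continuous, and \emph{strictly decreasing} in $\rho$, which guarantees it has at most one zero.

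First I would show that $g$ does in fact cross zero. Since at least one policy has positive average reward, we have $\max_\pi \mathcal{v}^\pi > 0$, hence $g(0)=\max_\pi \mathcal{v}^\pi>0$; and because all slopes $-\mathcal{c}^\pi$ are bounded above by $-1$, $g(\rho)\to-\infty$ as $\rho\to+\infty$, so by the intermediate value theorem a unique $\rho^*$ with $g(\rho^*)=0$ exists. This establishes the second displayed equation of the statement, namely $\max_{\pi\in\Pi}\,\mathcal{v}^\pi+\rho^*(-\mathcal{c}^\pi)=0$, and pins down the ``appropriate value'' of $\rho^*$.

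Next I would argue that this $\rho^*$ equals the optimal gain and that the two argmax sets coincide. The core identity is the sign-preserving equivalence: for any fixed $\rho$ and any policy $\pi$,
\begin{align*}
\frac{\mathcal{v}^\pi}{\mathcal{c}^\pi}\ \geq\ \rho \quad\Longleftrightarrow\quad \mathcal{v}^\pi-\rho\,\mathcal{c}^\pi\ \geq\ 0\;\;,
\end{align*}
which is valid precisely because $\mathcal{c}^\pi>0$ lets us multiply through without reversing the inequality. Taking the maximum over $\pi$, the largest ratio $\max_\pi \mathcal{v}^\pi/\mathcal{c}^\pi$ is $\geq\rho$ iff $g(\rho)\geq 0$; since $g$ is strictly decreasing with unique root $\rho^*$, this forces $\max_\pi \mathcal{v}^\pi/\mathcal{c}^\pi=\rho^*$. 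Finally, a policy $\hat\pi$ attaining the ratio maximum satisfies $\mathcal{v}^{\hat\pi}-\rho^*\mathcal{c}^{\hat\pi}=0=g(\rho^*)$, so it is a maximizer of the linear objective; conversely any maximizer of the linear objective at $\rho^*$ has $\mathcal{v}^\pi-\rho^*\mathcal{c}^\pi=0$, i.e.\ ratio exactly $\rho^*$, hence is ratio-optimal. This double inclusion gives equality of the two argmax sets.

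I expect the main obstacle to be rhetorical rather than technical: the whole argument hinges on $\mathcal{c}^\pi>0$ (to preserve inequality direction and to force strictly negative slopes), so the most important step is to invoke the paper's standing assumption that every policy has cost at least one and to make explicit that this is exactly what rules out division-by-zero pathologies and the degenerate case of a flat or increasing objective. A secondary subtlety worth stating carefully is the strictness of monotonicity of $g$, which is what delivers \emph{uniqueness} of $\rho^*$; without it one could only assert existence of some crossing value. Everything else reduces to the elementary sign-equivalence above and the finiteness of $\Pi$.
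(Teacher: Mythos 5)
Your proof is correct, but it takes a genuinely different route from the paper's. The paper gives a short contradiction argument: it \emph{defines} $\rho^*$ directly as the attained optimal ratio, $\rho^*=\max_{\pi\in\Pi}\mathcal{v}^\pi/\mathcal{c}^\pi$, observes that the gain-optimal policy then satisfies $\mathcal{v}^*+\rho^*(-\mathcal{c}^*)=0$, and shows that any $\hat{\pi}$ with $\hat{\mathcal{v}}+\rho^*(-\hat{\mathcal{c}})>0$ would have $\hat{\mathcal{v}}/\hat{\mathcal{c}}>\rho^*$ (dividing by $\hat{\mathcal{c}}>0$), contradicting gain-optimality; existence of the ``appropriate'' $\rho^*$ is immediate by construction, so no intermediate-value argument is needed. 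You instead run a Dinkelbach-style parametric analysis of $g(\rho)=\max_{\pi\in\Pi}\bigl(\mathcal{v}^\pi-\rho\,\mathcal{c}^\pi\bigr)$: finiteness of $\Pi$ and $\mathcal{c}^\pi\geq 1$ make $g$ piecewise linear and strictly decreasing, the positive-gain assumption gives $g(0)>0$, and the sign equivalence $\mathcal{v}^\pi/\mathcal{c}^\pi\geq\rho\Leftrightarrow\mathcal{v}^\pi-\rho\,\mathcal{c}^\pi\geq 0$ identifies the unique root with the optimal gain. Your longer route buys real content that the paper's proof leaves unstated: \emph{uniqueness} of $\rho^*$ (the paper only exhibits one suitable value); the explicit two-way argmax inclusion, since the paper's contradiction only verifies that no policy beats zero in the linear problem at $\rho^*$, leaving the converse direction---that every linear maximizer is gain-optimal---implicit, though it follows by the same division step; and the monotone piecewise-linear dependence of the optimal nudged value on the gain, a structural fact the paper invokes only later, attributing it to \citet{bertsekas1998new}, in the remark after the zero-crossing termination lemma. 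Conversely, the paper's proof is more economical: it needs only attainment of the optimal ratio (automatic for finite $\Pi$) and positivity of costs, and in particular does not rely on the positive-gain assumption that your intermediate-value step requires.
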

\begin{proof}
The Lemma is proved by contradiction. Under the stated assumptions, for all policies, both $\mathcal{v^\pi}$ and $\mathcal{c^\pi}$ are finite, and $\mathcal{c^\pi\geq1}$.
Let a gain-optimal policy be
\begin{align*}
\pi^*\in\argmax_{\pi\in\Pi}\frac{\mathcal{v}^\pi}{\mathcal{c}^\pi}\;\;.
\end{align*}
If $\mathcal{v^*=v^{\pi^*}}$ and $\mathcal{c^*=c^{\pi^*}}$, let
\begin{align*}
\max_{\pi\in\Pi}\frac{\mathcal{v}^\pi}{\mathcal{c}^\pi}=\frac{\mathcal{v}^*}{\mathcal{c}^*}=\rho^*\;\;;
\end{align*}
then
\begin{align*}
\mathcal{v^*+\rho^*\,(-c^*)}=0\;\;.
\end{align*}
Now, assume there existed some policy $\hat{\pi}$ with corresponding $\mathcal{\hat{v}}$ and $\mathcal{\hat{c}}$ that had a better fitness in the linear problem,
\begin{align*}
\mathcal{\hat{v}} + \rho^*\,(-\mathcal{\hat{c}})>0\;\;.
\end{align*}
It must then follow (since all $\mathcal{c}^\pi$ are positive) that
\begin{align*}
\mathcal{\hat{v}}&>\rho^*\mathcal{\hat{c}}\;\;,\\
\frac{\mathcal{\hat{v}}}{\mathcal{\hat{c}}}&>\rho^*\;\;,
\end{align*}
which would contradict the optimality of $\pi^*$.
\end{proof}

This result has deep implications. Assume $\rho^*$ is known. Then, we would be interested in solving the problem
\begin{align*}
\pi^* &\in \argmax_{\pi\in\Pi}\mathcal{v}^\pi - \rho^*\,\mathcal{c}^\pi\;\;,\\
 &= \argmax_{\pi\in\Pi} \mathbb{E}\left[ \sum_{t=0}^\infty r(s_t,\pi(s_t)) \,|\, s_0=s_I \right] - \rho^*\,  \mathbb{E}\left[ \sum_{t=0}^\infty k(s_t,\pi(s_t)) \,|\, s_0=s_I \right]\;\;,\\
 &= \argmax_{\pi\in\Pi} \mathbb{E}\left[ \sum_{t=0}^\infty r(s_t,\pi(s_t)) - \rho^*\,k(s_t,\pi(s_t)) \,|\, s_0=s_I \right]\;\;,
\end{align*}
which is equivalent to a single cumulative reward problem with rewards ${r-\rho^*k}$, where, as discussed above, the rewards $r$ and costs $k$ are functions of $(s,a,s')$.

\subsection{Nudging.}
Naturally, $\rho^*$ corresponds to the optimal gain and is unknown beforehand. In order to compute it, we propose separating the problem in two parts: finding by reinforcement learning the optimal policy and its value for some fixed gain, and independently doing the gain-update. Thus, value-learning becomes method-free, so any of the robust methods listed in Section \ref{ssec:cumrews} can be used for this stage. The original problem can be then turned into a sequence of MDPs, for a series of temporarily fixed $\rho_i$. Hence, while remaining within the bounds of the generic Algorithm \ref{alg:generic}, we propose not to hurry to update $\rho$ after every step or Q-update. Additionally, as a consequence of Lemma \ref{lem:fracprog}, the method comes with the same solid termination condition of SSP algorithms: the current optimal policy $\pi^*_i$ is gain-optimal if $\mathcal{v}^{\pi^*_i}=0$.

This suggests the \emph{nudged} version of the learning algorithm, Algorithm \ref{alg:nudged}. The term nudged comes from the understanding of $\rho$ as a measure of the punishment given to the agent after each action in order to promote receiving the largest rewards as soon as possible. The remaining problem is to describe a suitable $\rho$-update rule.

\algsetblock[Name]{Repeat}{Stop}{3}{0.5cm}
\begin{algorithm}[H]
\caption{Nudged Learning}
\label{alg:nudged}
\begin{algorithmic}[0]
	\State Set Bertsekas split
	\State Initialize ($\pi$, $\rho$, and $H$ or $Q$)
	%\Repeat
	\Repeat
	\State Set reward scheme to $(r-\rho k)$
	\State Solve by any RL method%\Comment{{\it \small Usually 1-step Q-learning}}
	\State Update $\rho$ %\Comment{{\it \small From current $H^\pi(s_I)$}}
	\State \textbf{until} $H^{\pi^*}(s_I)=Q^{\pi^*}(s_I,\pi^*(s_I))=0$
\end{algorithmic}
\end{algorithm}

\subsection{The $w-l$ Space.}
We will present an variation of the $w-l$ space, originally introduced by \citet{uribe2011discount} for the restricted setting of Markov decision process games, as the realm to describe $\rho$ uncertainty and to propose a method to reduce it optimally.

Under the assumptions stated above, the only additional requirement for the definition of the $w-l$ space is a bound on the value of all policies:
\begin{definition}[$D$]\label{def:D}
Let $D$ be a bound on unsigned, unnudged reward, such that
\begin{align*}
 D&\geq\max_{\pi\in\Pi} \mathcal{v}^\pi\;\;,\\
\intertext{and}
-D&\leq\min_{\pi\in\Pi} \mathcal{v}^\pi\;\;.
\end{align*}
\end{definition}

Observe that $D$ is a---possibly very loose---bound on $\rho^*$. However, it can become tight in the case of a task and gain-optimal policy in which termination from $s_I$ occurs in one step with reward of magnitude $D$. Importantly, under our assumptions and Definition \ref{def:D}, all policies $\pi\in\Pi$ will have finite real expected value $-D \leq\mathcal{v}^\pi\leq D$ and finite positive cost $\mathcal{c}^\pi\geq 1$.

\subsubsection{The $w-l$ Mapping.}
We are now ready to propose the simple mapping of a policy $\pi\in\Pi$, with value $\mathcal{v}^\pi$ and cost $\mathcal{c}^\pi$, to the 2-dimensional $w-l$ space using the transformation equations:
\begin{align}
w^\pi=\frac{D+\mathcal{v}^\pi}{2\mathcal{c}^\pi}\;\;,\quad\quad l^\pi=\frac{D-\mathcal{v}^\pi}{2\mathcal{c}^\pi}\;\;.\label{eq:wldef}
\end{align}
The following properties of this transformation can be easily checked from our assumptions:

\begin{proposition}[Properties of the $w-l$ space.] \label{prop:wlprops}
For all policies $\pi\in\Pi$,
\begin{enumerate}
	\item $0\leq w^\pi\leq D$; $0\leq l^\pi\leq D$.
	\item $w^\pi+l^\pi=\displaystyle\frac{D}{\mathcal{c}^\pi}\leq D$.
	\item If $\mathcal{v}^\pi=D$, then $l^\pi=0$.
	\item If $\mathcal{v}^\pi=-D$, then $w^\pi=0$.
	\item $\displaystyle\lim_{\mathcal{c^\pi}\rightarrow \infty}(w^\pi,l^\pi) = (0,0).$
\end{enumerate}
\end{proposition}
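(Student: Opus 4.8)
The plan is to derive all five properties directly from the defining Equation (\ref{eq:wldef}) together with the two standing bounds supplied by our assumptions and Definition \ref{def:D}: namely $-D\leq\mathcal{v}^\pi\leq D$ and $\mathcal{c}^\pi\geq 1$. No property requires more than elementary algebra and a single limit; essentially all the work consists of invoking these two inequalities consistently.

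First I would establish Property 2, since it is the algebraic backbone of the rest. Adding the two fractions in Equation (\ref{eq:wldef}), the $\mathcal{v}^\pi$ terms cancel and the common denominator $2\mathcal{c}^\pi$ leaves $w^\pi+l^\pi = 2D/(2\mathcal{c}^\pi)=D/\mathcal{c}^\pi$; the bound $D/\mathcal{c}^\pi\leq D$ is then immediate from $\mathcal{c}^\pi\geq 1$. Next I would prove Property 1 by splitting it into nonnegativity and an upper bound. For nonnegativity, the bound $|\mathcal{v}^\pi|\leq D$ gives $D+\mathcal{v}^\pi\geq 0$ and $D-\mathcal{v}^\pi\geq 0$, and since the denominator $2\mathcal{c}^\pi$ is positive, both $w^\pi\geq 0$ and $l^\pi\geq 0$ follow. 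The upper bounds are cleanest to read off as a corollary of Property 2 rather than by bounding each fraction separately: since $w^\pi$ and $l^\pi$ are nonnegative and sum to at most $D$, each individually lies in $[0,D]$.

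Properties 3 and 4 are then one-line substitutions of the extreme admissible values of $\mathcal{v}^\pi$: setting $\mathcal{v}^\pi=D$ collapses the numerator $D-\mathcal{v}^\pi$ of $l^\pi$ to zero, and setting $\mathcal{v}^\pi=-D$ collapses the numerator $D+\mathcal{v}^\pi$ of $w^\pi$ to zero. Finally, Property 5 is a short limit argument: with $|\mathcal{v}^\pi|\leq D$ the two numerators $D\pm\mathcal{v}^\pi$ are bounded in magnitude by $2D$ independently of the policy, while the denominator $2\mathcal{c}^\pi\to\infty$, so both coordinates tend to $0$. There is no genuine obstacle in this proposition---it is a routine unpacking of the definitions---so the only points meriting care are the consistent use of the two inequalities from Definition \ref{def:D} and the positive-cost assumption, and the observation that the upper bound in Property 1 is most economically obtained from Property 2.
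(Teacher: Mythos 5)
Your proof is correct and matches the paper's intent exactly: the paper omits an explicit proof, stating only that the properties ``can be easily checked from our assumptions,'' and your direct algebraic verification from Equation (\ref{eq:wldef}) together with $|\mathcal{v}^\pi|\leq D$ and $\mathcal{c}^\pi\geq 1$ is precisely that routine check. The ordering you chose (proving Property 2 first and deriving the upper bounds in Property 1 from it) is a clean and economical way to organize the argument.
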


As a direct consequence of Proposition \ref{prop:wlprops}, the whole policy space, which has $O(\tilde{|\mathcal{A}_s|}^{|\mathcal{S}|})$ elements, with $\tilde{|\mathcal{A}_s|}$ equal to the average number of actions per state, is mapped into a \emph{cloud} of points in the $w-l$ space, bounded by a triangle with vertices at the origin and the points $(D,0)$ and $(0,D)$. This is illustrated in Figure \ref{fig:picloud}.

\begin{figure}[h]
\begin{center}
\includegraphics{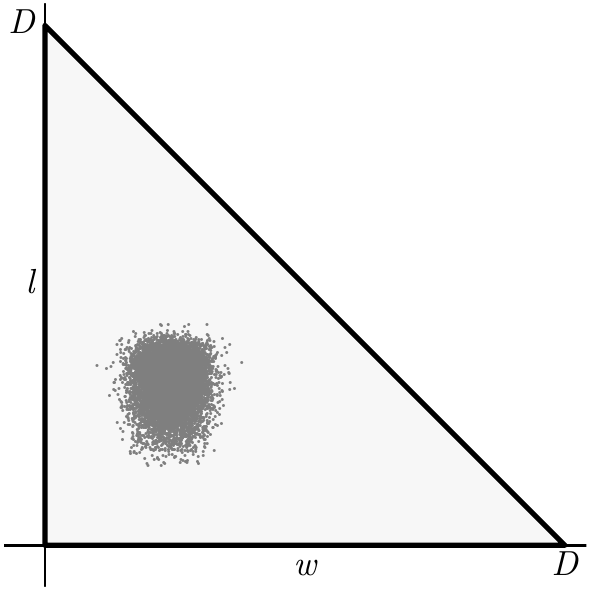}
\end{center}
\caption{Mapping of a policy cloud to the $w-l$ space.}
\label{fig:picloud}
\end{figure}

\subsubsection{Value and Cost in $w-l$.}
Cross-multiplying the equations in (\ref{eq:wldef}), it is easy to find an expression for value in $w-l$,
\begin{align}
\mathcal{v}^\pi=D\,\frac{w^\pi-l^\pi}{w^\pi+l^\pi}\;\;.\label{eq:wlvalue}
\end{align}

All of the policies of the same value (for instance $\mathcal{\hat{v}}$) lie on a level set that is a line with slope $\frac{D-\mathcal{\hat{v}}}{D+\mathcal{\hat{v}}}$ and intercept at the origin. Thus, as stated in Proposition \ref{prop:wlprops}, policies of value $\pm D$ lie on the $w$ and $l$ axes and, further, policies of expected value 0 lie on the $w=l$ line. Furthermore, geometrically, the value-optimal policies must subtend the smallest angle with the $w$ axis and vertex at the origin. Figure \ref{fig:vclss} (left) shows the level sets of Equation (\ref{eq:wlvalue}). 

\begin{figure}[h]
\begin{center}
\includegraphics{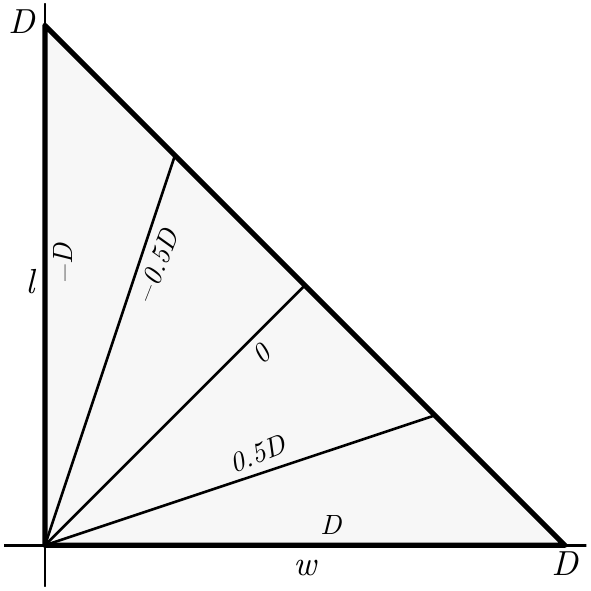}
\includegraphics{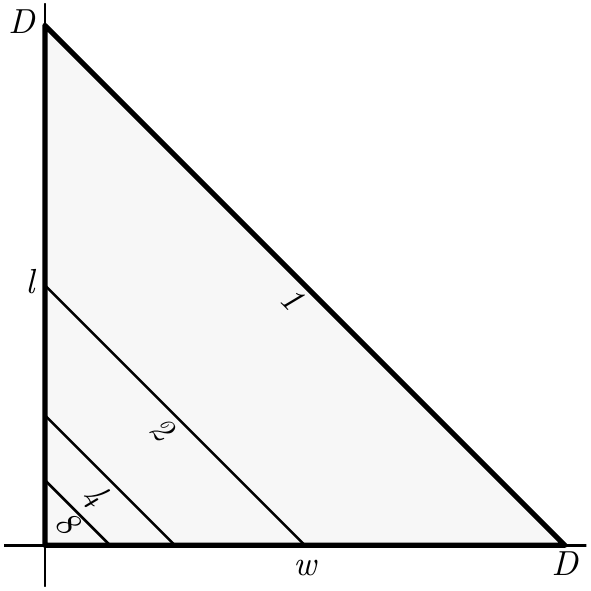}
\end{center}
\caption{Value (left) and cost (right) level sets in the $w-l$ space.}
\label{fig:vclss}
\end{figure}

On the other hand, adding both Equations in (\ref{eq:wldef}), cost in the $w-l$ space is immediately found as
\begin{align}
\mathcal{c}^\pi = \frac{D}{w^\pi+l^\pi}\;\;. \label{eq:wlcost}
\end{align}

This function also has line level sets, in this case all of slope $-1$. The $w+l=D$ edge of the triangle corresponds to policies of expected cost one and, as stated in the properties Proposition \ref{prop:wlprops}, policies in the limit of infinite cost should lie on the origin. Figure \ref{fig:vclss} (right) shows the cost level sets in the $w-l$ space. An interesting question is whether the origin actually belongs to the $w-l$ space. Since it would correspond to policies of infinite expected cost, that would contradict either the unichain assumption or the assumption that $s_I$ is a recurrent state, so the origin is excluded from the $w-l$ space. % However, there are cases...

The following result derives from the fact that, even though the value and cost expressions in Equations (\ref{eq:wlvalue}) and (\ref{eq:wlcost}) are not convex functions, both value and cost level sets are lines, each of which divides the triangle in two polygons:

\begin{lemma}[Value and cost-optimal policies in $w-l$] \label{lem:cnvxhll}
	The policies of maximum or minimum value and cost map in the $w-l$ space to vertices of the convex hull of the point cloud.
\end{lemma}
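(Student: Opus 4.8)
The plan is to exploit the hint that, although $\mathcal{v}^\pi$ and $\mathcal{c}^\pi$ are not convex (nor even monotone) functions of $(w^\pi,l^\pi)$, each of their level sets is a \emph{straight line}. The strategy is therefore to reduce each of the four extremization problems (maximum and minimum value, maximum and minimum cost) to the optimization of a suitable \emph{linear functional} over the point cloud, and then invoke the standard fact from linear programming that a linear functional attains its extrema over the convex hull of a finite point set at a vertex of that hull (the vertices of the hull being themselves cloud points). This sidesteps the non-convexity entirely, since it is the linear structure of the level sets, not of the objective, that carries the argument.

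First I would dispatch the cost case, which is immediate. By Equation (\ref{eq:wlcost}), $\mathcal{c}^\pi = D/(w^\pi + l^\pi)$ is a strictly decreasing function of the linear functional $s(w,l) = w + l$ (recall $w^\pi + l^\pi > 0$ for every policy, since the origin is excluded from the $w-l$ space). Hence a policy of maximum cost is one minimizing $s$, and a policy of minimum cost is one maximizing $s$. Both the minimum and the maximum of a linear functional over the finite cloud are attained at vertices of its convex hull, so the extreme-cost policies map to vertices.

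Next I would treat value, where the level sets are lines \emph{through the origin} rather than parallel lines, so a little more care is needed. Let $\hat{v} = \max_{\pi} \mathcal{v}^\pi$. Using Equation (\ref{eq:wlvalue}) and $w^\pi + l^\pi > 0$, the inequality $\mathcal{v}^\pi \le \hat{v}$ is equivalent to $D(w^\pi - l^\pi) \le \hat{v}(w^\pi + l^\pi)$, that is, to the \emph{linear} inequality
\begin{align*}
\phi(w^\pi, l^\pi) = (D - \hat{v})\, w^\pi - (D + \hat{v})\, l^\pi \le 0\;\;.
\end{align*}
Thus $\phi \le 0$ holds on the whole cloud, with equality exactly on the maximum-value policies; equivalently, the linear functional $\phi$ attains its maximal value $0$ there. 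Since $\phi$ is linear, this maximum is attained at a vertex of the convex hull, so a maximum-value policy maps to a vertex. The minimum-value case is symmetric: writing $\check{v} = \min_\pi \mathcal{v}^\pi$, the inequality $\mathcal{v}^\pi \ge \check{v}$ becomes $(D - \check{v})\, w^\pi - (D + \check{v})\, l^\pi \ge 0$, so the corresponding linear functional attains its minimum $0$ on the minimum-value policies, again at a vertex. Geometrically, each such functional vanishes on the level-set line through the origin that supports the hull and lies closest to, respectively, the $w$- and $l$-axes, exactly the picture suggested by Figure \ref{fig:vclss}.

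The main obstacle is conceptual rather than computational: because $\mathcal{v}^\pi$ and $\mathcal{c}^\pi$ are ratios and hence non-convex, one cannot argue directly that they attain extrema at vertices, and a naive slope parametrization $l^\pi/w^\pi$ of value breaks down at the policies of value $-D$ (where $w^\pi = 0$). The linear-functional reformulation above is what circumvents both difficulties. The only remaining delicacy is ties: if several policies share the extreme value or cost, they all lie on a single level-set line, which meets the convex hull in a proper face (a vertex or an edge). In either case that face contains at least one vertex, and every cloud point on it is extreme, so an extreme policy mapping to a vertex always exists, as claimed.
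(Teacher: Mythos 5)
Your proof is correct, but it takes a different formal route from the paper's. The paper argues by contradiction: if a value- or cost-extreme policy mapped to an \emph{interior} point of the convex hull, the line level set through that point would split the hull, forcing at least one hull vertex---itself the image of an actual policy---strictly into the better region, contradicting optimality. You instead argue directly: for cost you note that $\mathcal{c}^\pi = D/(w^\pi+l^\pi)$ is a strictly monotone transform of the linear functional $w+l$, and for value you linearize the inequality $\mathcal{v}^\pi \le \hat{v}$ into $(D-\hat{v})\,w^\pi - (D+\hat{v})\,l^\pi \le 0$ (legitimate because $w^\pi+l^\pi>0$, the origin being excluded), reducing all four cases to the standard fact that a linear functional over a finite point cloud attains its extrema at vertices of the convex hull. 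Both proofs rest on the same geometric substance---the level sets are lines---so the difference is in what each buys: your reduction needs no case analysis of how the level line cuts the triangle, and, more substantively, you are more careful than the paper about ties. The paper's contradiction only excludes hull-interior images; when several policies share the extreme value, an optimizer can map to the relative interior of a hull edge lying along the level line, so the lemma's literal claim that \emph{the} extreme policies map to vertices is slightly too strong in degenerate cases. Your weakened conclusion---that an extreme policy mapping to a vertex always exists---is the precise statement, and it is all the subsequent development of the paper actually uses.
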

\begin{proof}
	All cases are proved using the same argument by contradiction. Consider for instance a policy of maximum value $\pi^*$, with value $\mathcal{v^*}$. The $\mathcal{v^*}$ level set line splits the $w-l$ space triangle in two, with all points corresponding to higher value \emph{below} and to lower value \emph{above} the level set. If the mapping $(w^{\pi^*},l^{\pi^*})$ is an interior point of the convex hull of the policy cloud, then some points on an edge of the convex hull, and consequently at least one of its vertices, must lie on the region of value higher than $\mathcal{v}^*$. Since all vertices of this cloud of points correspond to actual policies of the task, there is at least one policy with higher value than $\pi^*$, which contradicts its optimality.
	The same argument extends to the cases of minimum value, and maximum and minimum cost.
\end{proof}

\subsubsection{Nudged Value in $w-l$.}
Recall, from the fractional programming Lemma \ref{lem:fracprog}, that for an appropriate $\rho^*$ these two problems are optimized by the same policies:
\begin{align}
\argmax_{\pi\in\Pi}\frac{\mathcal{v}^\pi}{\mathcal{c}^\pi} = \argmax_{\pi\in\Pi}{\mathcal{v}^\pi + \rho^*\,(-\mathcal{c}^\pi)}\;\;,\label{eq:problemma}
\end{align}

By substituting on the left hand side problem in Equation (\ref{eq:problemma}) the expressions for $w^\pi$ and $l^\pi$, the original average-reward semi-Markov problem becomes in the $w-l$ space the simple linear problem
\begin{align}
\argmax_{\pi\in\Pi}\frac{\mathcal{v}^\pi}{\mathcal{c}^\pi} = \argmax_{\pi\in\Pi}w^\pi-l^\pi\;\;.\label{eq:w-l}
\end{align}
Figure \ref{fig:wminusl} illustrates the slope-one level sets of this problem in the space. Observe that, predictably, the upper bound of these level sets in the triangle corresponds to the vertex at $(D,0)$, which, as discussed above, in fact  would correspond to a policy with value $D$ and unity cost, that is, a policy that would receive the highest possible reinforcement from the recurrent state and would return to it in one step with probability one.

\begin{figure}[h]
\begin{center}
\includegraphics{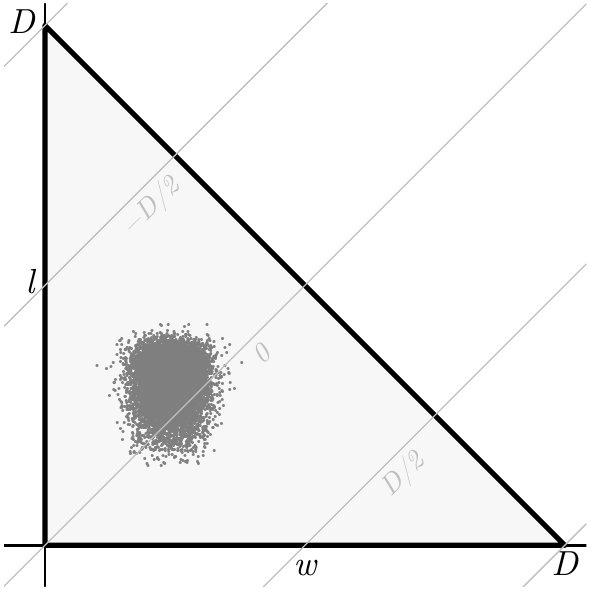}
\end{center}
\caption{Level sets of the linear correspondence of an average-reward SMDP in the $w-l$ space.}
\label{fig:wminusl}
\end{figure}

Conversely, for some $\rho_i$ (not necessarily the optimal $\rho^*$), the problem on the right hand side of Equation ($\ref{eq:problemma}$) becomes, in the $w-l$ space,
\begin{align*}
\argmax_{\pi\in\Pi}\mathcal{v}^\pi - \rho_i\,\mathcal{c}^\pi = \argmax_{\pi\in\Pi}D\,\frac{w^\pi - l^\pi - \rho_i}{w^\pi+l^\pi}\;\;,
\end{align*}
where we opt not to drop the constant $D$. We refer to the nudged value of a policy, $h^\pi_{\rho_i}$ as,
\begin{align*}
h_{\rho_i}^\pi=D\,\frac{w^\pi - l^\pi - \rho_i}{w^\pi+l^\pi}\;\;.
\end{align*}
All policies $\hat{\pi}$ sharing the same nudged value $\hat{h}_{\rho_i}$ lie on the level set
\begin{align}
l = \frac{D-\hat{h}_{\rho_i}}{D+\hat{h}_{\rho_i}}\,w - \frac{D}{D+\hat{h}_{\rho_i}}\,\rho_i\;\;, \label{eq:nudgedlevelset}
\end{align}
which, again, is a line on the $w-l$ space whose slope, further, depends only on the common nudged value, and not on $\rho_i$. Thus, for instance, for any $\rho_i$, the level set corresponding to policies with zero nudged value, such as those of interest for the termination condition of the fractional programming Lemma \ref{lem:fracprog} and Algorithm \ref{alg:nudged}, will have unity slope.

There is a further remarkable property of the line level sets corresponding to policies of the same value, summarized in the following result:

\begin{lemma}[Intersection of nudged value level sets] \label{lem:pencil}
	For a given nudging $\rho_i$, the level sets of all possible $\hat{h}_{\rho_i}$ share a common intersection point, $\left(\frac{\rho_i}{2},\, -\frac{\rho_i}{2}\right)$.
\end{lemma}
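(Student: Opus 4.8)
The plan is to recognize the family of level sets indexed by the nudged value $\hat{h}_{\rho_i}$ as a \emph{pencil of lines} and to locate its common point by the standard device of writing the defining relation in a form that is linear in the parameter $\hat{h}_{\rho_i}$. Rather than work with Equation~(\ref{eq:nudgedlevelset}) directly, whose slope and intercept both blow up when $\hat{h}_{\rho_i}=-D$, I would first clear the denominator $w+l$ from the definition of nudged value, obtaining the equivalent implicit form
\begin{align*}
\hat{h}_{\rho_i}\,(w+l) = D\,(w-l-\rho_i)\;\;.
\end{align*}
Every policy of nudged value $\hat{h}_{\rho_i}$ satisfies this, and for each fixed $\hat{h}_{\rho_i}$ it is manifestly a line in $(w,l)$.

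First I would treat $\hat{h}_{\rho_i}$ as the free parameter and regroup, rewriting the relation as
\begin{align*}
\hat{h}_{\rho_i}\,(w+l) - D\,(w-l-\rho_i) = 0\;\;.
\end{align*}
A single point $(w,l)$ lies on \emph{all} of these lines simultaneously exactly when this expression vanishes for every admissible $\hat{h}_{\rho_i}$. Since the left-hand side is affine (indeed linear) in $\hat{h}_{\rho_i}$, this holds if and only if the coefficient of $\hat{h}_{\rho_i}$ and the remaining term vanish separately, that is,
\begin{align*}
w+l = 0 \quad\text{and}\quad w-l-\rho_i = 0\;\;.
\end{align*}

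Solving this $2\times 2$ system is the only computation needed: the first equation gives $l=-w$, and substituting into the second yields $2w=\rho_i$, hence $w=\rho_i/2$ and $l=-\rho_i/2$. This is precisely the claimed intersection point $\left(\frac{\rho_i}{2},-\frac{\rho_i}{2}\right)$, and one may optionally confirm it by back-substituting into Equation~(\ref{eq:nudgedlevelset}) for arbitrary $\hat{h}_{\rho_i}\neq -D$, where the slope and intercept contract to give $l=-\rho_i/2$ independently of $\hat{h}_{\rho_i}$.

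I expect the only real subtlety — and it is a minor one — to be the degenerate value $\hat{h}_{\rho_i}=-D$, for which Equation~(\ref{eq:nudgedlevelset}) is undefined because $D+\hat{h}_{\rho_i}=0$. This is exactly why I would argue from the cleared form rather than the slope–intercept form: recalling that $D>0$ under our assumptions, at $\hat{h}_{\rho_i}=-D$ the cleared equation degenerates to the vertical line $w=\rho_i/2$, which still passes through $\left(\frac{\rho_i}{2},-\frac{\rho_i}{2}\right)$, so the common point is genuinely shared by the whole family, including this limiting level set. No other obstacles arise; the result is essentially a one-line consequence of the pencil-of-lines structure once the denominator is removed.
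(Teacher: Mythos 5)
Your proof is correct, and it takes a genuinely (if modestly) different route from the paper's. The paper argues by pairwise concurrency: it takes the slope--intercept form of Equation~(\ref{eq:nudgedlevelset}) for two distinct nudged values $\hat{h}_{\rho_i}$ and $\tilde{h}_{\rho_i}$, equates the right-hand sides, cancels the common factor $(\tilde{h}_{\rho_i}-\hat{h}_{\rho_i})$ to obtain $w=\frac{\rho_i}{2}$, and back-substitutes to get $l=-\frac{\rho_i}{2}$; since the result is independent of the pair chosen, all level sets are concurrent. You instead clear the denominator to the implicit form $\hat{h}_{\rho_i}(w+l)=D(w-l-\rho_i)$, note that it is affine in the parameter $\hat{h}_{\rho_i}$, and separate coefficients, which produces the same $2\times 2$ system $w+l=0$, $w-l=\rho_i$. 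The algebra is of equal weight in both versions, but your form buys two things the paper's does not. First, it covers the degenerate level set $\hat{h}_{\rho_i}=-D$, where the slope and intercept of Equation~(\ref{eq:nudgedlevelset}) are undefined ($D+\hat{h}_{\rho_i}=0$) and the paper's cross-multiplication implicitly assumes otherwise; this value is not vacuous, since any policy mapping to the vertical line $w=\frac{\rho_i}{2}$ has nudged value exactly $-D$, and your cleared equation correctly identifies that level set as this vertical line through the base point. Second, it makes the pencil structure explicit --- the family is literally a linear pencil generated by the two lines $w+l=0$ and $w-l-\rho_i=0$ --- rather than only verifying concurrency of generic pairs. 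One small tightening you should make: your ``if and only if'' separation of the coefficient of $\hat{h}_{\rho_i}$ from the constant term is justified because an affine function of $\hat{h}_{\rho_i}$ vanishing at two distinct admissible values must vanish identically, so you should note (as the paper's choice of two distinct values $\hat{h}_{\rho_i}\neq\tilde{h}_{\rho_i}$ implicitly does) that the family contains at least two members; with that observation recorded, the argument is complete.
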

\newcommand{\hhat}{\hat{h}_{\rho_i}}
\newcommand{\htil}{\tilde{h}_{\rho_i}}
\begin{proof}
	This result is proved through straightforward algebra. Consider, for a fixed $\rho_i$, the line level sets corresponding to two nudged values, $\hhat$ and $\htil$. Making the right hand sides of their expressions in the form of Equation (\ref{eq:nudgedlevelset}) equal, and solving to find the $w$ component of their intersection yields:
	\begin{align*}
	\frac{D-\hhat}{D+\hhat}w - \frac{D}{D+\hhat}\rho_i &= \frac{D-\htil}{D+\htil}w - \frac{D}{D+\htil}\rho_i\;\;,\\
	%\left(\frac{D-\hhat}{D+\hhat} - \frac{D-\htil}{D+\htil}\right) w &= \left(\frac{D}{D+\hhat} - \frac{D}{D+\htil}\right)\rho_i\;\;,\\
	%\frac{(D-\hhat)(D+\htil) - (D+\hhat)(D-\htil)}{(D+\hhat)(D+\htil)}w &= \frac{D(D+\htil)-D(D+\hhat)}{(D+\hhat)(D+\htil)}\rho_i\;\;,\\
	2D(\htil-\hhat)w &= D(\htil-\hhat)\rho_i\;\;,\\
	w &= \frac{\rho_i}{2}\;\;.
	\end{align*}
	Finally, replacing this in the level set for $\hhat$,
	\begin{align*}
	l&=\frac{D-\hhat}{D+\hhat}\frac{\rho_i}{2}-\frac{D}{D+\hhat}\rho_i\;\;,\\
	%l&=\frac{D-\hhat-2D}{D+\hhat}\frac{\rho_i}{2}\;\;,\\
	l&=\frac{-D-\hhat}{D+\hhat}\frac{\rho_i}{2}=-\frac{\rho_i}{2}\;\;.
	\end{align*}
\end{proof}

Thus, for a set $\rho_i$, all nudged level sets comprise what is called a \emph{pencil of lines}, a parametric set of lines with a unique common intersection point. Figure \ref{fig:pencil} (right) shows an instance of such a pencil.

\begin{figure}[h]
\begin{center}
\includegraphics{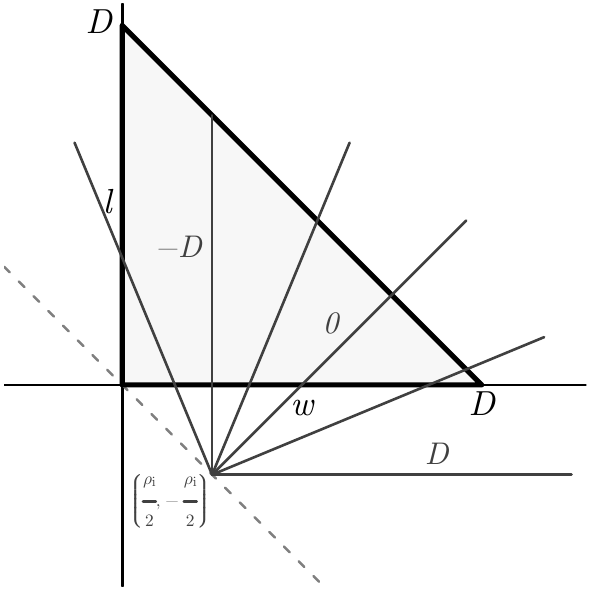}
\includegraphics{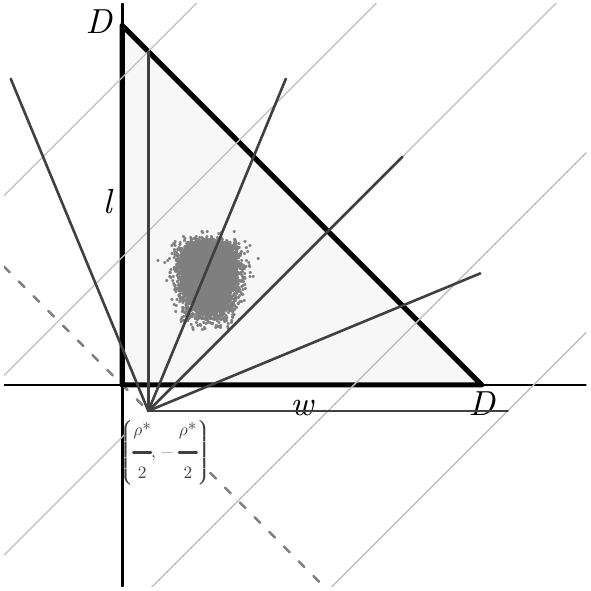}
\end{center}
\caption{Nudged value in the $w-l$ space. Left, value level sets as a pencil of lines. Right, solution to both problems with $\rho^*$ in Lemma \ref{lem:fracprog}.}
\label{fig:pencil}
\end{figure}

The termination condition states that the nudged and average-reward problems share the same solution space for $\rho^*$ such that the nudged value of the optimal policy is 0. Figure \ref{fig:pencil} (left) illustrates this case: the same policy in the example cloud simultaneously solves the linear problem with level sets of slope one (corresponding to the original average-reward task in Equation \ref{eq:w-l}), and the nudged problem with zero-nudged value.

\subsection{Minimizing $\rho$-uncertainty.}
We have now all the elements to start deriving an algorithm for iteratively enclosing $\rho^*$ quickly in the smallest neighborhood possible. Observe that from the outset, since we are assuming that policies with non-negative gain exist, the bounds on the optimal gain are the largest possible,
\begin{align}
	0\leq\rho^*\leq D\;\;. \label{eq:initrhounc}
\end{align}
Geometrically, this is equivalent to the fact that the vertex of the pencil of lines for $\rho^*$ can be, a priori, anywhere on the segment of the $w=-l$ line between $(0,\,0)$, if the optimal policy has zero gain, and $\left(\frac{D}{2},\,-\frac{D}{2}\right)$, if the optimal policy receives reinforcement $D$ and terminates in a unity-cost step (having gain $D$).
For our main result, we will propose a way to reduce the best known bounds on $\rho^*$ as much as possible every time a new $\rho_i$ is computed.

\subsubsection{Enclosing Triangles, Left and Right $\rho$-uncertainty.}
\label{sec:enctrslrunc}

In this section we introduce the notion of an enclosing triangle. The method introduced below to reduce gain uncertainty exploits the properties of reducing enclosing triangles to optimally reduce uncertainty between iterations.

\begin{definition}[Enclosing triangle]\label{def:encltr}
A triangle in the $w-l$ space with vertices $A~=(w_A,\,l_A)$, $B=(w_B,\,l_B)$, and $C=(w_C,\,l_C)$ is an \emph{enclosing triangle} if it is known to contain the mapping of the gain-optimal policy and, additionally,
\begin{enumerate}
	\item $w_B\geq w_A$; $l_B\geq l_A$.
	\item $\frac{l_B-l_A}{w_B-w_A}=1$.
	\item $w_A\geq l_A$; $w_B\geq l_B$; $w_C\geq l_C$.
	\item $P=\frac{w_B-l_B}{2}=\frac{w_A-l_A}{2}\leq\frac{w_C-l_C}{2}=Q$.
	\item $0\leq\frac{l_C-l_A}{w_C-w_A}\leq1$
		%Equivalently: $w_C\geq w_A$; $l_C\geq l_A$.
	\item $\left| \frac{l_C-l_B}{w_C-w_B} \right| \geq 1$
		%Equivalently: $w_C-l_C\geq w_B-l_B$; $w_C+l_C\leq w_B+l_B$
\end{enumerate}
\end{definition}

\begin{figure}[!h]
	\begin{center}
		\includegraphics{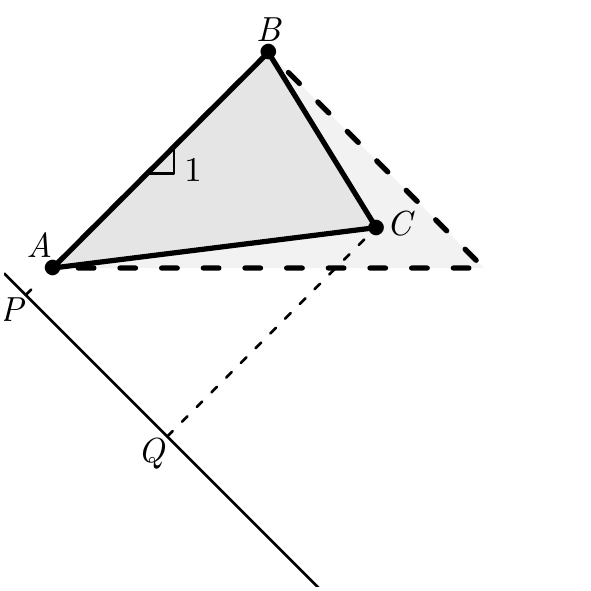}
	\end{center}
	\caption{Illustration of the definition of an enclosing triangle. The segment joining vertices $A$ and $B$ belongs to a line with slope one. Vertex $C$ can lie anywhere on the light grey triangle, including its edges. $P$ and $Q$ are the ($w$ components of the) slope-one projection of the vertices to the $w=-l$ line}
	\label{fig:defenctr}
\end{figure}

Figure \ref{fig:defenctr} illustrates the geometry of enclosing triangles as defined.
The first two conditions in Definition \ref{def:encltr} ensure that the point $B$ is \emph{above} $A$ and the slope of the line that joins them is unity. The third condition places all three points in the part of the $w-l$ space on or below the $w=l$ line, corresponding to policies with non-negative gain.

In the fourth condition, $P$ is defined as the $w$-component of the intersection of the line with slope one that joins points $A$ and $B$, and the $w=-l$ line; and $Q$ as the $w$ value of the intersection of the line with slope one that crosses point $C$, and $w=-l$. Requiring $P\leq Q$ is equivalent to forcing $C$ to be \emph{below} the line that joins $A$ and $B$.

The fifth and sixth conditions confine the possible location of $C$ to the triangle with vertices $A$, $B$, and the intersection of the lines that cross $A$ with slope zero and $B$ with slope minus one. This triangle is pictured with thick dashed lines in Figure \ref{fig:defenctr}.

\begin{remark}[Degenerate enclosing triangles]\label{rm:degenc}
	Observe that, in the definition of an enclosing triangle, some degenerate or indeterminate cases are possible. First, if $A$ and $B$ are concurrent, then $C$ must be concurrent to them---so the ``triangle'' is in fact to a point---and some terms in the slope conditions (2, 5, 6) in Definition \ref{def:encltr} become indeterminate. Alternately, if $P=Q$ in condition 4, then $A$, $B$, and $C$ must be collinear.
	We admit both of these degenerate cases as \emph{valid} enclosing triangles since, as is discussed below, they correspond to instances in which the solution to the underlying average-reward task has been found.
\end{remark}

Since we assume that positive-gain policies, and thus positive-gain \emph{optimal} policies exist, direct application of the definition of an enclosing triangle leads to the following Proposition:

\begin{proposition}[Initial enclosing triangle]
$A=(0,\,0)$, $B=\left(\frac{D}{2},\,\frac{D}{2}\right)$, $C=(D,\,0)$ is an enclosing triangle.
\end{proposition}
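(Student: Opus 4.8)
The plan is to verify directly that the three proposed points $A=(0,0)$, $B=\left(\frac{D}{2},\frac{D}{2}\right)$, $C=(D,0)$ satisfy every clause of Definition \ref{def:encltr}, and then separately argue that this triangle contains the mapping of the gain-optimal policy. The six numbered conditions are all routine coordinate checks, so I would simply run through them in order. For condition 1, $w_B=\frac{D}{2}\geq 0=w_A$ and $l_B=\frac{D}{2}\geq 0=l_A$. For condition 2, the slope from $A$ to $B$ is $\frac{D/2-0}{D/2-0}=1$. Condition 3 requires each vertex on or below the $w=l$ line: $w_A=l_A=0$, $w_B=l_B=\frac{D}{2}$ (both equalities, which are admissible), and $w_C=D>0=l_C$. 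Condition 4 asks $P=\frac{w_A-l_A}{2}=0$ and $Q=\frac{w_C-l_C}{2}=\frac{D}{2}$, so $P\leq Q$ holds. Condition 5 needs $\frac{l_C-l_A}{w_C-w_A}=\frac{0-0}{D-0}=0$, which lies in $[0,1]$. Condition 6 needs $\left|\frac{l_C-l_B}{w_C-w_B}\right|=\left|\frac{0-D/2}{D-D/2}\right|=1\geq 1$.

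\textbf{The containment claim.} The substantive part is establishing that this triangle is known to contain the mapping of the gain-optimal policy, which is what elevates it from an arbitrary triangle to an \emph{enclosing} one. Here I would lean on the uncertainty bounds already derived. By Equation (\ref{eq:initrhounc}) and the surrounding discussion, the a priori bounds are $0\leq\rho^*\leq D$, and geometrically the vertex of the pencil of lines for $\rho^*$ must lie on the $w=-l$ segment between $(0,0)$ and $\left(\frac{D}{2},-\frac{D}{2}\right)$. By Proposition \ref{prop:wlprops}, every policy maps into the bounding triangle with vertices $(0,0)$, $(D,0)$, $(0,D)$, and by our standing assumption that a positive-gain optimal policy exists, the optimal mapping lies on or below the $w=l$ line. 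I would argue that the intersection of these two constraints --- the full policy triangle below the diagonal, together with the value/cost bound $|\mathcal{v}^\pi|\leq D$ --- is exactly the triangle $A,B,C$. Specifically, the region below $w=l$ and inside the original $(0,0),(D,0),(0,D)$ triangle is the triangle with vertices $(0,0),\left(\frac{D}{2},\frac{D}{2}\right),(D,0)$, which is precisely $A,B,C$.

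\textbf{Main obstacle.} The numbered-condition checks are mechanical and present no difficulty; the only place requiring care is the containment argument, since ``known to contain the gain-optimal policy'' must be justified from the \emph{assumptions}, not merely asserted. The subtlety is that the enclosing-triangle vertices need not themselves be images of policies --- they bound the admissible region rather than interpolate the cloud --- so I must be careful to phrase the argument as ``the optimal policy's image lies in the closed triangular region $ABC$'' rather than ``$A,B,C$ are policies.'' I expect the cleanest route is to note that any non-negative-gain policy has $0\leq\mathcal{v}^\pi\leq D$ forcing $l^\pi\leq w^\pi$ (the below-diagonal half), that Proposition \ref{prop:wlprops}(1)--(2) confine the whole cloud to the bounding triangle, and that the hyperbolic nature of the value/cost level sets places no image outside these two half-plane constraints; their conjunction is the triangle $ABC$, completing the proof.
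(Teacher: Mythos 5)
Your proposal is correct and follows exactly the route the paper intends: the paper states this proposition as a ``direct application of the definition,'' relying on the positive-gain assumption, and your write-up simply makes explicit the six mechanical coordinate checks plus the containment argument (policy cloud confined to the $(0,0)$, $(D,0)$, $(0,D)$ triangle by Proposition \ref{prop:wlprops}, and the gain-optimal policy forced into the $w\geq l$ half by the positive-gain assumption, whose intersection is the triangle $ABC$). No gaps; your care in distinguishing ``the region contains the optimal policy's image'' from ``the vertices are policies'' matches the paper's usage.
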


In order to understand the reduction of uncertainty after solving the reinforcement learning task for a fixed gain, consider the geometry of setting $\rho_1$ to some value within the uncertainty range of that initial enclosing triangle, for example $\rho_1=\frac{D}{4}$. 

If, after solving the reinforcement learning problem with rewards $r(s,a,s')-\rho_1k(s,a,s')$ the value of the initial state $s_I$ for the resulting optimal policy were $\mathcal{v}^*_1=0$, then the semi-Markov task would be solved, since the termination condition would have been met. Observe that this would not imply, nor require, any knowledge of the value and cost, or conversely, the $w$ and $l$ coordinates of that optimal policy. However, the complete solution to the task would be known: the optimal gain would be $\rho_1$, and the optimal policy and its gain-adjusted value would be, respectively, the policy and (cumulative) value just found. In the $w-l$ space, the only knowledge required to conclude this is that the coordinates of the optimal policy to both problems would lie somewhere inside the $w-l$ space on the line with slope $\frac{D-0}{D+0}=1$ that crosses the point $(\frac{\rho_1}{2},\,\frac{\rho_1}{2})$.

On the other hand, if the optimal policy after setting $\rho_1=\frac{D}{4}$ and solving were some positive value $\mathcal{v^*_1>0}$ (for example $\mathcal{v^*_1}=\frac{D}{3}$), the situation would be as shown in Figure \ref{fig:rho1}. The termination condition would not have been met, but the values of $\rho_1$ and $\mathcal{v}$ would still provide a wealth of exploitable information.

\begin{figure}[!h]
\begin{center}
\includegraphics{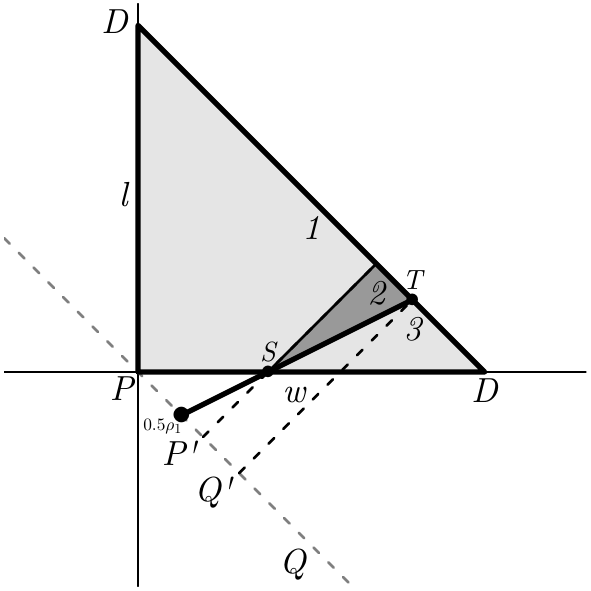}
\end{center}
\caption{Geometry of the solution for a fixed $\rho_1$ in the initial enclosing triangle. The nudged optimal policy maps somewhere on the $\overline{ST}$ segment, so the gain-optimal policy must map somewhere on the area shaded ``2'', and all points in regions ``1'' and ``3'' can be discarded.}
\label{fig:rho1}
\end{figure}

Setting $\rho_1=\frac{D}{4}$ causes the pencil of value level set lines to have common point $\left(\frac{D}{8},\,\frac{D}{8}\right)$. The optimal value $\mathcal{v^*_1}=\frac{D}{3}$ corresponds to a level set of slope 0.5, pictured from that point in thick black. The nudged-value optimal policy in $w-l$, then, must lie somewhere on the segment that joins the crossings of this line with the lines $l=0$ (point $S$) and $w+l=D$ (point $T$). This effectively divides the space in three regions with different properties, shaded and labelled from ``1'' to ``3'' in Figure \ref{fig:rho1}.

First, no policies of the task can map to points in the triangle ``3'', since they would have higher $\rho_1$-nudged value, contradicting the optimality of the policy found. Second, policies with coordinates in the region labelled ``1'' would have lower gain than all policies in the $\overline{ST}$ segment, which is known to contain at least one policy, the nudged optimizer. Thus, the gain-optimal policy must map in he $w-l$ space to some point in region ``2'', although not necessarily on the $\overline{ST}$ segment. Moreover, clearly ``2'' is itself a new enclosing triangle, with vertices $A'=S$, $C'=T$, and $B'$ on the intersection of the $w+l=D$ line and the slope-one line that crosses the point $S$.

As a direct consequence of being able to discard regions ``1'' and ``3'', the uncertainty range for (one-half of) the optimal gain reduces from $Q-P$, that is, the initial $\frac{D}{2}$ that halves the range in Equation (\ref{eq:initrhounc}), to the difference of the 1-projections of the vertices of triangle ``2'' to the $w=-l$ line, $Q'-P'$. For the values considered in the example, total uncertainty reduces approximately fivefold, from $D$ to $\frac{5D}{24}$.

Thus, running the reinforcement learning algorithm to solve the nudged problem, with $\rho_1$ within the bounds of the initial enclosing triangle allows, first, the determination of a new, smaller enclosing triangle and second, a corresponding reduction on the gain uncertainty.

\begin{figure}[!h]
	\begin{center}
		\includegraphics{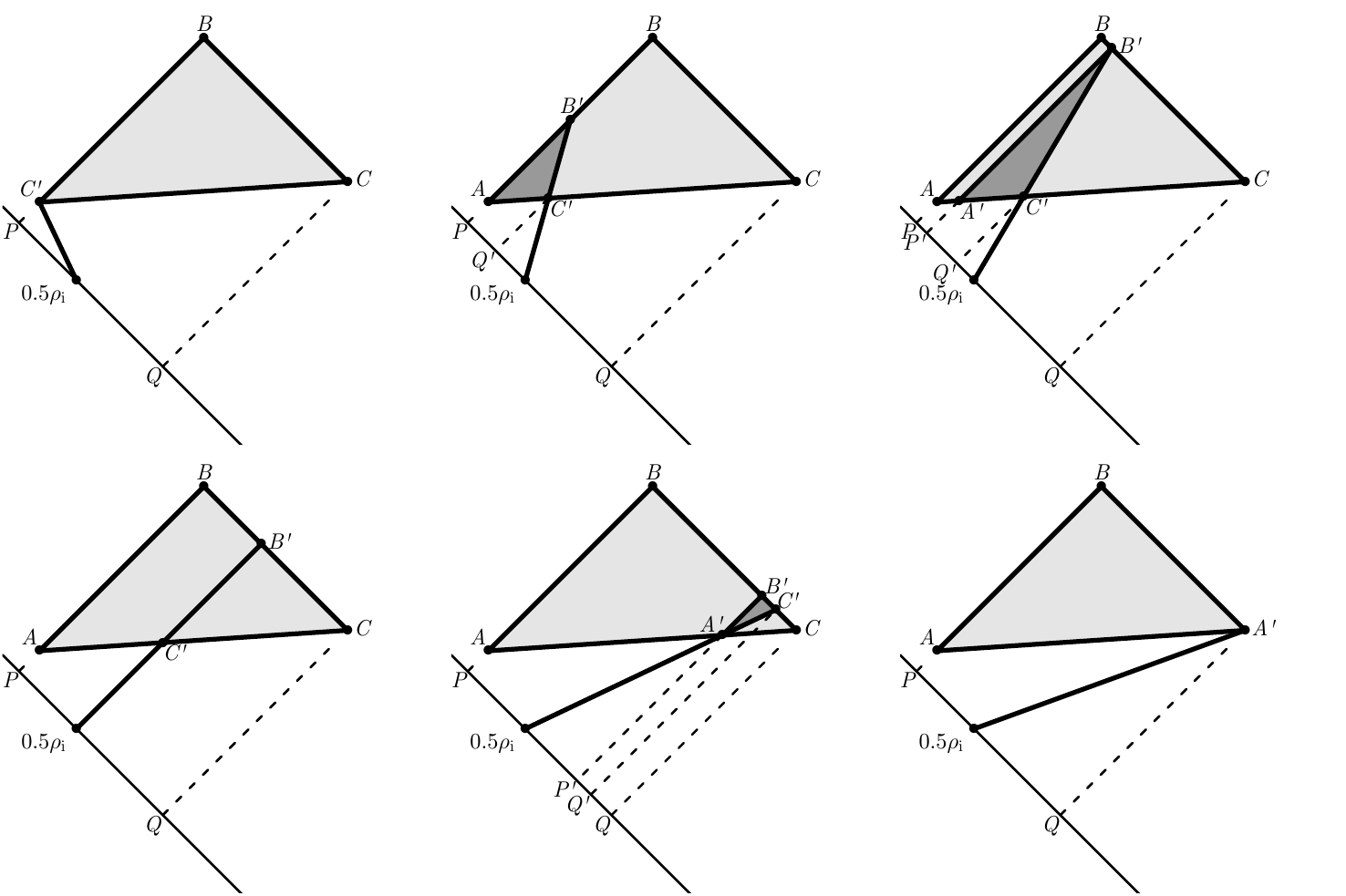}
	\end{center}
	\caption{Possible outcomes after solving a nudged task within the bounds of an arbitrary enclosing triangle. In all cases either the uncertainty range vanishes to a point (top left and bottom right) or a line segment (bottom left), and the problem is solved; or a smaller enclosing triangle results (top middle and right, bottom middle).}
	\label{fig:encltr}
\end{figure}

Both of these observations are valid for the general case, as illustrated in Figure \ref{fig:encltr}. Consider an arbitrary enclosing triangle with vertices $A$, $B$, and $C$. Some $\rho_i$ is set within the limits determined by the triangle and the resulting nudged problem is solved, yielding a nudged-optimal policy $\pi^*_i$, with value $\mathcal{v}^*_i$. 

In the $w-l$ space, in all cases, the line from the point $(\frac{\rho_i}{2}, -\frac{\rho_i}{2})$, labelled simply $0.5\rho_i$ in the plots, has slope $\frac{D-\mathcal{v}^*_i}{D+\mathcal{v}^*_i}$ and intercepts the $\overline{AC}$ and $\overline{AB}$ or $\overline{BC}$ segments. The three possible degenerate cases are reduction to a point, if the level set from $\rho_i$ crosses the points $A$ or $C$ (top left and bottom right plots in Figure \ref{fig:encltr}, respectively), and the situation with $\mathcal{v}^*_i=0$, where the problem is solved and, geometrically, the uncertainty area reduces from an enclosing triangle to a line segment. In these three instances, the method stops because the new gain uncertainty reduces to $P'-Q'=0$.

If $\mathcal{v}^*_i$ is negative, the level set can intercept $\overline{AB}$ (top  middle plot in Figure \ref{fig:encltr}) or $\overline{BC}$ (top left plot). In the first case, the new enclosing triangle, shaded dark grey, has the same $A'=A$ vertex, $B'$ is the intercept with $\overline{AB}$, and $C'$ is the intercept with $\overline{AC}$. Thus, $P'=P$ and $Q'<Q$, for a strict reduction of the uncertainty. Points in the light grey region to the right of this new triangle cannot correspond to any policies, because they would have solved the $\rho_i$ nudged problem instead, so the gain optimal policy must be inside the triangle with vertices $A'$, $B'$, and $C'$. In the second case, the level set intercept with $\overline{AC}$ again becomes $C'$, its intercept with $\overline{BC}$ is the new $B'$, and the slope-1 projection of $B'$ to $\overline{AC}$ is the new $A'$. Again, points in the light grey triangle with vertices $B'$, $C'$, $C$ cannot contain mapped policies, or that would contradict the nudged optimality of $\pi^*_i$ and, furthermore, points in the trapeze to the left of the new enclosing triangle cannot contain the gain-optimal policy, since any policies on $\overline{C'B'}$, which includes $\pi^*_i$, have larger gain. The new $Q'$ is the 1-projection of $C'$ to $w=-l$ and it is smaller than, not only the old $Q$, but also $\rho_i$. The new $P'$ is the 1-projection of $B'$ (or $A'$) to $w=-l$ and it is larger than $P$, resulting in a strict reduction of the gain uncertainty.

If $\mathcal{v}^*_i$ is positive (bottom middle plot in Figure \ref{fig:encltr}), the new $C'$ is the $\overline{BC}$ intercept of the level set, $A'$ is its $\overline{AC}$ intercept and $B'$ is the 1-projection of $A'$ to $\overline{BC}$. Otherwise, the same arguments of the preceding case apply, with $P'$ larger than $P$ and $\rho_i$, and $Q'$ smaller that $Q$.

These observations are formalized in the following result:
\begin{lemma}[Reduction of enclosing triangles]\label{lem:encltr}
	Let the points $A$, $B$, and $C$ define an enclosing triangle. Setting $w_B-l_B<\rho_i\leq w_C-l_C$ and solving the resulting task with rewards $(r-\rho_i\,k)$ to find $v^*_i$ results in a strictly smaller, possibly degenerate enclosing triangle. % (with vertices $A'$, $B'$, and $C'$). 
\end{lemma}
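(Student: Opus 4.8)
The plan is to argue entirely in the $w-l$ space, turning the two pieces of data returned by the black box---that $\pi^*_i$ is nudged-optimal and that the value it attains is $v^*_i$---into two half-plane cuts that carve the old enclosing triangle $ABC$ down to the new one. First I would record the level set $\ell_i$ on which $\pi^*_i$ maps: since the nudged value is $h^\pi_{\rho_i}=\mathcal{v}^\pi-\rho_i\mathcal{c}^\pi$ and, by Lemma \ref{lem:pencil}, all level sets for a fixed $\rho_i$ pass through the pencil vertex $V=\left(\frac{\rho_i}{2},-\frac{\rho_i}{2}\right)$, the line $\ell_i$ passes through $V$ with slope $\frac{D-v^*_i}{D+v^*_i}$ (Equation \ref{eq:nudgedlevelset}), which exceeds, equals, or falls below $1$ exactly when $v^*_i$ is negative, zero, or positive. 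The hypothesis $w_B-l_B<\rho_i\le w_C-l_C$ is precisely the statement $P<\frac{\rho_i}{2}\le Q$, placing $V$ strictly to the right of the shared slope-one projection of $A$ and $B$ and no farther right than the projection of $C$; this is what forces $\ell_i$ to cut $ABC$ non-trivially.

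Next I would establish the two discarding arguments sketched around Figure \ref{fig:encltr}. Because $\pi^*_i$ maximizes $h^\pi_{\rho_i}$, every policy---in particular the gain-optimal one---satisfies $h^\pi_{\rho_i}\le v^*_i$, so no policy maps strictly to the higher-nudged-value side of $\ell_i$ (region ``3''), which is discarded outright. For the complementary cut I would use that the gain-optimal policy maximizes $\mathcal{v}^\pi/\mathcal{c}^\pi=w^\pi-l^\pi$ (Equation \ref{eq:w-l}), that $w-l$ is monotone along $\ell_i$ with rate $\frac{2v^*_i}{D+v^*_i}$ of sign $\sign(v^*_i)$, and that the genuine policy $\pi^*_i$ lies somewhere on the segment $\ell_i\cap ABC$. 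Hence the optimal gain is at least the least value of $w-l$ over that segment, so every point of strictly smaller gain (region ``1'') fails to be gain-optimal and is discarded. The surviving region ``2'' is the intersection of $ABC$ with these two closed half-planes and still contains the gain-optimal policy.

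I would then name the vertices $A'$, $B'$, $C'$ of region ``2'' by the case split of Figure \ref{fig:encltr} on $\sign(v^*_i)$ and on whether $\ell_i$ leaves $ABC$ through $\overline{AB}$ or $\overline{BC}$: for instance, when $v^*_i>0$ the slope of $\ell_i$ is below one, and $C'$ is its intercept with $\overline{BC}$, $A'$ its intercept with $\overline{AC}$, and $B'$ the slope-one projection of $A'$ onto $\overline{BC}$; the two $v^*_i<0$ sub-cases are the mirror constructions, keeping either $A'=A$ or building $A'$ as a slope-one projection. For each case I would check the six conditions of Definition \ref{def:encltr}: conditions 1--2 hold since one new edge is by construction a slope-one segment; condition 3 is inherited because region ``2'' lies inside $ABC$, which already sat on or below $w=l$; and conditions 4--6 reduce to comparing the slope $\frac{D-v^*_i}{D+v^*_i}$ of $\ell_i$ with $1$ and reusing the slope bounds the original edges satisfied. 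Tracking the projections then gives $P'\ge P$ and $Q'\le Q$ with at least one inequality strict, hence $Q'-P'<Q-P$; the degenerate outcomes---$\ell_i$ through $A$ or $C$, or $v^*_i=0$---collapse region ``2'' to a point or to a slope-one segment, in every case yielding $Q'-P'=0$, which pins $\rho^*$ exactly and, when $v^*_i=0$, coincides with the termination condition of Lemma \ref{lem:fracprog}.

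The step I expect to be the main obstacle is verifying that region ``2'' is genuinely a triangle---exactly three vertices rather than a quadrilateral---and that in the non-degenerate cases its new vertices are strict subdivisions of the old edges rather than the old vertices themselves. Both facts rest on the confinement conditions 5 and 6 of Definition \ref{def:encltr}, which pin $C$ inside the dashed triangle of Figure \ref{fig:defenctr} and thereby force $\ell_i$ to exit $ABC$ through the expected pair of edges, together with the strict inequality $\rho_i>w_B-l_B$ that keeps the cut off the $\overline{AB}$ projection. Separating the boundary value $\rho_i=w_C-l_C$ and the three degenerate collapses cleanly from the strict-reduction cases is where the bookkeeping needs the most care.
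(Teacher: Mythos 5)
Your proposal follows essentially the same route as the paper's proof: the identical case split on $\sign(\mathcal{v}^*_i)$ and on which edge the level set exits (matching Figure \ref{fig:encltr}), the same two discarding arguments (nudged-optimality of $\pi^*_i$ excludes region ``3'', the gain comparison along the level-set segment excludes region ``1''), the same vertex constructions for $A'$, $B'$, $C'$, and the same final plan of checking the six conditions of Definition \ref{def:encltr} together with strict shrinkage of the projections $P$ and $Q$. The only difference is executional rather than structural: the paper's Appendix A performs those checks through an affine change of coordinates and explicit formulas for $C'$, parametrizing $\frac{\rho_i}{2}=(1-\alpha)P+\alpha Q$ with $\alpha>0$ (precisely your strict-inequality remark about $\rho_i>w_B-l_B$), whereas you sketch the same verifications synthetically via slope and projection comparisons---and the obstacle you flag (that region ``2'' is genuinely a triangle with new vertices strictly subdividing the old edges) is exactly where the paper spends its algebra.
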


\begin{proof}
	The preceding discussion and Figure \ref{fig:encltr} show how to build the triangle in each case and why it must contain the mapping of the optimal policy. In Appendix A we show that the resulting triangle is indeed \emph{enclosing}, that is, that it holds the conditions in Definition \ref{def:encltr}, and that it is strictly smaller than the original enclosing triangle.
\end{proof}

\subsubsection{An additional termination condition}
Another very important geometrical feature arises in the case when the same policy is nudged-optimal for two different nudges, $\rho_1$ and $\rho_2$, with optimal nudged values of different signs. This is pictured in Figure \ref{fig:swsn}. Assume that the gain is set to some value $\rho_1$ and the nudged task is solved, resulting on some nudged-optimal policy $\pi_1^*$ with positive value (of the recurrent state $s_I$) $\mathcal{v}_1^*>0$. The geometry of this is shown in the top left plot. As is the case with Lemma \ref{lem:encltr} and Figure \ref{fig:rho1}, the region shaded light grey cannot contain the $w-l$ mapping of any policies without contradicting the optimality of $\pi_1^*$, while the dark grey area represents the next enclosing triangle. If, next, the gain is set to $\rho_2$ and the same policy is found to be optimal, $\pi_1^*=\pi_2^*$, but now with $\mathcal{v}_2^*<0$, not only is the geometry as shown in the top right plot, again with no policies mapping to the light gray area, but remarkably we can also conclude that the optimizer of both cases is also the gain optimal policy of the global task. Indeed, the bottom plot in Figure \ref{fig:swsn} shows in light grey the union of the areas that cannot contain policy mappings. in principle this would reduce the uncertainty region to the dark grey enclosing triangle. However, $\pi_1^*=\pi_2^*=\pi^*$ is known to map to a point in each of the two solid black line segments, so it must be on their intersection. Since this is the extreme vertex of the new enclosing triangle in the direction of increase of the level sets of the average-reward problem and a policy, namely $\pi^*$, is known to reside there, it then must solve the task.

\begin{figure}[!h]
	\begin{center}
		\includegraphics{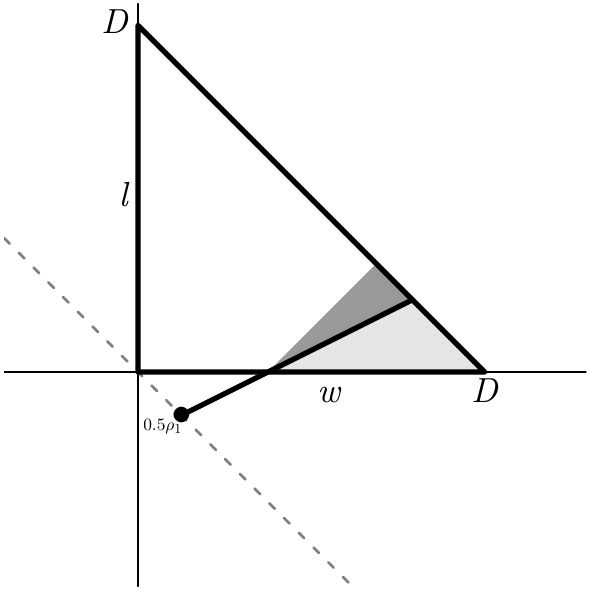}
		\includegraphics{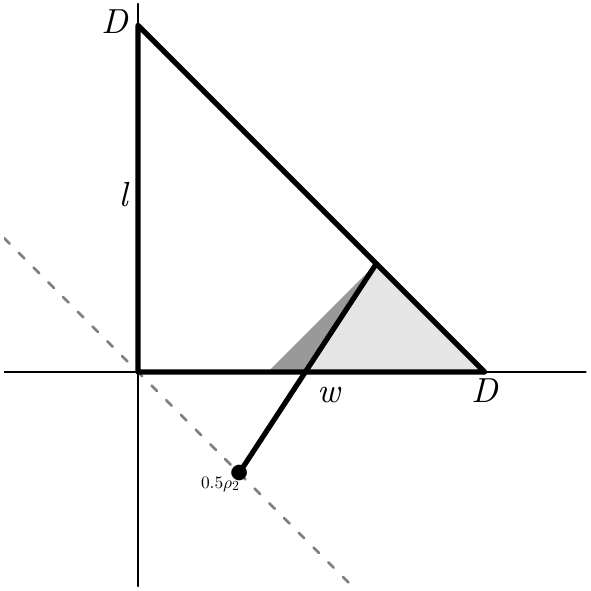}\\
		\includegraphics{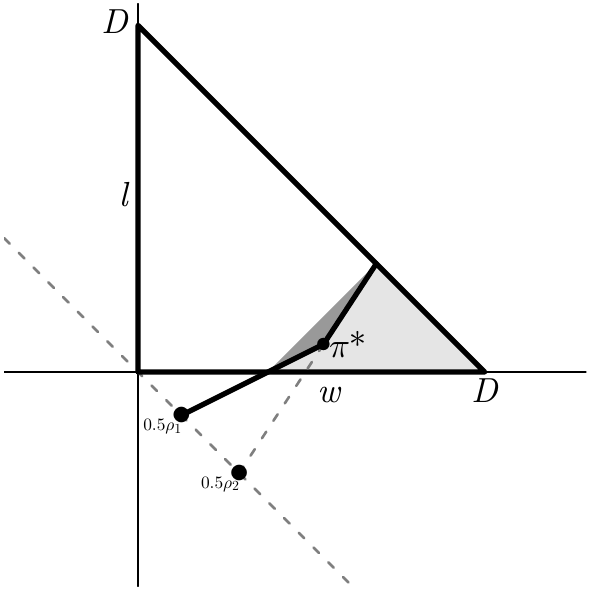}
	\end{center}
	\caption{Top: Geometry of solving a nudged task for two different set gains $\rho_1$ and $\rho_2$. Bottom: If the same policy $\pi^*$ is the optimizer in both cases, it lies on the intersection of the solid black lines and solves the average-reward task.}
	\label{fig:swsn}
\end{figure}

Observe that this argument holds for the opposite change of sign, from $\mathcal{v}_1^*<0$ to $\mathcal{v}_2^*>0$. The general observation can be, thus, generalized:

\begin{lemma}[Termination by zero crossing]
	If the same policy $\pi_i^* = \pi^*_{i+1}$ is optimal for two consecutive nudges $\rho_i$ and $\rho_{i+1}$ and the value of the recurrent state changes signs between them, the policy is gain-optimal.
\end{lemma}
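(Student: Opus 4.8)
The plan is to translate the geometric statement into the algebra of the fractional-programming Lemma \ref{lem:fracprog} and then exploit the fact that, for a \emph{fixed} policy, the nudged value depends affinely on the nudge. Recall that the value of the recurrent state $s_I$ under the Bertsekas split with rewards $(r-\rho k)$ is precisely the nudged value $h^\pi_\rho = \mathcal{v}^\pi - \rho\,\mathcal{c}^\pi$. Hence the hypothesis reads: $\pi^*$ attains $\max_{\pi\in\Pi} h^\pi_\rho$ at both $\rho_i$ and $\rho_{i+1}$, and $h^{\pi^*}_{\rho_i}$ and $h^{\pi^*}_{\rho_{i+1}}$ have opposite signs. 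Since $\mathcal{c}^{\pi^*}\geq 1>0$, the map $\rho\mapsto h^{\pi^*}_\rho$ is a strictly decreasing affine function of $\rho$, so the sign change forces a unique zero at $\rho^\star=\mathcal{v}^{\pi^*}/\mathcal{c}^{\pi^*}$ lying strictly between $\rho_i$ and $\rho_{i+1}$.

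First I would establish that $\pi^*$ remains nudged-optimal at every $\rho$ in the closed interval between $\rho_i$ and $\rho_{i+1}$, in particular at $\rho^\star$. For each policy $\pi$ the function $\rho\mapsto h^\pi_\rho$ is affine, so the upper envelope $\phi(\rho)=\max_{\pi\in\Pi} h^\pi_\rho$ is convex in $\rho$. By hypothesis $\phi$ agrees with the affine function $h^{\pi^*}_\rho$ at the two endpoints; convexity then places $\phi$ on or below the chord joining those endpoints, which is exactly $h^{\pi^*}_\rho$. Combined with the trivial bound $\phi(\rho)\geq h^{\pi^*}_\rho$, this yields $\phi(\rho)=h^{\pi^*}_\rho$ throughout the interval, so $\pi^*$ is a maximizer of the nudged value for every intermediate nudge.

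Evaluating at $\rho^\star$, where $h^{\pi^*}_{\rho^\star}=0$, gives $\max_{\pi\in\Pi}(\mathcal{v}^\pi-\rho^\star\mathcal{c}^\pi)=\phi(\rho^\star)=0$, attained by $\pi^*$. This is exactly the optimality condition of Lemma \ref{lem:fracprog} with $\rho^*=\rho^\star$, whence $\pi^*\in\argmax_{\pi\in\Pi}\mathcal{v}^\pi/\mathcal{c}^\pi$ is gain-optimal, with optimal gain $\rho^\star$. The argument is symmetric in the two possible directions of the sign change, so no separate case is needed. As a consistency check, this matches the geometry of Figure \ref{fig:swsn}: the fixed image point of $\pi^*$ lies on the two nudged level sets drawn for $\rho_i$ and $\rho_{i+1}$, and the union of the regions those two solutions discard leaves $\pi^*$ sitting at the extreme vertex of the resulting enclosing triangle in the direction of increasing $w-l$.

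The main obstacle is the middle step, propagating optimality from the two endpoint nudges to the whole interval. It is tempting to argue this purely through the shifting pencils of Lemma \ref{lem:pencil}, but the clean justification is the convexity of the upper envelope, which hinges on the linearity of $h^\pi_\rho$ in $\rho$ for each fixed $\pi$. Once that is in hand, locating $\rho^\star$ and invoking Lemma \ref{lem:fracprog} is routine; the only point requiring care is verifying that the zero of $h^{\pi^*}_\rho$ falls strictly inside the interval $(\rho_i,\rho_{i+1})$, which is guaranteed by the strict monotonicity coming from $\mathcal{c}^{\pi^*}\geq 1$.
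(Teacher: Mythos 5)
Your proof is correct, but it takes a genuinely different route from the paper's. The paper argues this lemma geometrically in the $w-l$ space (Figure \ref{fig:swsn}): each of the two nudged solutions rules out a region that cannot contain mapped policies, the shared optimizer must lie on both level-set segments and hence at their intersection, and that intersection is the extreme vertex of the surviving enclosing triangle in the direction of increase of the $w^\pi-l^\pi$ level sets of the original problem, so a policy known to sit there solves the task. You instead work algebraically: observing that $h^\pi_\rho=\mathcal{v}^\pi-\rho\,\mathcal{c}^\pi$ (which is indeed the value of $s_I$ in the split task, by linearity of expectation) is affine and strictly decreasing in $\rho$ with slope $-\mathcal{c}^\pi\leq-1$, you note the upper envelope $\phi(\rho)=\max_{\pi\in\Pi}h^\pi_\rho$ is convex, so agreement with the affine $h^{\pi^*}_\rho$ at the two endpoint nudges forces $\phi=h^{\pi^*}_\rho$ on the whole interval, and the sign change places the zero $\rho^\star=\mathcal{v}^{\pi^*}/\mathcal{c}^{\pi^*}$ strictly inside it, where $\max_\pi(\mathcal{v}^\pi-\rho^\star\mathcal{c}^\pi)=0$. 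One point of care: you invoke Lemma \ref{lem:fracprog} in its converse direction (zero maximum attained by $\pi^*$ implies gain-optimality), whereas the paper's proof of that lemma argues only the forward direction; the converse is a one-liner worth writing out---$\mathcal{v}^\pi\leq\rho^\star\mathcal{c}^\pi$ for all $\pi$ with $\mathcal{c}^\pi>0$ gives $\mathcal{v}^\pi/\mathcal{c}^\pi\leq\rho^\star=\mathcal{v}^{\pi^*}/\mathcal{c}^{\pi^*}$---but it is not a gap. Your route is arguably cleaner and slightly more general: it needs neither consecutiveness of the nudges nor the enclosing-triangle machinery, and it makes rigorous the paper's own Remark that the optimal reference-state value is piecewise linear in the gain (indeed, in this maximization formulation the envelope is convex, not concave as the remark, inherited from Bertsekas's cost-minimization setting, states). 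What the paper's geometric argument buys in exchange is that the same picture simultaneously delivers the collapsed uncertainty region used in the algorithm's bookkeeping, whereas your proof is a self-contained, figure-free termination argument.
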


\begin{remark}
	A similar observation was made by \citet{bertsekas1998new}, who observed that the optimal value of a reference state in SSPs is a concave, monotonically decreasing, \emph{piecewise linear} function of gain. However, as a consequence of value-drift during learning, unlike all of the methods in the literature only optimal nudging can rely for termination on the first zero-crossing. %, or indeed on any crossing before assured convergence to zero.
\end{remark}

The final step remaining to our formulation is, at the start of an iteration of Algorithm \ref{alg:nudged}, deciding the set gain, which geometrically corresponds to choosing the location of vertex of the next pencil of lines. The next section shows how to do this optimally.

\subsubsection{\texttt{minmax} Uncertainty}
\label{sec:minmaxunc}
We have already discussed the implications of setting the current gain/nudging to some value $\rho_i$ (for which upper and lower bounds are known), and then solving the resulting cumulative reward task. The problem remains finding a good way to update $\rho_i$. It turns out that the updates can be done optimally, in a minmax sense.

We start from an arbitrary enclosing triangle with vertices $A$, $B$, and $C$. By definition of enclosing triangle, the slope of $\overline{AB}$ is unity. We will refer to the slope of the $\overline{BC}$ segment as $m_\beta$ and that of $\overline{AC}$ as $m_\gamma$. For notational simplicity, we will refer to the projection to the $w=-l$ line, in the direction of some slope $m_\zeta$, of a point $X$ with coordinates $(w_X,\,l_X)$,  as the ``$\zeta$ projection of $X$'', $X_\zeta$. This kind of projection has the simple general form
\begin{align*}
X_\zeta&=\frac{m_\zeta\,w_X - l_X}{m_\zeta+1}\;\;.
\end{align*}
Naturally, attempting a $-1$-projection leads to an indetermination. On the other hand, if $m_\zeta$ is infinite, then $X_\zeta=w_X$.

An enclosing triangle with vertices $A=(0,0)$, $B=(\frac{D}{2}, \frac{D}{2})$ and $C=(D,0)$, then, has a $\rho$-uncertainty of the form
\begin{align*}
P&\leq\frac{\rho^*}{2}\leq Q\;\;,\\
A_1=B_1&\leq\frac{\rho^*}{2}\leq C_1\;\;,\\
w_A-l_A=w_B-l_B&\leq\rho^*\leq w_C-l_C\;\;.
\end{align*}
Assume $\rho_x$ is set to some value inside this uncertainty region. The goal is to find $\rho_i$ as the \emph{best} location for $\rho_x$. Solving the $\rho_x$-nudged problem, that is, the cumulative task with rewards $(r - \rho_x\,k)$, the initial state $s_I$ has optimal nudged value $\mathcal{v}^*_x$. Disregarding the cases in which this leads to immediate termination, the resulting geometry of the problem would be similar to that shown in the top middle, top right, and bottom middle plots in Figure \ref{fig:encltr}. In all three cases, the resulting reduction in uncertainty is of the form
\begin{align*}
P'=B'_1&\leq\frac{\rho^*}{2}\leq C'_1=Q'\;\;.
\end{align*}
We call the resulting, reduced uncertainty for both cases with $\mathcal{v}^*_x<0$ (Figure \ref{fig:encltr} top middle and top right) \emph{left uncertainty}. It has the following, convenient features:

\begin{lemma}[Left Uncertainty]\label{lem:leftunc}
For any enclosing triangle with vertices $A$, $B$, $C$:
\begin{enumerate}
	\item For any $A_1=B_1\leq\frac{\rho_x}{2}\leq C_1$, the maximum possible left uncertainty, $u_l^*$ occurs when the line with slope $\frac{D-\mathcal{v}^*_x}{D+\mathcal{v}^*_x}$ from $\left(\frac{\rho_x}{2},\,-\frac{\rho_x}{2}\right)$ intercepts $\overline{AB}$ and $\overline{BC}$ at point $B$.
	\item When this is the case, the maximum left uncertainty is
		\begin{align}
		u_l^*=2\frac{(\frac{\rho_x}{2}-B_1)}{(\frac{\rho_x}{2}-B_\gamma)}\,(C_\gamma - B_\gamma)\;\;.\label{eq:leftunc}
		\end{align}
	\item This expression is a monotonically increasing function of $\rho_x$.
	\item The minimum value of this function is zero, and it is attained when $\frac{\rho_x}{2}=B_1$.
\end{enumerate}
\end{lemma}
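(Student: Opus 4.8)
The plan is to fix the chosen nudging $\rho_x$ and regard the \emph{outcome} of the call to the black box---the optimal nudged value $\mathcal{v}_x^\ast<0$---as the only free quantity. By Equation (\ref{eq:nudgedlevelset}) this outcome is in bijection with the slope $s=\frac{D-\mathcal{v}_x^\ast}{D+\mathcal{v}_x^\ast}>1$ of the level set through the pencil vertex $V=\left(\frac{\rho_x}{2},-\frac{\rho_x}{2}\right)$, so the four claims become statements about a one-parameter family of cutting lines $L(s)$. Writing $[P',Q']$ for the bracket on $\frac{\rho^\ast}{2}$ produced by the reduction of Lemma \ref{lem:encltr}, the reduced gain uncertainty is $u_l=2(Q'-P')$, with $Q'=C'_1$ and $P'=B'_1$ the $1$-projections of the new vertices. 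I would prove the parts in the stated order, leaning on three elementary facts about the projection $X_\zeta$: it is affine along any line transversal to the direction $m_\zeta$, so it preserves ratios of collinear points; every point of $w=-l$ is fixed by all projections, whence $V_\zeta=\frac{\rho_x}{2}$ for all $\zeta$; and all points of $\overline{AC}$ (slope $m_\gamma$) share the projection $C'_\gamma=A_\gamma=C_\gamma$, just as all points of $\overline{AB}$ (slope one) share $B'_1=A_1=B_1$.

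For part 1 I would write the chord length uniformly. Since $B'$ and $C'$ lie on $L(s)$, $Q'-P'=C'_1-B'_1=\tfrac12(s-1)(w_{B'}-w_{C'})$, where $C'=L(s)\cap\overline{AC}$ always, while $B'=L(s)\cap\overline{AB}$ when $s\ge s_B$ and $B'=L(s)\cap\overline{BC}$ when $1<s<s_B$, the threshold $s_B$ being exactly the slope of the segment $\overline{VB}$. Intersecting $L(s)$ with each edge gives closed forms for $w_{C'}(s)$ and $w_{B'}(s)$; from them $Q'(s)=C'_1(s)$ is strictly decreasing on all of $(1,\infty)$, while $P'(s)$ is constant and equal to $B_1$ for $s\ge s_B$ and strictly decreasing to $B_1$ for $s<s_B$. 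On $[s_B,\infty)$ this makes $u_l$ strictly decreasing; on $(1,s_B)$ I would show $P'$ falls strictly faster than $Q'$, so $u_l$ strictly increases toward $s_B$. Both branches therefore attain their supremum at $s=s_B$, that is, exactly when $L$ passes through $B$, which is the assertion of part 1 (and matches the geometry of the top-middle and top-right panels of Figure \ref{fig:encltr} meeting at $B$).

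Parts 2--4 then follow cleanly. At the maximiser $B'=B$, so $P'=B_1$ and $V,B,C'$ are collinear on $L(s_B)$; applying the ratio-invariance of projections with $\zeta=1$ and $\zeta=\gamma$, and using $V_1=V_\gamma=\frac{\rho_x}{2}$ together with $C'_\gamma=C_\gamma$, gives
\[
\frac{Q'-B_1}{\frac{\rho_x}{2}-B_1}=\frac{C'_\gamma-B_\gamma}{V_\gamma-B_\gamma}=\frac{C_\gamma-B_\gamma}{\frac{\rho_x}{2}-B_\gamma},
\]
and multiplying $Q'-B_1$ by $2$ produces Equation (\ref{eq:leftunc}), proving part 2. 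For part 3 I would read (\ref{eq:leftunc}) as $u_l^\ast=2(C_\gamma-B_\gamma)\,\frac{\frac{\rho_x}{2}-B_1}{\frac{\rho_x}{2}-B_\gamma}$ and differentiate in $\rho_x$; the sign of the derivative is that of $(B_1-B_\gamma)(C_\gamma-B_\gamma)$, and the identities $B_1-B_\gamma=\frac{(1-m_\gamma)(w_B+l_B)}{2(m_\gamma+1)}$ and $C_\gamma-B_\gamma=\frac{(1-m_\gamma)(w_B-w_A)}{m_\gamma+1}$ are both nonnegative because $0\le m_\gamma\le1$ and $w_B\ge w_A$ by conditions 5 and 1 of Definition \ref{def:encltr}; hence $u_l^\ast$ is increasing. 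Part 4 is then immediate: $u_l^\ast=0$ exactly when $\frac{\rho_x}{2}=B_1$, and since the feasible range is $B_1\le\frac{\rho_x}{2}\le C_1$, monotonicity places the minimum at this left endpoint.

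The main obstacle is the strict-increase claim on the branch $1<s<s_B$ in part 1, where $P'$ and $Q'$ decrease simultaneously and one must show $P'$ decreases faster. The natural tool, the $\beta$-projection onto $\overline{BC}$, is indeterminate precisely when $\overline{BC}$ has slope $-1$ (as in the initial triangle), so I would avoid it and instead work with the uniform chord expression $u_l=(s-1)(w_{B'}-w_{C'})$, substituting the explicit edge intersections and checking the sign of $\frac{d}{ds}u_l$ directly; this is the one genuinely computational step. Care is also needed to confirm that the degenerate outcomes (the cut through $A$ or $C$, or $\mathcal{v}_x^\ast=0$) collapse $u_l$ to $0$ consistently with the formula.
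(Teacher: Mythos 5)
Your proposal is correct, and its checkable claims all verify: the chord identity $Q'-P'=\tfrac{1}{2}(s-1)(w_{B'}-w_{C'})$, the ratio-invariance derivation of Equation (\ref{eq:leftunc}) from the collinearity of $V$, $B$, $C'$ together with $V_1=V_\gamma=\frac{\rho_x}{2}$ and $C'_\gamma=C_\gamma$, and the two sign identities $B_1-B_\gamma=\frac{(1-m_\gamma)(w_B+l_B)}{2(1+m_\gamma)}$ and $C_\gamma-B_\gamma=\frac{(1-m_\gamma)(w_B-w_A)}{1+m_\gamma}$ (the latter using $l_B-l_A=w_B-w_A$ from condition 2 of Definition \ref{def:encltr}). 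The skeleton matches the paper's---a one-parameter family of cutting lines through the pencil vertex, branch-wise monotonicity placing the maximizer at $B$, then monotonicity of the closed form in $\rho_x$---but the tactics differ. The paper parametrizes by the boundary point $S$ rather than the slope, first deriving the general closed form $u_l=2\frac{(\frac{\rho_x}{2}-S_1)(C_\gamma-S_\gamma)}{\frac{\rho_x}{2}-S_\gamma}$ (its Equation (\ref{eq:ludeS})) by explicit line intersections, and then proving parts 1 and 3 by discrete $\varepsilon$-perturbations of $S$ toward $B$ on each of $\overline{AB}$ and $\overline{BC}$, and of $\rho_x$, using the projection ordering (\ref{eq:seqlunc}); your derivative-sign argument for part 3 is equivalent in content, as the paper's comparison also reduces to $B_\gamma<B_1$. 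What your route buys is the affine-invariance derivation of part 2, which obtains Equation (\ref{eq:leftunc}) in one line where the paper grinds through coordinates; what it costs is that your one deferred step---strict increase of $u_l$ on $1<s<s_B$, where $P'$ and $Q'$ fall simultaneously---is exactly the computation the paper carries out (its $\overline{BC}$ perturbation case). The claim is true and your plan would close it, but note that your own tool closes it more cleanly: applying the same ratio invariance to the collinear triple $V$, $S$, $T$ at a \emph{general} $S$ recovers the paper's Equation (\ref{eq:ludeS}) immediately, after which comparing two positions of $S$ on $\overline{BC}$ is a short algebraic check; this also vindicates your avoidance of the $\beta$-projection, which neither the invariance argument nor Equation (\ref{eq:ludeS}) needs, so the slope-$(-1)$ indeterminacy never arises.
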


The proof of Lemma \ref{lem:leftunc} is presented in Appendix B.

The maximum left uncertainty for some $\rho_x$ in an enclosing triangle, described in Equation (\ref{eq:leftunc}), is a conic section, which by rearrangement of its terms can also be represented using the homogeneous form
\begin{align}
\left(
  \begin{array}{ccc}
    \rho_x & u_l^* & 1 \\
  \end{array}
\right)
\left(
  \begin{array}{ccc}
    0 & 1 & (B_\gamma-C_\gamma) \\
    1 & 0 & -B_\gamma \\
    (B_\gamma-C_\gamma) & -B_\gamma & -2B_1(B_\gamma-C_\gamma) \\
  \end{array}
\right)
\left(
  \begin{array}{c}
    \rho_x \\
    u_l^* \\
    1 \\
  \end{array}
\right)&=0\;\;.\label{eq:lunconic}
\end{align}

Conversely to left uncertainty, we call the resulting new uncertainty for $\mathcal{v}^*_x>0$ (Figure \ref{fig:encltr}, bottom middle) \emph{right uncertainty}. The derivation and optimization of right uncertainty as a function of $\rho_x$ is considerably more intricate than left uncertainty. The following result summarizes its features.

\begin{lemma}[Right Uncertainty]\label{lem:rightunc}
For any enclosing triangle with vertices $A$, $B$, $C$:
\begin{enumerate}
	\item For any $A_1=B_1\leq\frac{\rho_x}{2}\leq C_1$, the maximum possible right uncertainty, $u_r^*$, is
		\begin{align}
			u_r^*&=\frac{2s\sqrt{abcd(\frac{\rho_x}{2}-C_\gamma)(\frac{\rho_x}{2}-C_\beta)}+ad(\frac{\rho_x}{2}-C_\gamma) +bc(\frac{\rho_x}{2}-C_\beta)}{e}\;\;,\label{eq:rightunc}\\
			\intertext{with}
			s&=\sign(m_\beta-m_\gamma)\;\;,\nonumber\\
			a&=(1-m_\beta)\;\;,\nonumber\\
			b&=(1+m_\beta)\;\;,\nonumber\\
			c&=(1-m_\gamma)\;\;,\nonumber\\
			d&=(1+m_\gamma)\;\;,\nonumber\\
			c&=(d-b)=(m_\gamma-m_\beta)\;\;.\nonumber
		\end{align}
	\item This is a monotonically decreasing function of $\rho_x$,
	\item whose minimum value is zero, attained when $\frac{\rho_x}{2}=C_1$.
\end{enumerate}
\end{lemma}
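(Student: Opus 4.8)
The plan is to parametrize the family of possible outcomes by the slope of the nudged-value level set and reduce the maximization of right uncertainty to a single-variable calculus problem. Fix $\rho_x$ with $A_1=B_1\le\frac{\rho_x}{2}\le C_1$ and write $p=\frac{\rho_x}{2}$ for brevity. By the pencil Lemma \ref{lem:pencil}, every level set of the $\rho_x$-nudged problem passes through $V=\left(\frac{\rho_x}{2},\,-\frac{\rho_x}{2}\right)$, and a positive optimal value $\mathcal{v}^*_x>0$ corresponds to a level set through $V$ of slope $m=\frac{D-\mathcal{v}^*_x}{D+\mathcal{v}^*_x}\in(0,1)$. Following the bottom-middle construction of Figure \ref{fig:encltr}, the new vertices $A'$ and $C'$ are the intersections of this line with $\overline{AC}$ (slope $m_\gamma$) and $\overline{BC}$ (slope $m_\beta$), and the right uncertainty is $u_r=C'_1-A'_1$, which is the quantity to be maximized over the free parameter $m$.

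First I would obtain $u_r$ in closed form. Using that the $\gamma$-projection is constant along $\overline{AC}$ (so $A'_\gamma=C_\gamma$) and the $\beta$-projection is constant along $\overline{BC}$ (so $C'_\beta=C_\beta$), together with the projection identity $X_\zeta=\frac{m_\zeta w_X-l_X}{m_\zeta+1}$, I solve the two line intersections; writing $a,b,c,d$ as in the statement, the constant parts cancel and one arrives at
\begin{align*}
u_r(m)=\frac{1-m}{2}\left[\frac{b\,(p-C_\beta)}{m-m_\beta}-\frac{d\,(p-C_\gamma)}{m-m_\gamma}\right]\;\;.
\end{align*}
The cancellation of the constant term $p$ in each fraction is what renders the subsequent optimization tractable.

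Next I would maximize over $m$. Setting $\frac{d u_r}{dm}=0$ and clearing $(m-m_\beta)^2(m-m_\gamma)^2$, the two pairs of terms group using $1-m_\beta=a$ and $1-m_\gamma=c$, and the first-order condition collapses to the simple relation
\begin{align*}
c\,d\,(p-C_\gamma)\,(m-m_\beta)^2=a\,b\,(p-C_\beta)\,(m-m_\gamma)^2\;\;.
\end{align*}
Taking square roots linearizes this into two candidate roots for $m$; the maximizer is selected by the sign $s=\sign(m_\beta-m_\gamma)$, the other root being a minimizer or lying outside $(0,1)$. Substituting the chosen $m^*$ back into $u_r(m)$ and simplifying—here the cross term $\sqrt{ab(p-C_\beta)}\,\sqrt{cd(p-C_\gamma)}=\sqrt{abcd(p-C_\gamma)(p-C_\beta)}$ surfaces—yields the stated expression for $u_r^*$, which also admits a homogeneous conic rewriting analogous to Equation (\ref{eq:lunconic}).

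Finally I would establish parts 2 and 3. For the minimum, substituting $p=C_1$ and using the identities $C_1-C_\gamma=\frac{c\,(w_C+l_C)}{2d}$ and $C_1-C_\beta=\frac{a\,(w_C+l_C)}{2b}$ shows the three summands in the numerator of $u_r^*$ cancel exactly, giving $u_r^*=0$ at $\frac{\rho_x}{2}=C_1$; the same identities confirm the radicand is nonnegative on the admissible range. For monotonicity I would either differentiate the closed form and check $\frac{d u_r^*}{dp}<0$, or argue geometrically that increasing $p$ slides $V$ toward the slope-one projection of $C$, shrinking the region available to the new enclosing triangle and hence the worst-case right uncertainty monotonically to zero. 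The hard part throughout will be the sign bookkeeping: the enclosing-triangle conditions force $m_\gamma\in[0,1]$ and $m_\beta\le-1$, so that $b\le0$ and $abcd\le0$, and these signs must be tracked carefully both to select the maximizing root via $s$ and to verify the radicand stays nonnegative and the cancellation at $p=C_1$ goes through.
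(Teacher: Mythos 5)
Your proposal is sound and shares the paper's overall skeleton---reduce the worst case to a one-parameter optimization, solve a quadratic first-order condition, select the root via the sign $s$, and substitute back---but your choice of variable is genuinely different and buys cleaner algebra. The paper's Appendix C parametrizes by the abscissa $w_S$ of the intercept $S$ on $\overline{BC}$, starting from $u_r=2(S_1-T_1)=2\left(S_1-\frac{\rho_x}{2}\right)(C_\gamma-S_\gamma)/\left(\frac{\rho_x}{2}-S_\gamma\right)$, which leads to an unfactored quadratic $\mathcal{A}(w_S^*)^2+\mathcal{B}w_S^*+\mathcal{C}=0$, the full quadratic formula, and a second-derivative test to fix $s$. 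Your parametrization by the level-set slope $m$ (legitimate by Lemma \ref{lem:pencil}) instead yields a first-order condition that factors at once as $c\,d\,\left(\frac{\rho_x}{2}-C_\gamma\right)(m-m_\beta)^2=a\,b\,\left(\frac{\rho_x}{2}-C_\beta\right)(m-m_\gamma)^2$---I checked this grouping and it is correct---which linearizes under square roots; the two radicands are nonnegative because $cd\ge 0$, $ab=1-m_\beta^2\le 0$, and the ordering gives $\frac{\rho_x}{2}-C_\gamma\ge 0$, $\frac{\rho_x}{2}-C_\beta\le 0$. Your algebraic verification of the zero at $\frac{\rho_x}{2}=C_1$ via the identities $C_1-C_\gamma=\frac{c\,(w_C+l_C)}{2d}$ and $C_1-C_\beta=\frac{a\,(w_C+l_C)}{2b}$ also checks out (the numerator collapses to $(w_C+l_C)(s\,|ac|+ac)=0$ since $s=-\sign(ac)$), and it replaces the paper's shorter geometric argument that at $\frac{\rho_x}{2}=C_1$ the only admissible $S$ is $C$ itself.

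Three repairs are needed. First, a normalization slip: the paper measures uncertainty on the $\rho$ scale, $u_r=2(S_1-T_1)$, whereas your $u_r=C'_1-A'_1$ lives on the $\rho/2$ scale; your closed form $u_r(m)=\frac{1-m}{2}\left[\frac{b(p-C_\beta)}{m-m_\beta}-\frac{d(p-C_\gamma)}{m-m_\gamma}\right]$ is exactly half the paper's quantity, so back-substitution of $m^*$ produces half of Equation (\ref{eq:rightunc}). The maximizer is unaffected, but the factor of two must be restored for the statement to hold and for left and right uncertainties to be commensurable in Theorem \ref{th:minmax}. Second, your sign bookkeeping asserts $m_\beta\le-1$, but condition 6 of Definition \ref{def:encltr} only gives $|m_\beta|\ge 1$, and $m_\beta>1$ genuinely occurs (the near-degenerate triangle in Section 5 has $m_\beta=\frac{l_B}{l_B-w_C}>1$); fortunately your argument only ever uses $ab\le 0$, which holds in both cases, and $s=\sign(m_\beta-m_\gamma)$ selects the correct root either way, so you should state the hypothesis in that weaker, correct form. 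Third, part 2 is left as ``differentiate or argue geometrically'': the geometric sketch is not a proof, and direct differentiation of the closed form through the radical is delicate precisely because of these signs---the paper only gets through by first rewriting $u_r^*$ as a perfect square over $|e|$ (its Equation \ref{eq:rusq}), after which the sign of the derivative reduces to $C_1>\frac{\rho_x}{2}$, which holds by the projection ordering. Plan on that rewriting, or an equivalent device, rather than treating monotonicity as routine.
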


The maximum right uncertainty for some $\rho_x$ in an enclosing triangle, described by Equation (\ref{eq:rightunc}) is also a conic section, with homogeneous form

\begin{align}
\left(
  \begin{array}{ccc}
    \rho_x & u_r^* & 1 \\
  \end{array}
\right)
\left(
  \begin{array}{ccc}
    c & -(b+a) & -C_1\,c \\
    -(b+a) & c & (C_\beta\, a+ C_\gamma\, b) \\
    -C_1\, c & (C_\beta\, a+ C_\gamma\, b) & C_1^2\,c \\
  \end{array}
\right)
\left(
  \begin{array}{c}
    \rho_x \\
    u_r^* \\
    1 \\
  \end{array}
\right) &=0\;\;. \label{eq:runconic}
\end{align}

The criterion when choosing $\rho_i$, must be to minimize the largest possible uncertainty, left or right, for the next step. Given the features of both uncertainty functions, this is straightforward:

\begin{theorem}[\texttt{minmax} Uncertainty]\label{th:minmax}
\begin{align*}
\rho_i&=\argmin_{\rho_x}\max\left[u_l^*,u_r^*\right]\\
\intertext{is a solution to}
u_l^*&=u_r^*\;\;.
\end{align*}
\end{theorem}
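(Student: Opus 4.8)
The plan is to exploit the two monotonicity lemmas directly, so that the theorem reduces to an elementary observation about the pointwise maximum of an increasing and a decreasing function. First I would fix the enclosing triangle and regard both uncertainties as functions of $\rho_x$ on the admissible interval $B_1 \leq \frac{\rho_x}{2} \leq C_1$, which is non-degenerate precisely when $B_1 < C_1$ (the degenerate cases $B_1 = C_1$ correspond to immediate termination and collapse the uncertainty to a point, so the \texttt{minmax} step is never invoked there). By Lemma \ref{lem:leftunc}, $u_l^*$ is continuous, strictly increasing, and vanishes exactly at $\frac{\rho_x}{2} = B_1$; by Lemma \ref{lem:rightunc}, $u_r^*$ is continuous, strictly decreasing, and vanishes exactly at $\frac{\rho_x}{2} = C_1$. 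These four facts are all I need, and the explicit conic forms of Equations (\ref{eq:lunconic}) and (\ref{eq:runconic}) guarantee the required continuity and strictness.

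Next I would examine the difference $g(\rho_x) = u_l^*(\rho_x) - u_r^*(\rho_x)$. At the left endpoint $\frac{\rho_x}{2} = B_1$ we have $u_l^* = 0$ while $u_r^* > 0$ (since $u_r^*$ vanishes only at the right endpoint), so $g < 0$; symmetrically $g > 0$ at the right endpoint. Because $g$ is the difference of a strictly increasing and a strictly decreasing function, it is itself strictly increasing and continuous, so by the intermediate value theorem it has a \emph{unique} zero $\rho_i$ in the interior of the interval. This $\rho_i$ is, by construction, the unique point at which $u_l^* = u_r^*$.

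Finally I would identify $\rho_i$ with the minimizer of $f(\rho_x) = \max\left[u_l^*(\rho_x), u_r^*(\rho_x)\right]$. For $\rho_x < \rho_i$ we have $g < 0$, hence $f = u_r^*$, which is strictly decreasing there; for $\rho_x > \rho_i$ we have $g > 0$, hence $f = u_l^*$, which is strictly increasing there. Thus $f$ strictly decreases up to $\rho_i$ and strictly increases afterward, so its unique minimizer is exactly $\rho_i$, which establishes that $\argmin_{\rho_x}\max\left[u_l^*, u_r^*\right]$ is the solution of $u_l^* = u_r^*$.

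The argument itself is short; the only points demanding care are the strictness claims that make the crossing (and hence the minimizer) unique, together with the endpoint sign evaluations, both of which rest on the non-degeneracy assumption $B_1 < C_1$. The main obstacle, therefore, is not in this theorem but upstream: the substantive work lies in establishing the two monotonicity lemmas and their conic representations, after which the \texttt{minmax} conclusion follows almost immediately. I would also remark that those homogeneous conic forms, though unnecessary for the present proof, are precisely what render the equation $u_l^* = u_r^*$ solvable in closed form for the actual gain update, which is the content of the final proposition.
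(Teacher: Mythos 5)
Your proposal is correct and follows essentially the same route as the paper, whose proof is a one-line version of your argument: left and right uncertainty are monotonically increasing and decreasing in $\rho_x$ with common minimum value zero, so the pointwise maximum is minimized where they are equal. Your elaboration (endpoint sign checks, the intermediate value theorem, and the strictness needed for uniqueness of the crossing) merely makes explicit the details the paper leaves implicit.
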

\begin{proof}
Since maximum left and right uncertainty are, respectively, monotonically increasing and decreasing functions of $\rho_x$, and both have the same minimum value, zero, the maximum between them is minimized when they are equal.
\end{proof}

Thus, in principle, the problem of choosing $\rho_i$ in order to minimize the possible uncertainty of the next iteration reduces to making the right hand sides of Equations (\ref{eq:leftunc}) and (\ref{eq:rightunc}) equal and solving for $\rho_x$. Although finding an analytical expression for this solution seems intractable, clearly any algorithm for root finding can be readily used here, particularly since lower and upper bounds for the variable are known (i.e., $A_1\leq\frac{\rho_x}{2}\leq C_1$).

However, even this is not necessary. Since both maximum and minimum left uncertainty are conic sections, and the homogeneous form for both is known, their intersection is straightforward to find in $O(1)$ time, following a process described in detail by \citet{perwass2008geometric}. This method only requires solving a $3\times3$ eigenproblem, so the time required to perform the computation of $\rho_i$ from the $w-l$ coordinates of the vertices of the current enclosing triangle is negligible. For completeness, the intersection process is presented in Appendix D, below.

Algorithm \ref{alg:optnudg} summarizes the optimal nudging approach.

\algsetblock[Name]{Repeat}{Stop}{4}{0.5cm}
\begin{algorithm}[H]
	\caption{Optimal Nudging}
	\label{alg:optnudg}
	\begin{algorithmic}[0]
		\State Set Bertsekas split
		\State Initialize ($\pi$ and $H$ or $Q$)
		\State Estimate $D$
		\State Initialize ($A=(0,0)$, $B=(\frac{D}{2},\frac{D}{2})$, $C=(0,D)$)
		\Repeat
		\State Compute $\rho_i$ (conic intersection)
		\State Set reward scheme to $(r-\rho_i k)$
		\State Solve by \textbf{any} RL method
		\State Compute from $\mathcal{v}_i^*$ the coordinates of the new enclosing triangle
		\State \textbf{until} Zero-crossing termination or $H^\pi(s_I)=0$
	\end{algorithmic}
\end{algorithm}

In the following two sections we will discuss the computational complexity of this method and present some experiments of its operation.

\section{Complexity of Optimal Nudging.}
In this section, we are interested in finding bounds on the number of calls to the ``black box'' reinforcement learning solver inside the loop in Algorithm \ref{alg:optnudg}. It is easy to see that therein lies the bulk of computation, since the other steps only involve geometric and algebraic computations that can be done in constant, negligible time. Moreover the type of reinforcement learning performed (dynamic programming, model-based or model free) and the specific algorithm used will have their own complexity bounds and convergence guarantees that are, in principle, transparent to optimal nudging.

In order to study the number of calls to reinforcement learning inside our algorithm, we will start by introducing a closely related variant and showing that it immediately provides a (possibly loose) bound for optimal nudging.

\algsetblock[Name]{Repeat}{Stop}{4}{0.5cm}
\begin{algorithm}[H]
	\caption{$\alpha$ Nudging}
	\label{alg:alpnudg}
	\begin{algorithmic}[0]
		\State Set Bertsekas split
		\State Set $0<\alpha\leq1$
		\State Initialize ($\pi$ and $H$ or $Q$)
		\State Initialize ($A=(0,0)$, $B=(\frac{D}{2},\frac{D}{2})$, $C=(0,D)$)
		\Repeat
		\State Set $\frac{\rho_i}{2}=(1+\alpha)B_1+\alpha C_1$
		\State Set reward scheme to $(r-\rho_i k)$
		\State Solve by any RL method
		\State Determine from $\mathcal{v}_i^*$ the coordinates of the new enclosing triangle
		\State \textbf{until} Zero-crossing termination or $H^\pi(s_I)=0$
	\end{algorithmic}
\end{algorithm}

Consider the ``$\alpha$-nudged'' Algorithm \ref{alg:alpnudg}. In it, in each gain-update step the nudging is set a fraction $\alpha$ of the interval between its current bounds, for a fixed $\alpha$ throughout. An easy upper bound on the reduction of the uncertainty in an $\alpha$-nudging step is to assume that the largest among the whole interval between $B_1$ and $\frac{\rho_i}{2}$, that is, the whole set of possible left uncertainty, or the whole interval between $\frac{\rho_i}{2}$ and $C_1$, that is, the complete space of possible right uncertainty, will become the uncertainty of the next step.

Thus, between steps, the uncertainty would reduce by a factor of $\hat{\alpha}=\max\left[\alpha,1-\alpha\right]$. Since the initial uncertainty has length $D$, it is easy to see that to bound the uncertainty in an interval of length at most $\varepsilon$ requires a minimum of 
\begin{align*}
n&\geq\frac{-1}{\log\hat{\alpha}}\log\left(\frac{D}{\varepsilon}\right)
\end{align*}
calls to reinforcement learning. Consequently, for any $\alpha$, $\alpha$-nudging has logarithmic complexity, requiring the solution of $O \left( \log \left( \frac{D}{\varepsilon} \right) \right)$ cumulative MDPs. Furthermore the constant term $\frac{-1}{\log\hat{\alpha}}$ is smallest when $\hat{\alpha}=\alpha=0.5$.

Therefore, setting $\alpha=0.5$ ensures that the uncertainty range will reduce at least in half between iterations. This is obviously a first bound on the complexity of optimal nudging: since Algorithm \ref{alg:optnudg} is for practical purposes adaptively adjusting $\alpha$ between iterations and is designed to minimize uncertainty, the Algorithm with $\alpha=0.5$ can never outperform it. The remaining question is whether the logarithmic bound is tight, that is, if for some enclosing triangle, the gain update is the midpoint of the uncertainty range and the best possible reduction of uncertainty is in half.

This turns out to be \emph{almost} the case. Consider the enclosing triangle with vertices $A=(0,0)$, $B=(l_B,l_B)$, and $C=(w_C,0)$, for a small value of $w_C$. Further, suppose that the gain-update step of the optimal nudging Algorithm \ref{alg:optnudg} finds a $\rho_i$ that can also be expressed as an adaptive $\alpha$ of the form $\frac{\rho_i}{2}=(1-\alpha_i)B_1+\alpha_iC_1$. Through direct substitution, the following values in the expressions for left and right uncertainty can be readily found:
\begin{align*}
m_\gamma&=0\;\;,\\
m_\beta&=\frac{l_B}{l_B-w_C}\;\;,\\
B_1&=0\;\;,\\
C_1&=\frac{w_C}{2}\;\;,\\
B_\gamma&=-l_B\;\;,\\
s&=1\;\;,\\
a&=\frac{2l_B-w_C}{l_B-w_C}\;\;,\\
b&=-\frac{w_C}{l_B-w_C}\;\;,\\
c&=-\frac{2l_B}{l_B-w_C}\;\;,\\
C_\beta&=\frac{l_Bw_C}{2l_B-w_C}\;\;,\\
C_\gamma&=0\;\;.
\end{align*}
Making the right hand sides of the expressions for left, Equation (\ref{eq:leftunc}), and right uncertainty, Equation (\ref{eq:rightunc}) equal, in order to find the minmax gain update, and then substituting, we have
\begin{align*}
\frac{(\frac{\rho_x}{2}-B_1)}{(\frac{\rho_x}{2}-B_\gamma)}\,(C_\gamma - B_\gamma) &= \frac{2s\sqrt{ab(\frac{\rho_x}{2}-C_\beta)(\frac{\rho_x}{2}-C_\gamma)}+a(\frac{\rho_x}{2}-C_\beta) +b(\frac{\rho_x}{2}-C_\gamma)}{c}\;\;,\\
%\frac{\frac{\alpha_iw_C}{2}l_B}{\frac{\alpha_iw_C}{2}+l_B} &= \frac{\frac{2l_B-w_C}{l_B-w_C} \left(\frac{\alpha_iw_C}{2}-\frac{l_Bw_C}{2l_B-w_C} \right) - \frac{w_C}{l_B-w_C}\frac{\alpha_iw_C}{2} + 2\sqrt{\frac{-w_C(2l_B-w_C)}{(l_B-w_C)^2}\left(\frac{\alpha_iw_C}{2}-\frac{l_Bw_C}{2l_B-w_C} \right)\frac{\alpha_iw_C}{2}}}{-\frac{2l_B}{l_B-w_C}}\;\;,\\
%\frac{\alpha_iw_Cl_B}{\alpha_iw_C+2l_B}&=\frac{(2l_B-w_C) \left( \frac{2\alpha_iw_Cl_B-\alpha_iw_C^2-2w_Cl_B}{2(2l_B-w_C)} \right) - \frac{\alpha_iw_C^2}{2} + 2\sqrt{-\frac{\alpha_iw_C^2}{2}(2l_B-w_C) \left( \frac{2\alpha_iw_Cl_B-\alpha_iw_C^2-2w_Cl_B}{2(2l_B-w_C)} \right)}}{-2l_B}\;\;,\\
%\frac{\alpha_iw_Cl_B}{\alpha_iw_C+2l_B}&= \frac{w_Cl_B(1-\alpha_i) + \alpha_iw_C^2 + w_C\sqrt{\alpha_i(2w_Cl_B(1-\alpha_i)+\alpha_iw_C^2)}}{2l_B}\;\;,\\
\frac{\alpha_il_B}{\alpha_iw_C+2l_B}&= \frac{l_B(1-\alpha_i) + \alpha_iw_C + \sqrt{\alpha_iw_C(2l_B(1-\alpha_i)+\alpha_iw_C)}}{2l_B}\;\;.
\end{align*}
Making $w_C\rightarrow0$, to find $\alpha_i$,
\begin{align*}
\frac{\alpha_il_B}{2l_B}&=\frac{(1-\alpha_i)l_B + 0 + 0}{2l_B}\;\;,\\
\alpha_i&=1-\alpha_i\;\;,\\
\alpha_i&=\frac{1}{2}\;\;.
\end{align*}
Thus, for the iteration corresponding to this enclosing triangle as $w_C$ tends to zero, optimal nudging in fact approaches $\alpha$-nudging with $\alpha_i=0.5$. However, the maximum possible resulting uncertainty reduction for this case is slightly but strictly smaller than half,
\begin{align*}
u_r^*=u_l^*&=\frac{\alpha_iw_Cl_B}{\alpha_iw_C+2l_B}= \frac{\alpha_iw_C}{\alpha_i+\frac{2l_B}{w_C}}\;\;,\\
%&=\frac{\frac{1}{2}w_Cl_B}{\frac{1}{2}w_C+2l_B}=\frac{w_Cl_B}{w_C+4l_B}\;\;,\\
&=\frac{w_C}{4+\frac{w_C}{l_B}} < \frac{w_C}{4}\;\;.
\end{align*}

Hence, the same bound applies for $\alpha$ and optimal nudging, and the number of calls required to make the uncertainty interval smaller than $\varepsilon$ is
\begin{align*}
n=O\left(\log\left(\frac{D}{\varepsilon}\right)\right)\;\;.
\end{align*}

A number of further qualifications to this bound are required, however. Observe that, at any point during the run of optimal nudging, after setting the gain $\rho_i$ to some fixed value and solving the resulting cumulative-reward task, any of the two termination conditions can be met, indicating that the global SMDP has been solved. The bound only considers how big the new uncertainty range can get \emph{in the worst case} and \emph{for the worst possible enclosing triangle}. Thus, much faster operation than suggested by the ``reduction-of-uncertainty-in-half'' bound can be expected.

To illustrate this, we sampled five million valid enclosing triangles in the $w-l$ space for $D=1$ and studied how much optimizing $\rho_i$ effectively reduces the uncertainty range. For the sampling procedure, to obtain an enclosing triangle, we first generated an uniformly sampled value for $l_B+w_B$ and thereon, always uniformly, in order, $l_B-w_B$, $l_A+w_A$, $l_C+w_C$, and $l_C-w_C$. From these values we obtained the $A$, $B$, and $C$ coordinates of the triangle vertices and, from them, through Equations (\ref{eq:leftunc}) and (\ref{eq:rightunc}), the location of the gain that solves the minmax problem and the corresponding maximum new uncertainty.

\begin{figure}[!h]
	\begin{center}
		\includegraphics{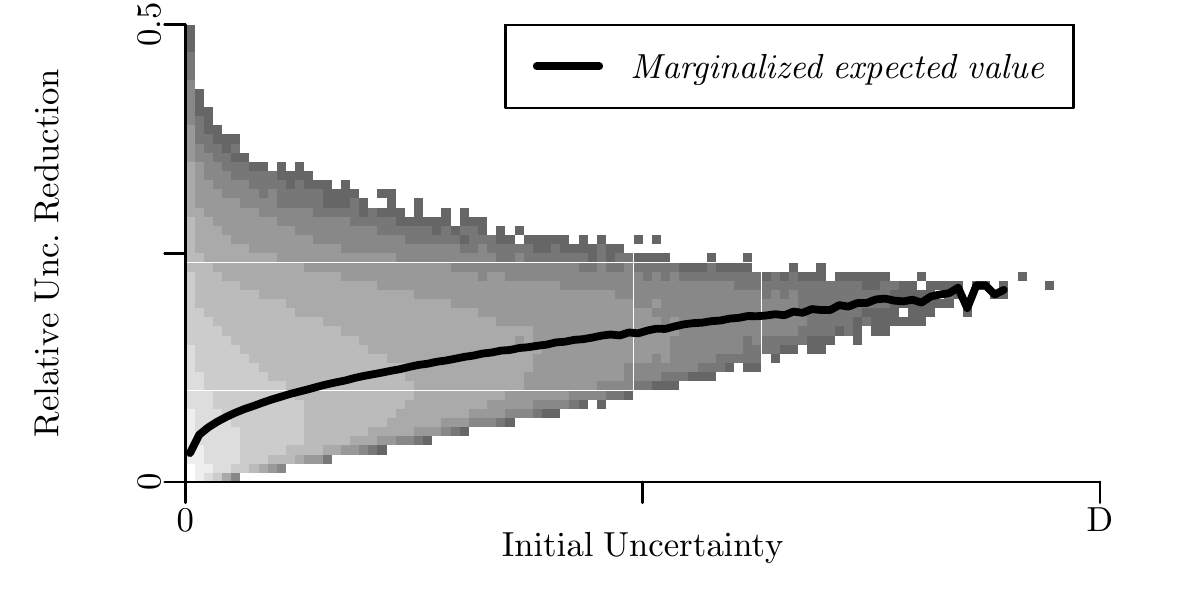}
	\end{center}
	\caption{Distribution and expectation of relative uncertainty reduction as a function of initial uncertainty, for a sample of five million artificial enclosing triangles.}
	\label{fig:5mega}
\end{figure}

Figure \ref{fig:5mega} shows an approximation to the level sets of the distribution of the relative uncertainty reduction and its marginal expectation over the length of the initial uncertainty. After the bound computed above, one can expect that for some triangles with a small initial uncertainty, namely those close to the origin, the uncertainty will reduce in just about half, as well as for all triangles to have reductions below the 0.5 line. This is indeed the case, but, for the sample, the expected reduction is much larger, to, in average, below 20\% or less of the original uncertainty. Thus, although obviously the actual number of calls to the reinforcement learning algorithm inside the loop in optimal nudging is task-dependent, gain-uncertainty can be expected to reduce in considerably more tan half after each call.

Moreover, nothing bars the use of \emph{transfer learning} between iterations. That is, the optimal policy found in an iteration for a set $\rho_i$, and its estimated value, can be used as the starting point of the learning stage after setting $\rho_{i+1}$, and if the optimal policies, or the two gains are \emph{close}, one should expect the learning in the second iteration to converge much faster than having started from scratch. As a consequence of this, not only few calls to the ``black box'' learning algorithm are required (typically even fewer than predicted by the logarithmic bound), but also as the iterations progress these calls can yield their solutions faster.

Although transfer in reinforcement learning is a very active research field, to our knowledge the problem of transfer between tasks whose only difference lies on the reward function has not been explicitly explored. We suggest that this kind of transfer can be convenient in practical implementations of optimal nudging, but leave its theoretical study as an open question.

\section{Experiments}
In this section we present a set of experimental results after applying optimal nudging to some sample tasks. The goal of these experiments is both to compare the performance of the methods introduced in this paper with algorithms from the literature, as well as to study certain features of the optimal nudging algorithm itself, such as its complexity and convergence, its sensitivity to the unichain condition, and the effect of transfer learning between iterations.

\subsection{Access Control Queuing Revisited}
Recall the motivating example task from Section \ref{ssec:exmpl}. Figure \ref{fig:optnudr} shows that a simple implementation of optimal nudging is approximately as good as the best finely tuned version of R-learning for that task, while, as discussed having less parameters and updates per step, as well as room for speed improvement in other fronts.

In this section we return to that task to explore the effect of the $D$ parameter. From Definition \ref{def:D}, remember that $D$ is a bound on unsigned, unnudged reward, and thus a possibly loose bound on gain. Since all the rewards in this task are non-negative, in the optimal nudging results presented Section \ref{ssec:exmpl}, $D$ could be approximated by setting the gain to zero and finding the policy of maximum expected reward before returning to the recurrent state with all servers occupied. The first half million samples of the run were used for this purpose. The exact value of $D$ that can be found in this task by this method (using dynamic programming, for example) is the tightest possible, that is, that value would be $D=\max_{\pi\in\Pi} \mathcal{v}^\pi$, in Definition \ref{def:D}.

More generally, for tasks with rewards of both signs, a looser approximation of $D$ can also be computed by setting the gain to zero and solving for maxim expected gain from the recurrent state with rewards $\hat{r}=|r|$ for all possible transitions. Notice that this process of $D$-approximation would add one call (that is, $O(1)$ calls) to the complexity bounds found in the preceding Section, and would thus have a negligible effect on the complexity of the algorithm.

\begin{figure}[!h]
	\begin{center}
		\includegraphics{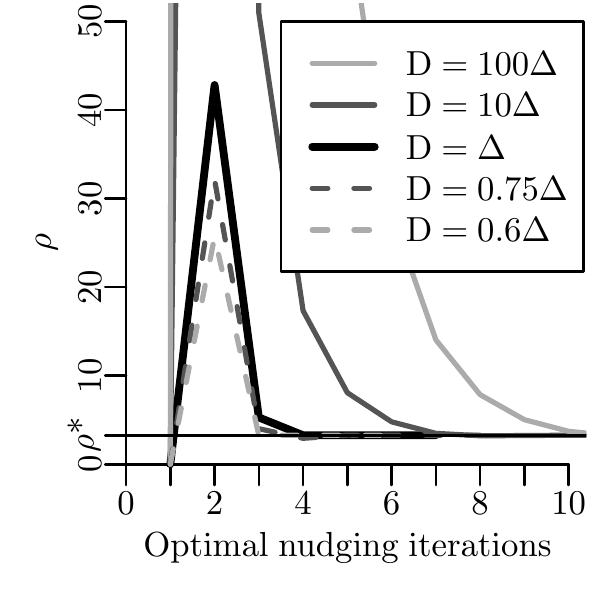}
		\includegraphics{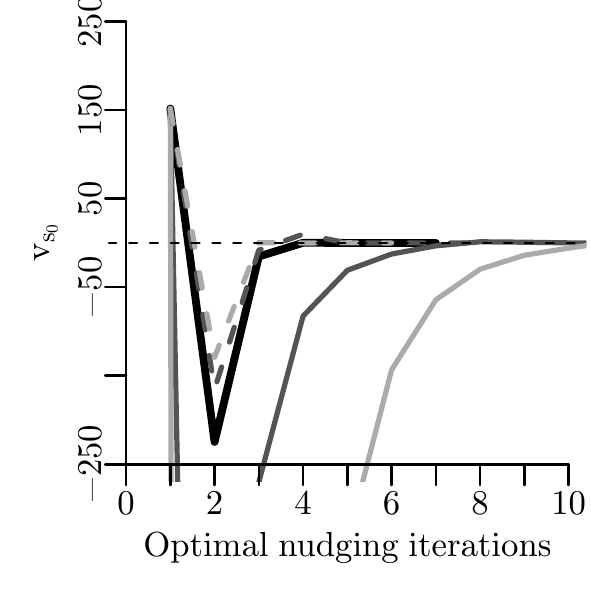}
	\end{center}
	\caption{Effect of varying $D$ on convergence. Left, to the optimal gain. Right, to the termination condition.}
	\label{fig:nudgD}
\end{figure}

For this experiment, let $\Delta$ be value of the zero-gain cumulative-optimal policy from the recurrent state, found exactly using dynamic programming ($\Delta=151.7715$). Figure \ref{fig:nudgD} shows, in solid black lines, the performance of optimal nudging when $D=\Delta$. As expected, in a very small number of iterations--around four--the current approximation of the gain (left plot) becomes nearly optimal and the nudged value of the recurrent state (right plot) approaches zero, the termination condition.

The solid grey lines in both plots show the effect of overestimating $D$ to ten and a hundred times $\Delta$. Since the number of calls to the black-box reinforcement learning algorithm inside the optimal nudging loop grows with the logarithm of $D$, the number of iterations required to achieve the same performance as above increases, in about three for each tenfold increase in $D$.

On the other hand, the dashed grey lines in the plots show the effect of underestimating $D$. Since, for this case, $\Delta$ is the smallest valid value for $D$, setting the parameter below can cause the loss of theoretical guarantees for the method. However, for some values of $D<\Delta$ the method still works, resulting in slightly faster convergence to the optimal gain and the termination condition. This is the case as long as the mapped policy cloud remains contained inside the triangle in the resulting $w-l$ space. For values of $D<0.6\Delta$, some of the policies visited by the algorithm have a negative $w$ or $l$ and our implementation of the method fails due to negative arguments to the root in the right uncertainty Equation (\ref{eq:rightunc}).

Thus, although overestimating $D$ naturally causes the algorithm to converge in a larger number of iterations, this increase is buffered by the logarithmic complexity of optimal nudging with respect to that parameter. Conversely, underestimating $D$ below its tightest bound can accelerate learning somewhat, but there is a cost in loss of theoretical guarantees and the eventual failure of the method.

\subsection{The Bertsekas Experimental Testbed}
For this second set of experiments, we consider a number of tasks from the paper by \cite{bertsekas1998new}, that introduces the state-space splitting process that we use for optimal nudging. That paper presents two versions of dynamic-programming stochastic-shortest-path updates, SSP-Jacobi and SSP-Gauss-Seidel.

We consider the first two kinds of randomly generated tasks, called simply ``Problems of Table 1 [or 2]'' in the paper; respectively ``T1'' and ``T2''' hereon. In both types, each task has $n$ states, the reward\footnote{In the context of the original paper, taking actions causes in a positive \emph{cost} and the goal is to minimize average cost. For consistency with our derivation, without altering algorithmic performance, we instead maximize positive reward.} of taking each action $(s,a)$ is randomly selected from the range $(0,\,n)$ according to a uniform distribution, for each pair $(s,a)$, the states $s'$ for which the transition probability is non-zero are selected according to some rule, and all non-zero transition probabilities are drawn from the uniform distribution in $(0,\,1)$ and normalized. In all cases, to ensure compliance of the unichiain condition, we set the $n$-th state as recurrent for the Bertsekas split and, before normalization, add small transition probabilities ($p=10^{-6}$) to it, from all states and actions. All the results discussed below are the average of five runs (instead of Bertsekas's two runs) for each set up.

The results listed below  only count the number of \emph{sweeps} of the methods. A sweep is simply a pass updating the value of all states. The SSP methods perform a continuous run of sweeps until termination, while each optimal nudging iteration is itself comprised of a number of sweeps, which are then added to determine the total number for the algorithm.

It is worth noting that for each sample, that is, for the update of the value of each state, while the SSP methods also update two bounds for the gain and the gain itself, optimal nudging doesn't perform any additional updates, which means that the nudged iterations are, from the outset, considerably faster in all cases.

The T1 tasks have only one action available per state, so obviously there is only one policy per task and the problem is policy evaluation rather than improvement. As in the source paper, we consider the cases with $n$ between 10 and 50. The transition probability matrix is sparse and unstructured. Each transition is non-zero with probability $q$ and we evaluate the cases with $q\in\{0.5,0.1,0.05\}$. As Bertsekas notes, there is a large variance in the number of iterations of either method in different tasks generated from the same parameters, but their relative proportions are fairly consistent.

The implementations of optimal nudging for the T1 tasks mirror the Jacobi and Gauss-Seidel updates from the original paper for the reinforcement learning step, and for both of them include two cases: \emph{raw} learning in which after each gain update the values of all states are reset to zero and the change-of-sign termination condition is ignored, and learning with transfer of the latest values between gain updates and termination by zero crossing. Preliminary experiments showed that for this last configuration, any significant reduction in the number of sweeps comes from the termination condition.

\begin{table}[t]
	\begin{center}
		\begin{small}
			\begin{tabular}{llrrrr}
				\toprule
				& & \multicolumn{2}{c}{Jacobi} & \multicolumn{2}{c}{Gauss-Seidel}\\
				$n$ & $q$ & ON/SSP & ONTS/SSP & ON/SSP & ONTS/SSP\\
				\midrule
				10 & 0.05 & 3.09 &0.79 &4.11 &0.94\\
				10 & 0.1  & 3.30 &0.49 &3.86 &0.61\\
				10 & 0.5  & 20.94 &2.56 & 8.92 &1.20\\
				\midrule
				20 & 0.05 & 1.66 &0.34 &2.02 &0.43\\
				20 & 0.1  &17.45 &2.60 &5.46 &0.85\\
				20 & 0.5  &27.54 &3.67 &18.32 &2.70\\
				\midrule
				30 & 0.05 &3.34 &0.53 &2.29 &0.39\\
				30 & 0.1  &5.16 &0.75 &5.91 &0.73\\
				30 & 0.5  &27.62 &4.11 &12.82 &2.04\\
				\midrule
				40 & 0.05 &3.92 &0.62 &3.96 &0.61\\
				40 & 0.1  &19.74 &3.01 &7.81 &1.27\\
				40 & 0.5  &29.00 &4.47 &15.57 &2.55\\
				\midrule
				50 & 0.05 &7.37 &1.16 &3.43 &0.58\\
				50 & 0.1  &38.35 &6.20 &19.11 &3.26\\
				50 & 0.5  &32.65 &5.20 &17.53 &2.97\\
				\bottomrule
			\end{tabular}
		\end{small}
		\caption{T1 tasks. Comparison of the ratio of sweeps of optimal nudging (ON), and optimal nudging with transfer and zero-crossing checks (ONTS) over SSP methods. Averages over five runs.}
		\label{tab:BksT1}
	\end{center}
\end{table}

Table \ref{tab:BksT1} summarizes the ratio of the number of sweeps required by optimal nudging over those required by the SSP algorithms. The effect of considering the termination by zero-crossing is striking. Whereas the raw version of optimal nudging can take in average up to 40 times as many sweeps altogether to reach similar results to SSP, including the change-of-sign condition never yialds a ratio higher 10. Considering that inside the sweeps each nudged update is faster, this means that in most cases for this task optimal nudging has a comparable performance to the SSP methods. Moreover, in many cases, remarkably in the more sparse--most difficult--ones with the smallest $q$, optimal nudging requires in average less sweeps than the other algorithms.

The T2 tasks also have only one action available from each state, for $n$ between 10 and 50, but the transitions are much more structured; the only non-zero transition probabilities from a state $i\in\{1,\,n\}$ are to states $i-1$, $i$, and $i+1$ (with the obvious exceptions for states 1 and $n$). 

Once more for this task we compare the performance of the raw and change-of-sign condition versions of optimal nudging with the SSP methods for both Jordan and Gauss-Seidel updates.

\begin{table}[h]
	\begin{center}
		\begin{small}
			\begin{tabular}{lrrrr}
				\toprule
				& \multicolumn{2}{c}{Jacobi} & \multicolumn{2}{c}{Gauss-Seidel}\\
				$n$  & ON/SSP & ONTS/SSP & ON/SSP & ONTS/SSP\\
				\midrule
				10 & 8.78 & 1.55 & 7.86 & 1.45\\
				20 & 9.04 & 1.79 & 8.23 & 1.69\\
				30 & 5.10 & 0.98 & 4.68 & 0.94\\
				40 & 3.38 & 0.68 & 3.27 & 0.69\\
				50 & 3.56 & 0.79 & 3.42 & 0.78\\
				\bottomrule
			\end{tabular}
		\end{small}
		\caption{T2 tasks. Comparison of the ratio of sweeps of optimal nudging (ON), and optimal nudging with transfer and zero-crossing checks (ONTS) over SSP methods. Averages over five runs.}
		\label{tab:BksT2}
	\end{center}
\end{table}

Table \ref{tab:BksT2} summarizes for these the ratio of the number of sweeps required by optimal nudging over those required by the SSP algorithms. In this case, even for the raw version of optimal nudging, it never takes over 10 times as many sweeps as SSP and, notably, the ratio reduces consistently as size of the problem grows. Once the zero-crossing condition is included, both optimal nudging versions become much faster, requiring less sweeps than the SSP methods for the largest tasks.

\subsection{Discrete Tracking}
The final experiment compares the performance of optimal nudging and R-learning on a problem with more complex dynamics and larger action (and policy) space. This task is a discretization of the ``Tracking'' experiment discussed in the paper by \citet{van2007reinforcement}.

\begin{figure}[!h]
	\begin{center}
		\includegraphics[width=6cm]{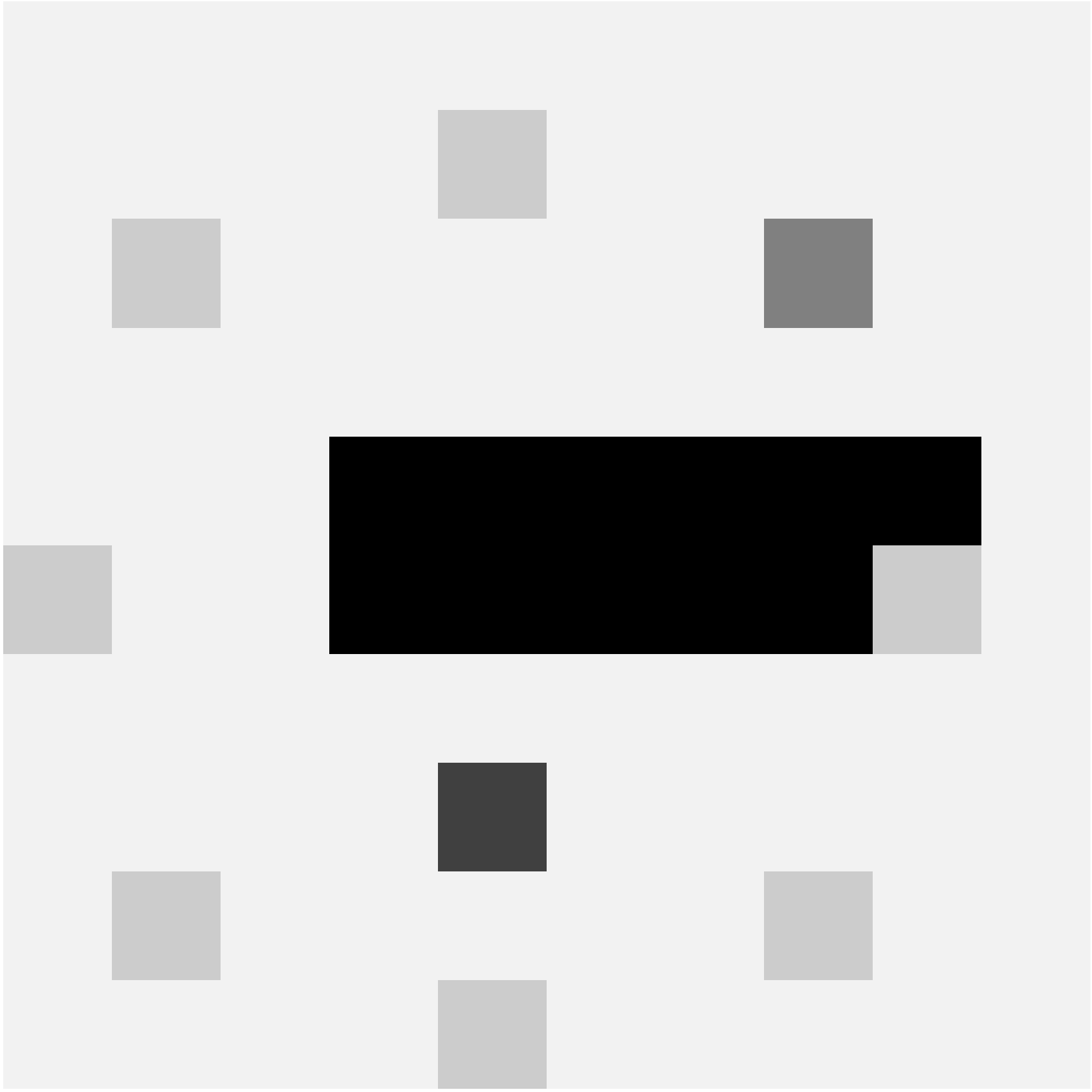}
	\end{center}
	\caption{Environment of the tracking task. The target moves anticlockwise from one light grey cell to another. The agent (dark grey) can move up to four cells in each direction, without crossing the black obstacle or exiting the grid. The goal is for the agent to follow the target as closely as possible, moving as little as possible.}
	\label{fig:trgrid}
\end{figure}

In discrete tracking, inside a $10\times10$ grid with a central obstacle, as pictured in Figure \ref{fig:trgrid}, a moving target follows a circular path, traversing anticlockwise the eight cells shaded in light grey. The goal of the agent is, at each step, to minimize its distance to the target. While the target can cross the obstacle an indeed one of its positions is \emph{on} it, the agent can't pass over and must learn to surround it.

At each decision epoch, the agent's actions are moving to any cell in a $9\times9$ square centred on it, so it can move at most four cells in each direction. This is sufficient to manoeuvre around the obstacle and for the agent to be able to keep up with the target. In case of collision with the obstacle or the edges of the grid, the agent moves to the valid cell closest to the exit or collision point.

After taking an action, moving the agent from state $s(s_x,s_y)$ to $s'(s_x',s_y')$, and the target to $t'(t_x',t_y')$, the reward is
\begin{align*}
r=\frac{1}{\left(1+(s_x'-t_x')^2+(s_y'+t_y')^2\right)^2}\;\;.
\end{align*}
Thus, if the agent is able to move to the exact cell that the target will occupy, the reinforcement is 1, and it will decay quickly the further agent and target are apart.

On the other hand, the cost of each action grows with the distance moved,
\begin{align*}
c=1+|s_x-s_x'|+|s_y-s_y'|\;\;.
\end{align*}
Notice that our assumption that all costs are larger than or equal to one holds for this model, and that the policy that minimizes action costs must require the agent to stand still at each state.

As described, this task isn't unichain and it doesn't have a recurrent state. Indeed, the policy of staying in place has one recurrent set for each state in the task, so the unichain condition doesn't hold, and it is straightforward to design policies that bring and keep the agent in different cells, so their recurrent sets have no (recurrent) states in common.

To overcome this, we arbitrarily set as recurrent the state with both the target and the agent in the cell above and to the right of the bottom-left corner of the grid. From any state with the target in its position just before that (light grey cell near the middle left in Figure \ref{fig:trgrid}), the agent moves to the recurrent state with probability 1 for any action. Observe that making the transition probability to this recurrent state smaller than one would only increase the value of $D$, but would not change the optimal policies, or their gain/value.

We average 10 runs of two algorithms to solve this task, optimal nudging and two different R-learning set-ups. In all cases, $\varepsilon$-greedy action selection is used, with $\varepsilon=0.5$. In order to sample the state space more uniformly, every 10 moves the state is reset to a randomly selected one.

For optimal nudging, we use Q-learning with a learning rate $\alpha=0.5$ and compute the new gain every 250000 samples. $Q$ is initialized in zero for all state-action pairs. No transfer of $Q$ is made between nudging iterations. Although an upper bound on $D$ is readily available ($D<8$, which would be the value of visiting all the same cells as the target, including the one inside the obstacle), we still use the first batch of samples to approximate it. Additionally, although we keep track of the zero-crossing condition, we opt not to terminate the computation when it holds, and allow the algorithm to observe 1.5 million samples.

For R-learning, we set the learning rate $\beta=10^{-4}$, and $\alpha=0.5$ (``R-learning 1'') and $\alpha=0.01$ (``R-learning 2''). The gain is initialized in zero an all other parameters are inherited from the optimal nudging set-up.

\begin{figure}[!h]
	\begin{center}
		\includegraphics{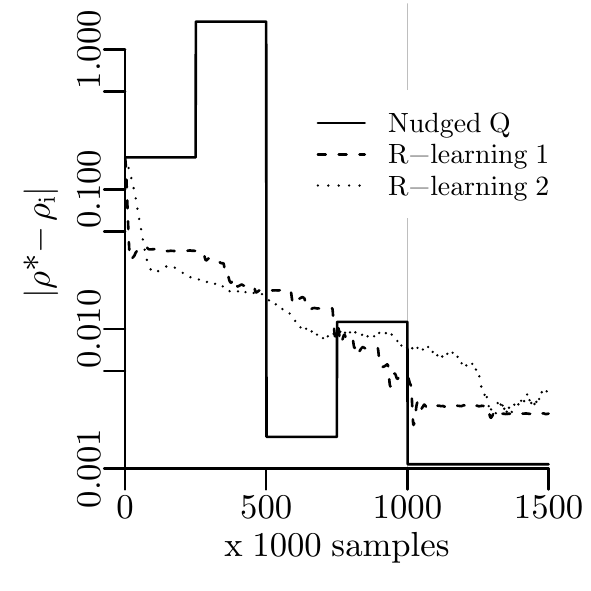}
	\end{center}
	\caption{Approximation to the optimal gain. The vertical grey line indicates that the zero-crossing termination condition has already been met.}
	\label{fig:trgt}
\end{figure}

Figure \ref{fig:trgt} shows the performance of the algorithms, averaged over the 10 runs, to approximate the optimal gain. It is evident that after one million samples optimal nudging converges to a closer value to $\rho^*$ than either R-learning set-up, but in fact the plot describes the situation only partially and our algorithm performs even better than initially apparent.

For starters, recall that optimal nudging has one update less per sample, that of the gain, so there is a significant reduction in the computation time even when the number of samples is the same. Moreover, in six out of the ten optimal nudging runs $\rho^*$ was actually found exactly, up to machine precision. This is something that R-learning couldn't accomplish in any of the experimental runs, on the contrary, while optimal nudging always found the gain-optimal policy, in about half of the cases R-learning (both set-ups) found a policy that differed from the optimal in at least one state.

Additionally, the grey vertical line indicates the point at which the zero-crossing termination condition had been met in all optimal nudging runs, so, in fact for this particular experiment optimal nudging requires only two thirds of the samples to find an approximation to the optimal gain with only about one third of the error of R-learning.

\section{Conclusions and Future Work}
We have presented a novel semi-Markov and average-reward reinforcement learning method that, while belonging to the generic class of algorithms described by Algorithm \ref{alg:generic}, differs in one crucial aspect, the frequency and mode of gain updates.

While all methods described in the preceding literature update the gain after each \emph{sample}, that is after the update of the gain-adjusted value of each state for dynamic programming methods or after taking each action (or greedy action) in model-based or model-free methods, the optimal nudging algorithm consists of a series of cumulative tasks, for fixed gains that are only updated once the current task is considered solved.

This delaying of the gain updates is feasible because, once the nudged value for a fixed gain is known, it is possible and relatively simple to ensure in each update that the maximum possible new gain uncertainty will be minimized. We have introduced the straightforward transformation of the policies into the $w-l$ space and, through geometric analysis therein, shown how to select the new value of the gain, to fix for the next iteration, such that this minmax result is guaranteed.

The disentangling of value and gain updates has the further advantage of allowing the use of any cumulative-reward reinforcement learning method, either exploiting its robustness and speed of approximation or inheriting its theoretical convergence properties and guarantees. Regarding the complexity of optimal nudging itself, we have shown that the number of calls to the learning algorithm are at most logarithmic on the desired precision and the $D$ parameter, which is an upper bound on the gain.

Also, we have proved an additional condition for early termination, when between optimal nudging iterations the same policy optimizes nudged value, and the optimal value for a reference state switches sign. This condition is unique to optimal nudging, since in any other method that continuously updates the gain, many sign changes can be expected for the reference state throughout, and none of them can conclusively guarantee termination. Moreover, our experiments with the set of random tasks from \cite{bertsekas1998new} show that this condition can be very effective in reducing the number of samples required to learn in practical cases.

Additionally, compared with traditional algorithms, while maintaining a competitive performance and sometimes outperforming traditional methods for tasks of increasing complexity, optimal nudging has the advantage of requiring at least one parameter less, the learning rate of the gain updates, as well as between one and three updates less per sample. This last improvement can represent a significant reduction in computation time, in those cases in which samples are already stored or can be observed quickly (compared to the time required to perform the updates).

Finally, a number of lines of future work are open for the study or improvement of the optimal nudging algorithm. First, Figure \ref{fig:optnudr} suggests that each call to the ``balck-box'' reinforcement learning method should have its own termination condition, which could probably be set adaptively to depend on the nudged-optimal value of the recurrent state, since in the early learning iterations the nudged-optimal policies likely require far less precision to terminate without affecting performance.

Likewise, as mentioned above, transfer learning, in this case keeping the state values after updating the gain, can lead to faster termination of the reinforcement learning algorithm, specially towards the end, when the differences between nudged-optimal policies tend to reduce. Although our observation in the Bertsekas testbed is that transfer doesn't have the same effect as the zero-crossing stopping condition, we would suggest to study how it alters learning speed, either on different tasks and for different algorithms or from a theoretical perspective.

On a different topic, it would be constructive to explore to what extent the $w-l$ transformation could be directly applied to solve average-reward and semi-Markov tasks with continuous state/action spaces. This is a kind of problem that has received, to our knowledge, little attention in the literature, and some preliminary experiments suggest that the extension of our results to that arena can be relatively straightforward.

%Consider for illustration a simple task with two states
The final avenue for future work suggested is the extension of the complexity results of Section 5 to the case in which the black-box algorithm invoked inside the optimal nudging loop is PAC-MDP. Our preliminary analysis indicates that the number of calls to such an algorithm would be dependent on the term $\frac{1}{\underline{\alpha}}$, where $\underline{\alpha}$ is the smallest $\alpha$ such that $\frac{\rho_i}{2}=(1-\alpha)B_1 + \alpha C_1$ is a valid optimal nudging update.

Some sampling (five million randomly-generated valid enclosing triangles, for $D=1$) suggests that not only is $\underline{\alpha}$ positive, but it is also not very small, equal to approximately 0.11. However, since there is no analytic expression for $\rho_i$, and thus neither for $\alpha$, proving that this is indeed the case is by no means trivial, and the question of the exact type of dependence of the number of calls on the inverse of $\underline{\alpha}$ remains open anyway.

\newpage

\appendix
\section*{Appendix A.}
\label{app:enctr}

In this Appendix we prove the remainder of the ``reduction of enclosing triangles'' Lemma \ref{lem:encltr} from Section~\ref{sec:enctrslrunc}:

\noindent{\bf Lemma} {\it Let the points $A=(w_A,l_A)$, $B=(w_B,l_B)$, and $C=(w_C,l_C)$ define an enclosing triangle. Setting $w_B-l_B<\rho_i\leq w_C-l_C$ and solving the resulting task with rewards $(r-\rho_i\,k)$ to find $v^*_i$ results in a strictly smaller, possibly degenerate enclosing triangle.}

Our in-line discussion and Figure \ref{fig:encltr} show how to find $A'$, $B'$, and $C'$ depending on the sign and optimal value of the recurrent state for the fixed gain $\rho_i$. It remains to show that the resulting triangle $A'B'C'$ is strictly smaller than $ABC$ as well as \emph{enclosing}, that is, that the conditions of Definition \ref{def:encltr} hold. Those conditions are:

\begin{enumerate}
	\item $w_B\geq w_A$; $l_B\geq l_A$.
	\item $\frac{l_B-l_A}{w_B-w_A}=1$.
	\item $w_A\geq l_A$; $w_B\geq l_B$; $w_C\geq l_C$.
	\item $P=\frac{w_B-l_B}{2}=\frac{w_A-l_A}{2}\leq\frac{w_C-l_C}{2}=Q$.
	\item $0\leq\frac{l_C-l_A}{w_C-w_A}\leq1$
	%Equivalently: $w_C\geq w_A$; $l_C\geq l_A$.
	\item $\left| \frac{l_C-l_B}{w_C-w_B} \right| \geq 1$
	%Equivalently: $w_C-l_C\geq w_B-l_B$; $w_C+l_C\leq w_B+l_B$
\end{enumerate}

For brevity, we will only consider one case, reproduced in Figure \ref{fig:encltrd} below. The proof procedure for the other two non-degenerate cases follows the same steps and adds no further insight. (In light of Remark \ref{rm:degenc}, the proof of the degenerate cases is trivial and shall be omitted).

\begin{figure}[!h]
	\begin{center}
		\includegraphics[trim=5cm 5cm 5cm 0, clip=true]{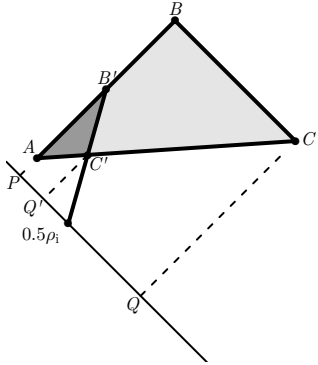}
	\end{center}
	\caption{(Detail from Figure \ref{fig:encltr}) Outcome after solving a nudged task within the bounds of an arbitrary enclosing triangle.}
	\label{fig:encltrd}
\end{figure}

\noindent{\bf Proof} 
For this proof we consider the case in which, after setting the gain to $\rho_i$ and solving to find the nudged-optimal policy $\pi_i^*$ and its value $\mathcal{v}_i^*$, the line with slope $\frac{D-\mathcal{v}_i^*}{D+\mathcal{v}_i^*}$ that crosses the point $\left(\frac{\rho_i}{2},-\frac{\rho_i}{2}\right)$ also intersects the segments $\overline{AC}$ (at the point $C'$) and $\overline{AB}$ (at $B'$). We want to show that the triangle with vertices $A'=A$, $B'$ and $C'$ is enclosing and strictly smaller than $ABC$.

In order to study the whether the conditions listed above hold for $A'B'C'$, we will use a simple affine transformation (rotation and scaling) from the $w-l$ space to an auxiliary space $x-y$ in which all of $A$, $B$, $P$, and $B'$ have the same horizontal-component ($x$) value, the original $P$; and likewise $C$ and $Q$ have the same $x$ value, equal to that of $Q$ in $w-l$.

This transformation simplifies the analysis somewhat, particularly the expressions for the coordinates of $C'$, but it naturally has no effect on the validity of the proof, and the only difference with a proof without the mapping to $x-y$ is simply the complexity of the formulas but not the basic structure or the sequence of steps.

\begin{figure}[!h]
	\begin{center}
		\includegraphics{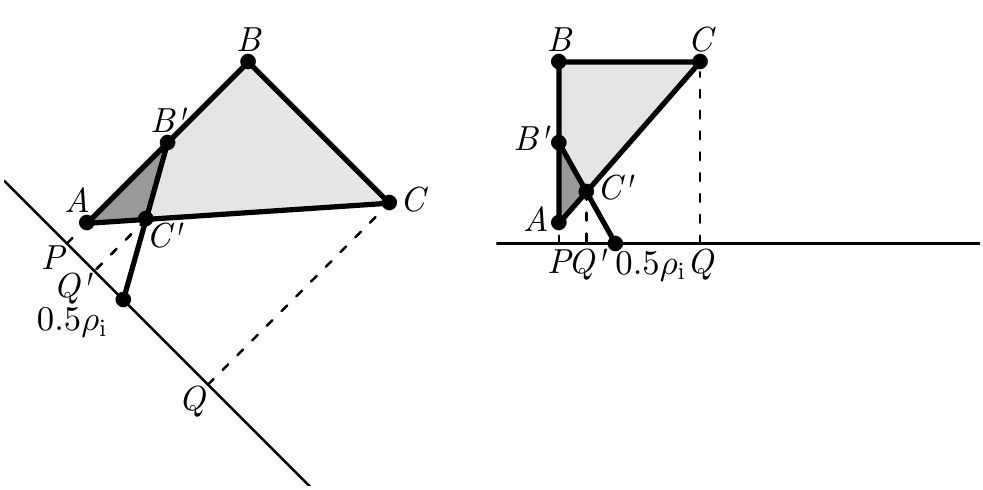}
	\end{center}
	\caption{Affine transformation of the $w-l$ space. Left, original system in $w-l$. Right, rotated and scaled system in the $x-y$ space. $P$ and $Q$ have the same value in the horizontal axis of both planes.}
	\label{fig:encltrrot}
\end{figure}

Figure \ref{fig:encltrrot} shows the geometry of the affine mapping from $w-l$ to $x-y$. The transformation matrix and equations are

\begin{align*}
\left[\begin{array}{r}
x \\ 
y
\end{array} \right] &=
\frac{1}{2}
\left[\begin{array}{rr}
1 & -1 \\ 
1 & 1
\end{array} \right]
\left[\begin{array}{c}
w\\ 
l
\end{array} \right] =
\left[\begin{array}{c}
\frac{w-l}{2} \\ 
\frac{w+l}{2}
\end{array} \right] \;\;,\\
\intertext{while the inverse transformation is}
\left[\begin{array}{r}
w \\ 
l
\end{array} \right] &=
\left[\begin{array}{rr}
1 & 1 \\ 
-1 & 1
\end{array} \right]
\left[\begin{array}{c}
x\\ 
y
\end{array} \right] =
\left[\begin{array}{c}
y+x \\ 
y-x
\end{array} \right] \;\;.
\end{align*}

In both the $w-l$ and $x-y$ planes, we have $P=\frac{w_A-l_A}{2}=\frac{w_B-l_B}{2}$ and $Q=\frac{w_C-l_C}{2}$. The mapped vertices of the original triangle have coordinates $A_{xy}=\left(x_A,y_A\right)=\left(P,\frac{w_A+l_A}{2}\right)$, $B_{xy}=\left(P,\frac{w_B+l_B}{2}\right)$, $C_{xy}=\left(Q,\frac{w_C+l_C}{2}\right)$.

Since the gain is bounded by the projection of the enclosing triangle vertices, we can express, in either plane, $\frac{\rho_i}{2} = (1-\alpha)P + \alpha Q$, for some $0<\alpha\leq1$. We omit the case with $\alpha=0$, since under it and a combination of several other conditions our complexity analysis from Section 5 may not hold.

Regarding the coordinates of the new triangle, as mentioned $A'=A$, so $A_{xy}'=A_{xy}$ and $B'$ is a point on the $\overline{AB}$ segment, so we can express its mapping as $B_{xy}'=(1-\beta)A_{xy} + \beta B_{xy}$, for $0\leq\beta\leq1$. $C'$, on the other hand, must be found analytically, as the intersection point of the $\overline{A_{xy}C_{xy}}$ and $\overline{\frac{\rho}{2}B_{xy}'}$ segments. Observe that the slope of the $\overline{A_{xy}C_{xy}}$ segment is positive. Indeed, since condition 5 holds for $ABC$, we have
\begin{align*}
\frac{l_C-l_A}{w_C-w_A} &\geq 0\;\;,\\
y_C-x_C-y_A+x_A &\geq 0\;\;,\\
m_\gamma=\frac{y_C-y_A}{x_C-x_A} &\geq 1\;\;.
\end{align*}

Solving the system of the two equations of the lines that contain the two segments, the $x-y$ coordinates of $C_{xy}'$ are readily found to be

\begin{align}
x_{C'}&=\frac{\alpha(Q-P)(m_\gamma P-y_A) + (y_A+\beta(y_B-y_A))(P+\alpha(Q-P))} {\alpha m_\gamma(Q-P) + y_A+\beta(y_B-y_A)}\;\;,\label{eq:xcprim}\\
\intertext{and}
y_{C'}&=\frac{(y_A+\beta(y_B-y_A))(\alpha m_\gamma(Q-P)+y_A)} {\alpha m_\gamma(Q-P) + y_A+\beta(y_B-y_A)}\;\;.\label{eq:ycprim}
\end{align}

We are ready to begin evaluating for $A'B'C'$ the conditions in the definition of an enclosing triangle, one by one. From the premise of the Lemma, since ABC \emph{is} an enclosing triangle, all conditions hold for it.

\begin{enumerate}
\item
\begin{align*}
w_{B'}&\geq w_{A'}\;\;,\\
%x_{B'}+y_{B'}&\geq x_{A'}+y_{A'}\;\;,\\
x_{B'}+y_{B'}&\geq x_{A}+y_{A}\;\;,\\
P + (1-\beta)y_{A} + \beta y_{B} &\geq P + y_{A}\;\;,\\
\beta(y_B-y_A)&\geq0\;\;,\\
\intertext{since $\beta\geq0$,}
y_B-y_A&\geq0\;\;,\\
\frac{w_B+l_B}{2} - \frac{w_A+l_A}{2} &\geq 0\;\;,\\
(w_B-w_A) + (l_B-l_A) &\geq 0\;\;,
\intertext{which holds because both terms are positive by this same condition for $ABC$. Conversely,}
l_{B'} &\geq l_{A'}\;\;,\\
y_{B'}-x_{B'} &\geq y_A - x_A\;\;,\\
(1-\beta)y_A + \beta y_B - P &\geq y_A - P\;\;,\\
\beta(y_B-y_A) &\geq 0\;\;,\\
\intertext{and as above.}
\end{align*}

\item
\begin{align*}
\frac{l_{B'}-l_{A'}}{w_{B'}-w_{A'}}&=1\;\;,\\
%l_{B'}-l_{A'} &= w_{B'}-w_{A'}\;\;,\\
y_{B'}-x_{B'} - y_A+x_A &= x_{B'}+y_{B'} - x_A-y_A\;\;,\\
2x_{B'}&=2x_A\;\;,\\
P&=P\;\;,\\
\intertext{which is true.}
\end{align*}

\item
\begin{align*}
w_{A'} &\geq l_{A'}\;\;,\\
x_{A}+y_{A} &\geq y_{A}-x_{A}\;\;,\\
x_A = P &\geq 0\;\;,\\
\frac{w_A-l_A}{2} &\geq 0\;\;,\\
w_A &\geq l_A\;\;,
\intertext{which holds. The proof is identical for $B'$, since $x_B'$ also equals $P$. For $C'$,}
w_{C'} &\geq l_{C'}\;\;,\\
x_{C'} &\geq 0\;\;,\\
\intertext{since all terms in the denominator in Equation (\ref{eq:xcprim}) are positive,}
\alpha(Q-P)(m_\gamma P-y_A) + (y_A+\beta(y_B-y_A))(P+\alpha(Q-P)) &\geq 0\;\;,\\
\alpha m_\gamma P(Q-P) + y_AP + \beta(y_B-y_A)(P+\alpha(Q-P)) &\geq 0\;\;,\\
\intertext{which is easy to verify to hold since all factors in all three summands are non-negative.}
\end{align*}

\item
\begin{align*}
P &\leq \frac{w_{C'}-l_{C'}}{2}\;\;,\\
P &\leq x_{C'}\;\;.\\
\intertext{Since, again, the denominator in Equation (\ref{eq:xcprim}) is positive,}
\alpha(Q-P)(m_\gamma P-y_A) + (y_A+\beta(y_B-y_A))(P+\alpha(Q-P)) &\geq \alpha m_\gamma P(Q-P) + P(y_A+\beta(y_B-y_A))\\
\alpha\beta(Q-P)(y_B-y_A) &\geq0\;\;,
\intertext{which holds because all factors are non-negative.}
\end{align*}

\item
\begin{align*}
0\leq\frac{l_{C'}-l_{A'}}{w_{C'}-w_{A'}}\leq1\;\;,
\end{align*}
This condition holds because the $\overline{AC}$ segment contains $\overline{A'C'}$, so both have the same slope.

\item 
\begin{align*}
\left|\frac{l_{B'}-l_{C'}}{w_{B'}-w_{C'}}\right| = \left|\frac{l_{B'}+\rho}{w_{B'}-\rho}\right| &\geq 1\;\;.
\intertext{The numerator is positive. If the denominator is positive as well,}
\frac{l_{B'}+\rho}{w_{B'}-\rho} &\geq 1\;\;,\\
l_{B'}+\rho &\geq {w_{B'}-\rho}\;\;,\\
l_{B'}-w_{B'} + 2\rho &\geq 0\;\;,\\
\rho&\geq\frac{w_{B'}-l_{B'}}{2}\;\;,\\
P+\alpha(Q-P)&\geq P\;\;,\\
\alpha(Q-P)&\geq 0\;\;,
\intertext{which holds because both factors are nonnegative. If the denominator is negative,}
\frac{l_{B'}+\rho}{\rho-w_{B'}} &\geq 1\;\;,\\
l_{B'}+\rho &\geq \rho-w_{B'}\;\;\,\\
l_{B'}+w_{B'} &\geq 0\;\;,
\end{align*}
which was proved above. If the denominator is zero, the slope of the segment is infinity (of either sign), for which the condition trivially holds.
\end{enumerate}

It remains showing that $Q'<Q$, and thus the new enclosing triangle is strictly smaller than the initial one. By direct derivation,

\begin{align*}
x_{C'} &< x_C\;\;,\\
\frac{\alpha(Q-P)(m_\gamma P-y_A) + (y_A+\beta(y_B-y_A))(P+\alpha(Q-P))} {\alpha m_\gamma(Q-P) + y_A+\beta(y_B-y_A)} &< Q\;\;,\\
\alpha m_\gamma(Q-P)^2 + y_A(Q-P) + \beta(Q-P)(y_B-y_A) - \alpha\beta(Q-P)(y_B-y_A)&>0\;\;.\\
\intertext{Since $Q>P$,}
\alpha m_\gamma(Q-P) + y_A + \beta(1-\alpha)(y_B-y_A)&>0\;\;.
\end{align*}
Under our assumption that the $ABC$ triangle is not degenerate, all factors in all summands are nonnegative. Furthermore, since we rule out the possibility that $\alpha=0$, the first term is indeed positive and, thus, the inequality holds.

\hfill\ensuremath{\blacksquare}

\newpage

\appendix
\section*{Appendix B.}
\label{app:leftunc}

In this Appendix we prove the left uncertainty Lemma \ref{lem:leftunc} from Section~\ref{sec:minmaxunc}:

\noindent{\bf Lemma}
{\it For any enclosing triangle with vertices $A$, $B$, $C$,
\begin{enumerate}
	\item For any $A_1=B_1\leq\frac{\rho_x}{2}\leq C_1$, the maximum possible left uncertainty, $u_l^*$ occurs when the line with slope $\frac{D-\mathcal{v}^*_x}{D+\mathcal{v}^*_x}$ from $\left(\frac{\rho_x}{2},\,-\frac{\rho_x}{2}\right)$ intercepts $\overline{AB}$ and $\overline{BC}$ at point $B$.
	\item When this is the case, the maximum left uncertainty is
	\begin{align*}
	u_l^*=2\frac{(\frac{\rho_x}{2}-B_1)}{(\frac{\rho_x}{2}-B_\gamma)}\,(C_\gamma - B_\gamma)\;\;.
	\end{align*}
	\item This expression is a monotonically increasing function of $\rho_x$.
	\item The minimum value of this function is zero, and it is attained when $\frac{\rho_x}{2}=B_1$.
\end{enumerate}
}

\noindent{\bf Proof} 
Figure \ref{fig:lunclem} illustrates the geometry of left uncertainty.

\begin{figure}[!h]
	\begin{center}
		\includegraphics{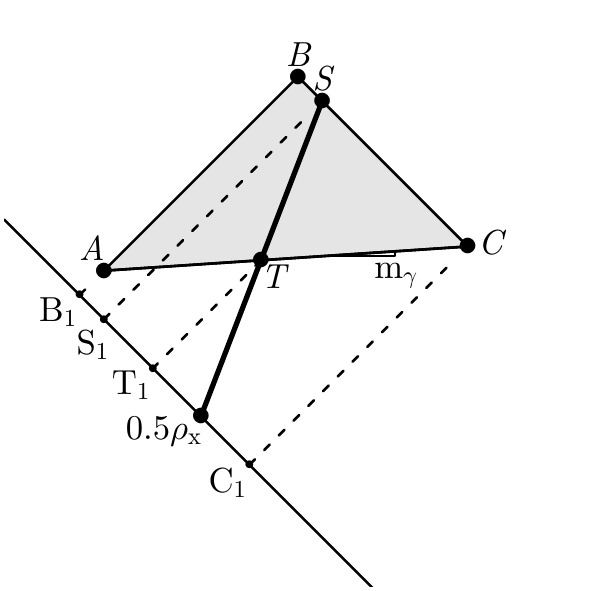}
	\end{center}
	\caption{Geometry and notation for the proof of the left uncertainty Lemma. If gain is fixed to some $B_1\leq\frac{\rho_x}{2}\leq C_1$, depending on the optimal nudged value the point $S$ may either lie on the $\overline{BC}$ (pictured) or $\overline{AB}$ segments, the point $T$ lies on the $\overline{AC}$ segment (which has slope $m_\gamma$), and the left uncertainty is $u_l=2(T_1-S_1)$.}
	\label{fig:lunclem}
\end{figure}

Setting the gain to $\rho_x$, we are interested in computing the left uncertainty as a function of the coordinates of $S$. The $\overline{ST}$ segment belongs to a line with expression

\newcommand{\rxm}{\frac{\rho_x}{2}}
\newcommand{\mg}{m_\gamma}
\newcommand{\Cg}{C_\gamma}
\begin{align}
%\overline{ST}\in\quad
l + \rxm &= \frac{l_S+\rxm}{w_S-\rxm}\left(w-\rxm\right)\;\;,\nonumber\\
l&=\frac{\left(l_S+\rxm\right)w-(w_S+l_S)\rxm}{w_S-\rxm}\;\;.\label{eq:ST}
\end{align}
Conversely, for points in $\overline{AC}$, including $T$,
\begin{align}
l&=\mg w - (\mg w_C-l_C)\;\;,\nonumber\\
l&=\mg w - (1+\mg)\Cg\;\;.\label{eq:AC}
\end{align}
Thus, for the 1-projection of point $T$,
\begin{align}
T_1 = \frac{w_T-l_T}{2} &= \frac{w_T-(\mg w_T - (1+\mg)\Cg)}{2}\;\;,\nonumber\\
&= \frac{(1-\mg)w_T + (1+\mg)\Cg}{2}\;\;.\label{eq:T1}
\end{align}
Since point $T$ lies at the intersection of segments $\overline{AC}$ and $\overline{ST}$, its $w$-coordinate can be found by making Equations (\ref{eq:AC}) and (\ref{eq:ST}) equal and solving:

\newcommand{\Sg}{S_\gamma}
\begin{align}
\mg w_T - (1+\mg)\Cg &= \frac{\left(l_S+\rxm\right)w_T-(w_S+l_S)\rxm}{w_S-\rxm}\;\;,\nonumber\\
w_T\left(\left(w_S-\rxm\right)\mg-\left(l_S+\rxm\right)\right) &= (1+\mg)\left(w_S-\rxm\right)\Cg - (w_S+l_S)\rxm\;\;,\nonumber\\
w_T &= \frac{(1+\mg)\left(w_S-\rxm\right)\Cg - (w_S+l_S)\rxm} {(1+\mg)\left(\Sg-\rxm\right)}\;\;.\label{eq:wT}
\end{align}
Substituting Equation (\ref{eq:wT}) in (\ref{eq:T1}),
\begin{align*}
T_1 = \frac{(1-\mg^2)\left(w_S-\rxm\right)\Cg - (1-\mg)(w_S+l_S)\rxm + (1+\mg)^2(\Sg-\rxm)\Cg}{2(1+\mg)\left(\Sg-\rxm\right)}\;\;,
\end{align*}
and, finally, after solving, the left uncertainty is
\begin{align}
u_l&=2(T_1-S_1) = (w_T-l_T) - (w_S-l_S)\;\;,\nonumber\\
u_l&=2\frac{\left(\rxm-S_1\right)(\Cg-\Sg)}{\left(\rxm-\Sg\right)}\label{eq:ludeS}
\end{align}

\begin{figure}[!h]
	\begin{center}
		\includegraphics{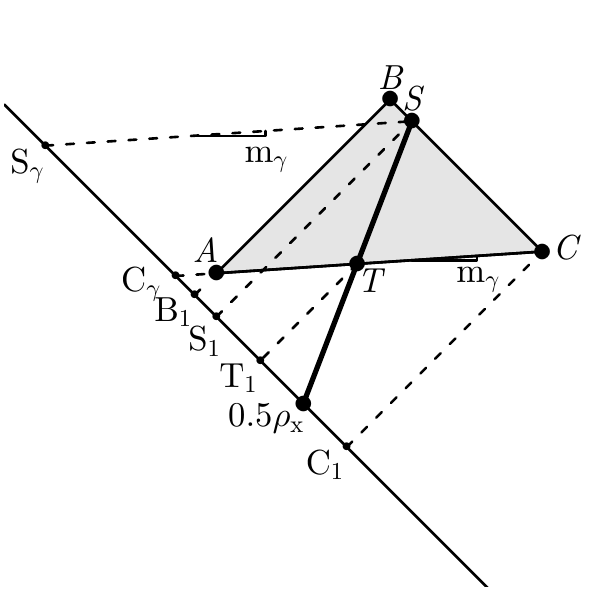}
	\end{center}
	\caption{Expansion of Figure \ref{fig:lunclem} to present the geometry of all relevant terms in the expression of left uncertainty for an arbitrary $S$, Equation (\ref{eq:ludeS}).}
	\label{fig:luncexp}
\end{figure}

Figure \ref{fig:luncexp} expands Figure \ref{fig:lunclem} to include the $m_\gamma$-projections of points $S$ and $C$ to the $w=-l$ line. Observe that the projections and gain are ordered in the sequence
\begin{align}
\Sg < \Cg \leq B_1 \leq S_1 \leq T_1 \leq \rxm \leq C_1\;\;.\label{eq:seqlunc}
\end{align}

The ordering of the terms between $B_1$ and $C_1$ results from setting $\rho_x$ and having left uncertainty. $B_1=S_1$ occurs when the point $S$ lies on the $\overline{AB}$ segment (instead of on $\overline{BC}$ as pictured). $S_1=T_1$ is only possible if the gain is set to $\rho_x=2B_1$, in which case all four points are concurrent.

To see that $\Cg\leq B_1$, observe that
\begin{align*}
A_\gamma = \Cg &\leq B_1 =A_1\;\;,\\
\frac{\mg w_A-l_A}{\mg+1} &\leq \frac{w_A-l_A}{2}\;\;.
\intertext{Since $\mg\geq0$, we can cross-multiply the denominatiors to obtain}
\mg(w_A+l_A)&\leq w_A+l_A\;\;.
\end{align*}
Given $w_A+l_A$ is positive (i.e., $A$ belongs to the $w-l$ space), this requires $\mg\leq1$, which holds by definition of enclosing triangle (see condition 5 in Definition \ref{def:encltr}).

Finally, to ensure that $\Sg\leq\Cg$, the point $B$ must not lie on the $\overline{AC}$ segment, but this condition holds trivially for a non-degenerate enclosing triangle with vertices $A$, $B$, and $C$.

We are now ready to prove all items in the Lemma. 

\paragraph{1.} To see that left uncertainty is maximum when $S=B$, there are two cases:
If $S=(w_S,l_S)\in\overline{AB}$, consider another point, slightly \emph{above} on that segment, with coordinates $S'=(w_S+\varepsilon,l_S+\varepsilon)$, for some valid $\varepsilon>0$. Obviously, $A_1=B_1=S_1=S'_1\leq\rxm$. Leaving all other parameters equal, for the case with $S'$ to have larger left uncertainty than $S$, from Equation \ref{eq:ludeS},
\newcommand{\Spg}{S'_\gamma}
\begin{align*}
2\frac{\left(\rxm-S_1\right)(\Cg-\Sg)}{\left(\rxm-\Sg\right)} < 2\frac{\left(\rxm-S'_1\right)(\Cg-\Spg)}{\left(\rxm-\Spg\right)}\;\;.
\end{align*}
Since $S_1=S'_1$ and, from Equation (\ref{eq:seqlunc}), both are smaller than $\rxm$, the first term in the numerator can be cancelled from both sides. Additionally, also from  Equation (\ref{eq:seqlunc}), since $\rxm$ is larger than $\Sg$ (and $\Spg$), we can cross-multiply the denominators, to obtain
\begin{align*}
\rxm(\Sg-\Spg) &> \Cg(\Sg-\Spg)\;\;.
\intertext{The common term is equivalent (by direct substitution) to}
\Sg-\Spg &= \varepsilon\frac{1-\mg}{1+\mg}\;\;.
\intertext{Given that $\mg\leq1$ and $\varepsilon>0$ this term is nonnegative, so the only requirement is for}
\rxm &> \Cg\;\;,
\end{align*}
which holds after Equation \ref{eq:seqlunc}. Thus, the position of $S$ on $\overline{AB}$ for maximum left uncertainty is the furthest possible \emph{up}, that is, on point $B$. 

\newcommand{\mb}{m_\beta}
Conversely, if $S=(w_S,l_S)\in\overline{BC}$ consider another point in the segment, closer to $B$, with coordinates $S'=(w_S+\frac{\varepsilon}{\mb},l_S+\varepsilon)$, where $|\mb|\geq1$ is the slope of the (line that contains that) segment and, once more, $\varepsilon>0$. For the case with $S'$ to have larger left uncertainty than $S$, the following must hold:
\begin{align}
2\frac{\left(\rxm-S_1\right)(\Cg-\Sg)}{\left(\rxm-\Sg\right)} &< 2\frac{\left(\rxm-S'_1\right)(\Cg-\Spg)}{\left(\rxm-\Spg\right)}\;\;.\label{eq:lula}
\intertext{In this case, although the first terms in the numerator of both sides aren't equal, we still can omit them, since it is easy to show that}
\left(\rxm-S_1\right) &< \left(\rxm-S'_1\right)\;\;,\nonumber
\intertext{which reduces to}
\rho_x-(w_S-l_S) &< \rho_x-(w_S'-l_S')\;\;,\nonumber\\
\varepsilon-\frac{\varepsilon}{\mb}&>0\;\;,\label{eq:emb}
\end{align}
which is true: if $\mb\leq-1$ both summands become positive and if $\mb>1$ the second term is strictly smaller than the first, so their difference is positive. ($\mb=1$ corresponds to a degenerate case that escapes the scope of this Lemma). 

Thus, Equation (\ref{eq:lula}), as in the case above, simplifies to
\begin{align*}
\rxm(\Sg-\Spg) &> \Cg(\Sg-\Spg)\;\;,
\intertext{and in this case $\Sg-\Spg > 0$ reduces itself to the form in Equation (\ref{eq:emb}), which was proved above. Hence, once more we require that $\rxm>\Cg$, which was proved already. Thus, for points in the $\overline{BC}$ segment, left uncertainty is maximum when $S=B$.}
\end{align*}

\paragraph{2.} From the preceding discussion, substituting $S$ for $B$ in Equation (\ref{eq:ludeS}),
\begin{align*}
u_l^*=2\frac{(\frac{\rho_x}{2}-B_1)}{(\frac{\rho_x}{2}-B_\gamma)}\,(C_\gamma - B_\gamma)\;\;.
\end{align*}.

\paragraph{3.} To prove the increasing monotonicity of this expression with respect to $\rho_x$, we fix all other parameters and study the effect of moving the gain from $\rho_x$ to $\rho_x+\varepsilon$, for some $\varepsilon>0$. If the function is monotonically decreasing, the following must hold,
\newcommand{\Bg}{B_\gamma}
\newcommand{\rxem}{\frac{\rho_x+\varepsilon}{2}}
\begin{align*}
2\frac{\left(\rxm-B_1\right)(\Cg-\Bg)}{\left(\rxm-\Bg\right)} &< 2\frac{\left(\rxem-B_1\right)(\Cg-\Bg)}{\left(\rxem-\Bg\right)}\;\;.
\end{align*}

After cancelling the equal terms on both sides and cross-multiplying (the positivity of the denominators was already shown above), the expression immediately reduces to
\begin{align*}
\frac{\varepsilon}{2}\Bg < \frac{\varepsilon}{2}B_1\;\;,
\end{align*}
which trivially holds by the positivity of $\varepsilon$ and Equation (\ref{eq:seqlunc}).

\paragraph{4.} Since $u_l^*$ increases monotonically with $\rho_x$, its minimum value must occur for the smallest $\rho_x$, namely $\rxm=B_1$, for which the first factor in the numerator, and hence the expression, becomes zero.

\hfill\ensuremath{\blacksquare}

\newpage
\appendix
\section*{Appendix C.}
\label{app:righttunc}

In this Appendix we prove the right uncertainty Lemma \ref{lem:rightunc} from Section~\ref{sec:minmaxunc}:

\noindent{\bf Lemma}
{\it For a given enclosing triangle with vertices $A$, $B$, $C$,
	\begin{enumerate}
		\item for any $A_1=B_1\leq\frac{\rho_x}{2}\leq C_1$, the maximum possible right uncertainty, $u_r^*$, is
		\begin{align}
		u_r^*&=\frac{2s\sqrt{abcd(\frac{\rho_x}{2}-C_\gamma)(\frac{\rho_x}{2}-C_\beta)}+ad(\frac{\rho_x}{2}-C_\gamma) +bc(\frac{\rho_x}{2}-C_\beta)}{c}\;\;,\\
		\intertext{with}
		s&=\sign(m_\beta-m_\gamma)\;\;,\nonumber\\
		a&=(1-m_\beta)\;\;,\nonumber\\
		b&=(1+m_\beta)\;\;,\nonumber\\
		c&=(1-m_\gamma)\;\;,\nonumber\\
		d&=(1+m_\gamma)\;\;,\nonumber\\
		e&=(d-b)=(m_\gamma-m_\beta)\;\;.\nonumber
		\end{align}
		\item This is a monotonically decreasing function of $\rho_x$,
		\item whose minimum value is zero, attained when $\frac{\rho_x}{2}=C_1$.
	\end{enumerate}
}

\noindent{\bf Proof} 
Figure \ref{fig:runclem} illustrates the geometry of right uncertainty.

\begin{figure}[!h]
	\begin{center}
		\includegraphics{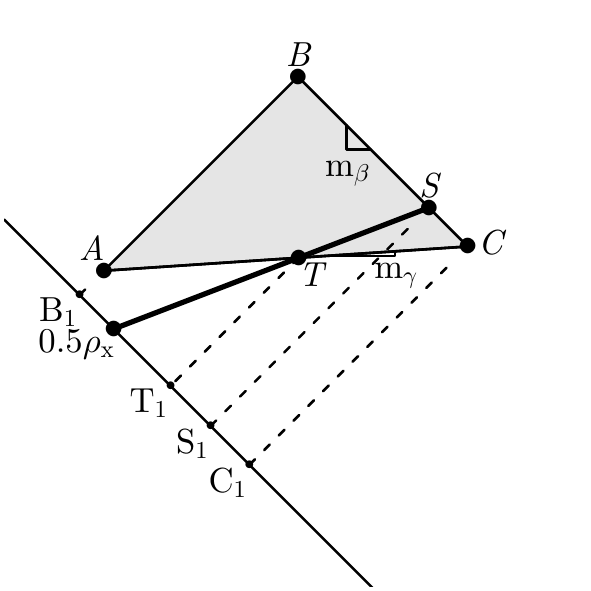}
	\end{center}
	\caption{Geometry and notation for the proof of the rightt uncertainty Lemma. If gain is fixed to some $B_1\leq\frac{\rho_x}{2}\leq C_1$, the point $S$ lies on the $\overline{BC}$ segment (which has slope $m_\beta$), $T$ lies on the $\overline{AC}$ segment (which has slope $m_\gamma$), and the rightt uncertainty is $u_r=2(S_1-T_1)$.}
	\label{fig:runclem}
\end{figure}

In this case, we have the point $S\in\overline{BC}$ and $T=\overline{\rxm S}\bigcap\overline{AC}$. The resulting new right uncertainty is $u_r=2(S_1-T_1)$. Following the same derivation as Equation (\ref{eq:ludeS}) in Appendix B,
\begin{align}
u_r=2\frac{\left(S_1-\rxm\right)(\Cg-\Sg)}{(\rxm-\Sg)}\;\;.\label{eq:rudeS}
\end{align}
From this expression it is clear that in the degenerate cases when $S_1=\rxm$ (reduction of the enclosing triangle to a line segment with slope unity) and $S=C$ (reduction to the point $C$), right uncertainty is zero.

As was the case in for left uncertainty, Figure \ref{fig:runclem} suggests an ordering of projections to the $w=-l$ line that, for non-degenerate initial and reduced enclosing triangles, is
\newcommand{\Cb}{C_\beta}
\begin{align}
\Sg < \Cg < B_1 < \rxm < T_1 < S_1 < C_1 < \Cb\;\;.\label{eq:ordrunc}
\end{align}

As a direct consequence of this ordering, not only is $u_r=0$ at the two extreme positions of $S$, but it is also strictly positive for intermediate points. For an arbitrary, fixed $\rho_x$, then, the location of $S$ of interest is that of maximum right uncertainty. Since point $S$ lies on a segment ($\overline{BC}$) of a line whose expression is known, we can solve the problem for just one of its components. To find the maximizer, we take the derivative (hereon denoted with $\cdot'$) of $u_r$ with respect to $w_S$,
\begin{align*}
u_r'&=2\frac{\left[\left(s_1-\rxm\right)(\Cg-\Sg)\right]'\left(\rxm-\Sg\right) - \left(s_1-\rxm\right)(\Cg-\Sg)\left(\rxm-\Sg\right)'} {\left(\rxm-\Sg\right)^2}\;\;,\\
u_r'&=2\frac{S_1'(\Cg-\Sg)\left(\rxm-\Sg\right) + \Sg'\left(S_1-\rxm\right)\left(\Cg-\rxm\right)}{\left(\rxm-\Sg\right)^2}\;\;.
\end{align*}
At the optimizer, the numerator of this fraction must equal zero.

Since $S$ lies on the line with slope $\mb$ that joins $B$ and $C$,
\begin{align*}
l_S&=\mb w_S-\mb w_C+l_C\;\;\,\\
l_S&=\mb w_S-(1+\mb)\Cb\;\;,\\
l_S'&=\mb\;\;.
\end{align*}
Additionally, from the definition of the projections of $S$,
\begin{align*}
S_1&=\frac{w_S-l_S}{2}\;\;,\\
S_1'&=\frac{1-\mb}{2}
\intertext{and}
\Sg&=\frac{\mg w_S-l_S}{\mg+1}\;\;,\\
\Sg'&=\frac{\mg-\mb}{\mg+1}\;\;.
\end{align*}

If we define
\begin{align*}
a&=1-\mb\;\;,\\
b&=1+\mb\;\;,\\
c&=1-\mg\;\;,\\
d&=1+\mg\;\;,
\intertext{and}
e&=\mg-\mb=d-b\;\;, %=\frac{1}{2}(ad-bc)
\end{align*}
then, the expression for $u_r'=0$ at the optimizer,
\newcommand{\wss}{w_s^*}
\begin{align*}
S_1'(\Cg-\Sg^*)\left(\rxm-\Sg^*\right) + \Sg'\left(S_1^*-\rxm\right)\left(\Cg-\rxm\right) &= 0\;\;,\\
\intertext{becomes}
a\left(d\rxm-e\wss-b\Cb\right)(d\Cg-e\wss-b\Cb) + de\left(a\wss+b\Cb-\rho_x\right)\left(\Cg-\rxm\right) &= 0\;\;.
\end{align*}
This is a quadratic expression on $\wss$, of the form
\begin{align*}
\mathcal{A}(\wss)^2 + \mathcal{B}\wss +\mathcal{C} = 0\;\;,
\end{align*}
with
\begin{align*}
\mathcal{A}&=ae^2\;\;,\\
\mathcal{B}&=2ae\left(b\Cb-d\rxm\right)\;\;,\\
\mathcal{C}&=(ad^2-2de)\rxm\Cg - (abd+bde)\rxm\Cb+(bde-abd)\Cb\Cg+ ab^2\Cb^2 + 2de\left(\rxm\right)^2\;\;.
\end{align*}
Applying the quadratic formula and solving,
\begin{align}
\wss &= \frac{2ae\left(d\rxm-b\Cb\right)\pm \sqrt{4abcde^2\left(\rxm-\Cg\right)\left(\rxm-\Cb\right)}}{2ae^2}\;\;,\nonumber\\
\wss &=\frac{d\rxm-b\Cb}{e} + \frac{s}{ae} \sqrt{abcd\left(\rxm-\Cg\right)\left(\rxm-\Cb\right)}\;\;,\label{eq:wss}
\end{align}
where $s$ is the appropriate sign for $\wss$ to be a maximizer. Notice that, for an enclosing triangle, the expression inside the radical is always non-negative. Since $0\leq\mg\leq1$, $c$ is positive and $d$ non-negative. Also, as $|\mb|\leq1$, $a$ and $b$ are of opposite signs and, from the ordering in Equation (\ref{eq:ordrunc}), $\left(\rxm-\Cg\right)$ is positive and $\left(\rxm-\Cb\right)$ negative, so there will be, in any case, two negative factors. In order to determine $s$, we take the second derivative of $u_r$ and evaluate it in the point $\wss$, which simplifies to
\begin{align*}
u_r''(\wss) &= \frac{2a^2e}{d^2s\sqrt{abcd\left(\rxm-\Cg\right)\left(\rxm-\Cb\right)}}\;\;.
\end{align*}
For the second order necessary condition for a maximum to hold,
\begin{align*}
u_r''(\wss) &< 0\;\;,
\intertext{which requires that}
s&=-\sign(e)=\sign(\mb-\mg)\;\;.
\end{align*}
Observe that for the opposite sign of $s$, the corresponding $w_S$ is a minimizer.

Substituting Equation (\ref{eq:wss}) in Equation (\ref{eq:rudeS}),
\begin{align*}
u_r^*=\frac{2s\sqrt{abcd\left(\rxm-\Cg\right)\left(\rxm-\Cb\right)} + ad\left(\rxm-\Cg\right) + bc\left(\rxm-\Cb\right)}{e}\;\;.
\end{align*}

To show that this is a monotonically decreasing function of $\rho$, we factor $(-s)$ and use the fact that, since it is a sign, $s^2=(-s)^2=1$, to obtain
\begin{align}
u_r^*&=\frac{ad(-s)\left(\rxm-\Cg\right) - 2\sqrt{ad(-s)\left(\rxm-\Cg\right)bc(-s)\left(\rxm-\Cb\right)} + bc(-s)\left(\rxm-\Cb\right)}{(-s)e}\;\;,\nonumber\\
u_r^*&=\frac{\left(\sqrt{-sad\left(\rxm-\Cg\right)} - \sqrt{-sbc\left(\rxm-\Cb\right)}\right)^2}{|e|}\;\;.\label{eq:rusq}
\end{align}
The expressions inside both radicals are always positive. For a non-degenerate enclosing triangle, recall that $\Cb>\rxm>\Cg$, $a$ and $b$ are positive and $|\mg|>1$. If $\mg>1$, then $s=-1$, so all factors in the first root are positive, and $c$ is negative, so there are exactly two negative factors in the second root. Conversely, if $\mg<-1$, then $s=1$ and $d$ is negative, so both roots contain exactly two negative factors.

To prove decreasing monotonicity, it is sufficient to show that the first derivative of the function is negative in the range of interest. Taking it,
\begin{align*}
\diff{u_r^*}{\rxm} &= \frac{1}{|e|} \left(\sqrt{-sad\left(\rxm-\Cg\right)} - \sqrt{-sbc\left(\rxm-\Cb\right)}\right) \left(\frac{-sad\sqrt{-sbc\left(\rxm-\Cb\right)} + sbc\sqrt{-sad\left(\rxm-\Cg\right)}} {\sqrt{abcd\left(\rxm-\Cg\right)\left(\rxm-\Cb\right)}}\right)\;\;.
\end{align*}
By a similar reasoning to the positivity of the arguments of the roots, the term in the last parenthesis can be shown to be always positive. Thus, for the derivative to be negative,
\begin{align*}
\sqrt{-sad\left(\rxm-\Cg\right)} &< \sqrt{-sbc\left(\rxm-\Cb\right)}\;\;.
\intertext{Since both terms are positive an $-se=|e|$, this is equivalent to}
\frac{ad\Cg - bc\Cb}{2e} &> \rxm\;\;.
\intertext{Substituting the definition of all the terms and solving, this reduces to}
\frac{(\mg-\mb)w_C - (\mg-\mb)l_C}{2(\mg-\mb)} &> \rxm\;\;,\\
C_1 &> \rxm\;\;,
\intertext{which, by the ordering in Equation (\ref{eq:ordrunc}), holds.}
\end{align*}

Finally, to see that maximum right uncertainty is zero in the extreme case when $\rxm=C_1$, notice that in that case the only possible position for $S$, and therefore $S^*$ is the point $C$, so Equation (\ref{eq:rudeS}) becomes
\begin{align*}
u_r^*=2\frac{\left(C_1-\rxm\right)(\Cg-\Cg)}{(\rxm-\Cg)}\;\;,
\end{align*}
which trivially equals zero.

\hfill\ensuremath{\blacksquare}

\newpage
\appendix
\section*{Appendix D.}
\label{app:conic}
In this section we include the conic intersection method described in detail by \citet{perwass2008geometric}. The Lemmas that support the derivation of the method are omitted here, but they are clearly presented in that book.

This method finds the intersection points of two non-degenerate conics represented by the symmetric matrices $A$ and $B$.

\begin{enumerate}
	\item Let $M=B^{-1}A$.
	\item Find $\lambda$, any real eigenvalue of $M$. (Since the matrix is $3\times3$, at least one real eigenvalue exists).
	\item Find the degenerate conic $C=A-\lambda B$.
	\item $C$ represents two lines. Find their intersections with either $A$ or $B$.
	\item These (up to four points) are the intersections of the two conics.
\end{enumerate}

In the case of the gain update step in the optimal nudging Algorithm \ref{alg:optnudg}, $A$ and $B$ are the homogeneous forms of the expressions for maximum left and right uncertainty, from Equations (\ref{eq:lunconic}) and (\ref{eq:runconic}).

For the optimal nudging update stage, an additional step is required in order to determine precisely \emph{which} of the four points corresponds to the intersection of the current left and right uncertainty segments of the conics. This is easily done by finding which of the intercepts corresponds to the gain of point in the $w=-l$ line inside the $\overline{A_1B_1}$ segment.

In our experience, this verification is sufficient to find the updated gain and there are no multiple intercepts inside the segment. However, if several competing candidates do appear, an additional verification step my be required, to determine which is the simultaneous unique solution of Equations (\ref{eq:leftunc}) and (\ref{eq:rightunc}).

\newpage
\bibliography{bblgrph}

\end{document}